\numberwithin{equation}{section}
\newtheorem{theorem}{Theorem}[section]
\newtheorem{proposition}[theorem]{Proposition}
\newtheorem{lemma}[theorem]{Lemma}
\newtheorem{definition}[theorem]{Definition}
\theoremstyle{remark}
\newtheorem{remark}[theorem]{Remark}
\newcommand{\e}{\varepsilon}
\newcommand{\R}{\mathbb{R}}
\newcommand{\ba}{\begin{array}}
\newcommand{\ea}{\end{array}}
\newcommand{\bthm}{\begin{theorem}}
\newcommand{\ethm}{\end{theorem}}
\newcommand{\bprop}{\begin{proposition}}
\newcommand{\eprop}{\end{proposition}}
\newcommand{\blemma}{\begin{lemma}}
\newcommand{\elemma}{\end{lemma}}
\newcommand{\beqn}{\begin{equation}}
\newcommand{\eeqn}{\end{equation}}
\newcommand{\beqns}{\begin{equation*}}
\newcommand{\eeqns}{\end{equation*}}
\newcommand{\supp}{\operatorname{supp}}
\newcommand{\infinity}{\infty}
\renewcommand{\leq}{\leqslant}
\renewcommand{\geq}{\geqslant}
\definecolor{mygreen}{rgb}{0.1,0.75,0.2}
\newcommand{\N}{\mathbb{N}}
\DeclareMathOperator{\Vor}{Vor}
\DeclareMathOperator{\id}{Id}
\newcommand{\Rd}{{\mathord{\mathbb R}^d}}
\newcommand{\conv}{{\rm conv}}
\newcommand{\aff}{{\rm aff}}
\def\P{{\mathcal P}}
\title{ Wasserstein  Archetypal Analysis} 
\thanks{K. Craig acknowledges partial support from NSF DMS 1811012, NSF DMS 2145900, and a Hellman Faculty Fellowship. K. Craig also acknowledges the support from the Simons Center for Theory of Computing, at which part of this work was completed. B. Osting acknowledges partial support from NSF DMS 17-52202. D. Wang acknowledges partial support from NSFC 12101524 and the University Development Fund from The Chinese University of Hong Kong, Shenzhen (UDF01001803). }
\author{Katy Craig}
\address{Department of Mathematics, University of California, Santa Barbara}
\email{kcraig@math.ucsb.edu}
\author{Braxton Osting}
\address{Department of Mathematics, University of Utah, Salt Lake City}
\email{osting@math.utah.edu}
\author{Dong Wang}
\address{School of Science and Engineering \& Guangdong Provincial Key Laboratory of Big Data Computing, The Chinese University of Hong Kong, Shenzhen, Guangdong, China}
\email{wangdong@cuhk.edu.cn}
\author{Yiming Xu}
\address{Corporate Model Risk, Wells Fargo}
\email{yiming.xu@wellsfargo.com}
\date{\today}                                        
\subjclass[2020]{62H12, 62G07, 65K10, 49Q22}
\keywords{Archetypal analysis; optimal transport; Wasserstein metric; unsupervised learning; multivariate data summarization}
\begin{document}

\begin{abstract}
Archetypal analysis is an unsupervised machine learning method that summarizes data using a convex polytope. 
In its original formulation, for fixed $k$, the method finds a convex polytope with $k$ vertices, called archetype points, such that the polytope is contained in the convex hull of the data and the mean squared Euclidean distance between the data and the polytope is minimal. 

In the present work, we consider an alternative formulation of archetypal analysis based on the Wasserstein metric, which we call Wasserstein archetypal analysis (WAA). In one dimension, there exists a unique solution of WAA \cite{cuesta2002shape} and, in two dimensions, we prove existence of a solution, as long as the data distribution is absolutely continuous with respect to Lebesgue measure. We discuss obstacles to extending our result to higher dimensions and general data distributions. We then introduce an appropriate regularization of the problem, via a R\'enyi entropy, which allows us to obtain existence of solutions of the regularized problem for general data distributions, in arbitrary dimensions. We prove a consistency result for the regularized problem, ensuring that if the data are iid samples from a probability measure, then as the number of samples is increased,  a subsequence of the archetype points converges to the archetype points for the limiting data distribution, almost surely. Finally,
we develop and implement a gradient-based computational approach for the two-dimensional problem, based on the semi-discrete formulation of the Wasserstein metric.
Our analysis is supported by detailed computational experiments. 
\end{abstract}
\maketitle



\section{Introduction}
Given a probability measure $\mu \in \P(\Rd)$, \emph{archetypal analysis} (AA) aims to find  the convex polytope $\Omega\subseteq \mathbb R^d$  with $k$ vertices  that best approximates $\mu$.  As originally introduced by Culter and Breiman in 1994 \cite{Cutler_1994}, 
  given data  
$X = \{ x_i\}_{i=1}^{N} \subseteq \R^d$ and   $k \geq d+1$, 
AA   finds   $k$ vertices,  
$  A= \{a_\ell\}_{\ell=1}^{k} \subseteq \R^d$, that belong  to the convex hull of the data, for which the convex hull $\textrm{co}(A)$  explains the most variation  of the dataset. In particular, AA can be framed in terms of the following constrained optimization problem
\begin{align} \label{e:arch}
\min_{A = \{a_\ell\}_{\ell=1}^{k} \subseteq \R^d } \ &  \left\{ \frac{1}{N} \sum_{i = 1}^N d^2(x_i, \textrm{co}(A) )\colon A \subset \textrm{co}(X) \right\} ,
\end{align}
where $d^2(\cdot,\cdot)$ denotes the squared Euclidean distance. 
The archetypes, $A= \{a_\ell\}_{\ell=1}^{k} \subseteq \R^d$, may be interpreted as exemplars of extreme points of the dataset, a mixture of which   explain the general characteristics of the associated distribution; see \cite{Chan_2003, shoval2012evolutionary, M_rup_2012} for applications of AA in astronomy, biology, and many others.

AA is closely related to other unsupervised learning methods, such as   $k$-means, principal component analysis, and nonnegative matrix factorization   \cite{M_rup_2012}. Under bounded support assumptions, the consistency and convergence of AA were recently established in \cite{Yiming2020}, laying the foundation for AA to apply to large-scale datasets \cite{Xu2021}.  
In practice, however, it is often more appropriate to assume that the distribution generating the data has finite moments but unbounded support, in which case AA is not well-posed \cite{Yiming2020}. 
Also, due to the definition of the loss, AA is sensitive to outliers \cite{Eugster_2011}. 
To address both issues, the present work considers a different formulation of the AA problem, based on the Wasserstein metric.

\subsection{Main results}
Let $\P_2(\Rd)$ denote the space of Borel probability measures on $\Rd$ with finite second moment, $M_2(\mu):= \int_\Rd |x|^2 d\mu(x) < +\infty$. Given $\mu \in \P_2(\Rd)$ and a number of vertices $k \geq d+1$, we seek to find the (nondegenerate) convex $k$-gon $\Omega \subseteq \Rd$ that is \emph{closest} to $\mu$, in the sense that the  uniform probability measure on $\Omega$ is  as close as possible  to $\mu$ in the 2-Wasserstein metric:
\begin{align} \label{mainproblem} 
\tag{WAA}   \min_{ \Omega \in S_k} W_2( \mu, 1_\Omega/|\Omega|) , \quad 
 S_k = \{ \Omega \subset \R^d \colon \Omega \text{ is a convex $k$-gon with nonempty interior}  \}  ,
\end{align}
where 
\[      1_\Omega(x) = \begin{cases} 1 &\text{ if } x \in \Omega, \\ 0 &\text{ otherwise.} \end{cases}   \]
Here, we use the term \emph{$k$-gon} to mean a bounded polytope with $k$ vertices. 

Note that we make a mild abuse of notation in the above problem formulation and throughout this manuscript:  if a measure $\nu \in \P_{2}(\Rd)$ is absolutely continuous with respect to $d$-dimensional Lebesgue measure $\mathcal{L}^d$,   we will write $\nu \in \P_{2, ac}(\Rd)$ and use the same symbol to denote both the measure and its density, $d \nu(x) = \nu(x) d\mathcal{L}^d(x)$. In this way, we will use $1_\Omega/|\Omega|$ to denote the uniform probability measure on $\Omega$.

Informally, the 2-Wasserstein metric measures the distance between probability measures in terms of the amount of \emph{effort} it takes to rearrange one to look like the other. More precisely, 
given  $\mu, \nu \in \P_2(\Rd)$,
the $2$-Wasserstein metric is defined by
\begin{align} 
W^2_2(\mu,\nu) = \inf_{X\sim\mu, Y\sim\nu}\mathbb E[|X-Y|^2],\label{l1}
\end{align}
where the expectation is taken with respect to a rich enough underlying probability space and  the infimum is taken over all couplings $(X, Y)$ with marginals $\mu$ and $\nu$. 
If $\mu \in \P_{2, ac}(\R^d)$, then  there is a unique optimal coupling and, furthermore,   the coupling is \emph{deterministic}: there exists a measurable function $T$, which is unique $\mu$-a.e., so that 
\begin{align} \label{existence of OT map} Y = T (X) .
\end{align} 
(See work by Gigli for sharp conditions guaranteeing the existence of deterministic couplings \cite{Gigli}.)
For further background on optimal transport and the 2-Wasserstein metric, we refer the reader to one of the many excellent textbooks on the subject \cite{AGS, Villani, VillaniBig, santambrogio2015optimal, figalli2021invitation,ambrosio2021lectures}.
 
 
Alternative generalizations of the classical AA problem, based on probabilistic interpretations, can be found in \cite{seth2016probabilistic}, where data are assumed to be generated from a parametric model and the corresponding archetypes are found using the maximum likelihood estimation, resulting in a similar formulation as \eqref{e:arch}. 
Nevertheless, the statistical assumptions in such approaches are often hard to verify and may only be appropriate for certain datasets. 
For nonparametric approaches, aside from the Euclidean and 2-Wasserstein metrics, other discrepancy measures such as the Kullback--Leibler (KL) divergence may also be used.
However, a KL divergence would treat outliers of a given mass in the same way regardless of their spatial closeness, whereas a Wasserstein approach places a larger focus on outliers that are farther from the bulk of the dataset. 
More generally, one could also consider $p$-Wasserstein metrics, where larger choices of $p$ would further penalize the distance from the dataset. In this way, the choice of metric or statistical divergence is problem dependent and encodes modeling assumptions about the structure of the dataset. 

 
 When $d=1$, there is a closed form solution for the 2-Wasserstein metric, and it is straightforward to directly compute the unique minimizer of \eqref{mainproblem}, as previously done in related work by Cuesta-Albertos, Matr\'an, and Rodr\'iguez-Rodr\'iguez \cite{cuesta2002shape}. (See below for a discussion of this and other related previous works.) Our first main result is that, when $d=2$,  a solution of \eqref{mainproblem}   exists, provided that $\mu$ is absolutely continuous.  We summarize the results in the one and two dimensional setting in the following theorem.
  \begin{theorem}[Existence of minimizer in one and two dimensions] \label{t:Existence2d}
 Let $d=1$ and $k \geq 2$. Then for all $\mu \in \P_2(\R)$, there exists a unique solution of \eqref{mainproblem}, and this solution may be expressed explicitly as $\Omega = [a,b]$ for
\begin{align}\label{e:1dSol}
a &= 4 C_{0} - 6 C_{1} , \quad  b = 6 C_{1} - 2 C_{0}, \quad  \quad  C_{0} =   \int_{\R} x d\mu(x) , \quad  C_{1} =   \int_{\R} x F(x) d \mu(x),
\end{align}
where $F$ is the cumulative distribution function (CDF) of $\mu$.
 
Let $d= 2$ and $k \geq 3$. Then for all  $\mu \in \P_{2,ac}(\mathbb{R}^2)$,   there exists a solution of \eqref{mainproblem}.
\end{theorem}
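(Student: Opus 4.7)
The plan is to apply the direct method of the calculus of variations. Let $I = \inf_{\Omega \in S_k} W_2(\mu, 1_\Omega/|\Omega|)$, which is finite since we may take any nondegenerate competitor, and fix a minimizing sequence $\{\Omega_n\}_{n \in \N} \subseteq S_k$.

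First I would establish compactness. Since $W_2(\mu, 1_{\Omega_n}/|\Omega_n|)$ is bounded and $M_2(\mu) < \infty$, the triangle inequality for $W_2$ implies that the second moments of $1_{\Omega_n}/|\Omega_n|$ are uniformly bounded. Because each $\Omega_n$ is convex, this in turn forces uniform bounds on the centroids and on the diameters of $\Omega_n$. By Blaschke's selection theorem, a subsequence $\{\Omega_{n_j}\}$ converges in the Hausdorff distance to a compact convex set $K \subseteq \R^2$. Extracting further subsequences for each of the $k$ vertex positions, the vertices converge to $k$ limit points in $\R^2$ (possibly coincident), and $K$ coincides with their convex hull; hence $K$ is a convex polygon with at most $k$ vertices.

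The main obstacle and technical heart of the proof is ruling out degeneracy: we must show that $K$ has nonempty interior. Suppose toward contradiction that $|K|=0$, so $K$ is contained in a line (or point) $\ell \subset \R^2$. Then $|\Omega_{n_j}| \to 0$, and the uniform measures $1_{\Omega_{n_j}}/|\Omega_{n_j}|$ converge narrowly to a probability measure $\nu$ supported in $\ell$. By lower semicontinuity of $W_2$ under narrow convergence of measures with uniformly bounded second moments, $W_2(\mu, \nu) \leq I$. Since $\mu \in \P_{2,ac}(\R^2)$ while $\nu$ is supported on a Lebesgue-null subset of $\R^2$, one obtains a quantitative lower bound on $W_2(\mu,\nu)$ in terms of the mass $\mu$ places at positive distance from $\ell$. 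The key step is then to exhibit a genuine competitor $\Omega_0 \in S_k$ with $W_2(\mu, 1_{\Omega_0}/|\Omega_0|)$ strictly less than this lower bound; for $k \geq 3$ in the plane, such a competitor can be constructed by thickening a neighborhood of $\ell$ or by choosing a $k$-gon adapted to the density of $\mu$, yielding the desired contradiction.

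Once $K$ has nonempty interior, Hausdorff convergence of $\Omega_{n_j}$ together with convergence of second moments gives narrow convergence $1_{\Omega_{n_j}}/|\Omega_{n_j}| \to 1_K/|K|$ in $\P_2$, so $W_2(\mu, 1_K/|K|) \leq I$, and $K$ realizes the infimum over the closure $\ol{S_k}$. A final subtlety is that $K$ a priori may have strictly fewer than $k$ vertices. This can be handled by noting that the infimum over $S_k$ coincides with that over $\bigcup_{3 \leq j \leq k} S_j$ by a density argument and, under a monotonicity-in-$k$ comparison that exploits $\mu \in \P_{2,ac}(\R^2)$, a would-be minimizer with $j<k$ vertices can be perturbed (e.g., by pushing a point slightly off an edge) to produce a genuine $k$-vertex minimizer, completing the proof.
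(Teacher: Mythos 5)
Your overall skeleton for the two-dimensional case --- direct method, uniform second-moment bounds giving compactness of the polygons, convergence of vertices, and a contradiction argument to rule out collapse onto a line or point --- is the same as the paper's. But the step you label as ``the key step'' is precisely where the entire difficulty of the theorem lives, and as written it has a genuine gap in two respects. First, the logic of ``obtain a quantitative lower bound on $W_2(\mu,\nu)$ in terms of the mass $\mu$ places at positive distance from $\ell$, then exhibit a competitor below that lower bound'' does not work: the natural lower bound is the transverse cost $\int \dist(x,\ell)^2\,d\mu(x)$, which can be arbitrarily small (e.g.\ $\mu$ concentrated near $\ell$), and every competitor must also pay a longitudinal cost, so there is no reason a $k$-gon should beat that crude bound. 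What is actually needed is a competitor that beats $W_2(\mu,\nu)$ \emph{itself}, and the paper achieves this by a first-order perturbation: it shows the gain from giving the degenerate limit a small transverse thickness is linear in the thickness parameter $\alpha$ while the loss is $O(\alpha^2)$, the strict positivity of the linear term coming from the strict inequality $\mathbb E[XF(X)]>\tfrac12\mathbb E[X]$ (Lemma \ref{katy's lemma}) applied to the conditional law of $X_2$ given $X_1$ --- which is exactly where $\mu\in\P_{2,ac}(\R^2)$ is used. Second, ``thickening a neighborhood of $\ell$'' ignores the constraint that the competitor must lie in $S_k$. The paper's thickened set is $\Omega(\alpha,\beta)=\{(x_1,x_2)\colon x_2\in[0,\alpha g(x_1/\beta)]\}$, and the reason this is a $k$-gon is the structural result (Proposition \ref{ourlemma}, via Lemma \ref{smalllemma}) that the projected density $g$ of any degenerate limit is piecewise linear, concave, with at most $k$ vertices. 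Without that characterization the construction can leave $S_k$ --- this is exactly the failure mode in $d\geq 3$ described in Remark \ref{difficulty} --- so the claim cannot be waved through for a generic ``$k$-gon adapted to the density of $\mu$.''

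Two smaller points. Your proposal does not address the one-dimensional half of the theorem at all: the existence, uniqueness, and explicit formula \eqref{e:1dSol} follow from the closed form \eqref{1dwasserstein} of $W_2$ on $\R$, which reduces the problem to minimizing a strictly convex quadratic in $(a,b)$; this part needs to be supplied. Finally, your closing worry about the limit having fewer than $k$ vertices is a reasonable observation about the convention for $S_k$, but it is peripheral; the substantive issue is the degeneracy argument above.
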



In the two dimensional case, our proof of Theorem~\ref{t:Existence2d}   relies on an explicit characterization of the closure of $\{ 1_\Omega/|\Omega| \colon \Omega \in S_k\}$ in the narrow topology; see Proposition \ref{ourlemma}. The key difficulty in proving a minimizer exists arises  when considering the possibility that the minimizing sequence converges to a limit point with lower dimensional support. Note, in particular, that limit points with lower dimensional support are in general \emph{not} uniformly distributed on their support; see Figure \ref{limitpointexample}. For this reason, not all limit points have a valid interpretation in terms of \emph{archetypal analysis}, in which the $k$ vertices or \emph{archetypes} of their support completely describe the measure via their convex combinations. For this reason, it is essential that we rule out the possibility that a limit point of this type achieves the infimum of $W_2(\mu, 1_{\Omega}/|\Omega|)$ over $\Omega \in S_k$. We succeed in doing this when $\mu \in \P_{2,ac}(\R^2)$ by adapting a  perturbation argument of Cuesta-Albertos, et. al., \cite{cuesta2003approximation} to the case of convex polygons. In particular, we construct simple perturbations  around   degenerate limit points   that are both feasible for our constraint set and improve the value of the objective function.  It remains an open question how to extend  this technique to $\mu$ which are no absolutely continuous with respect to $\mathcal{L}^2$; see Remark \ref{muabscts}. Furthermore, our approach to proving the value of the objective function improves strongly leverages the structure of the 2-Wasserstein metric, and a different approach would be needed to extend the result to $p$-Wasserstein metrics for $p \neq 2$.

While this perturbative approach allows us to overcome the difficulty of degenerate limit points in two dimensions, our approach fails in dimensions $d \geq 3$, since in the higher dimensional setting, edges of polygons can cross in the limit, creating  artificial vertices in the  perturbations we consider, so that they no longer belong to the constraint set; see Remark~\ref{difficulty} and Figure~\ref{omeganfig}.
For this reason, it remains an open problem whether minimizers of \eqref{mainproblem} exist for dimensions  $d \geq 3$.

\begin{figure}[t!] 
\includegraphics[width =0.8\textwidth]{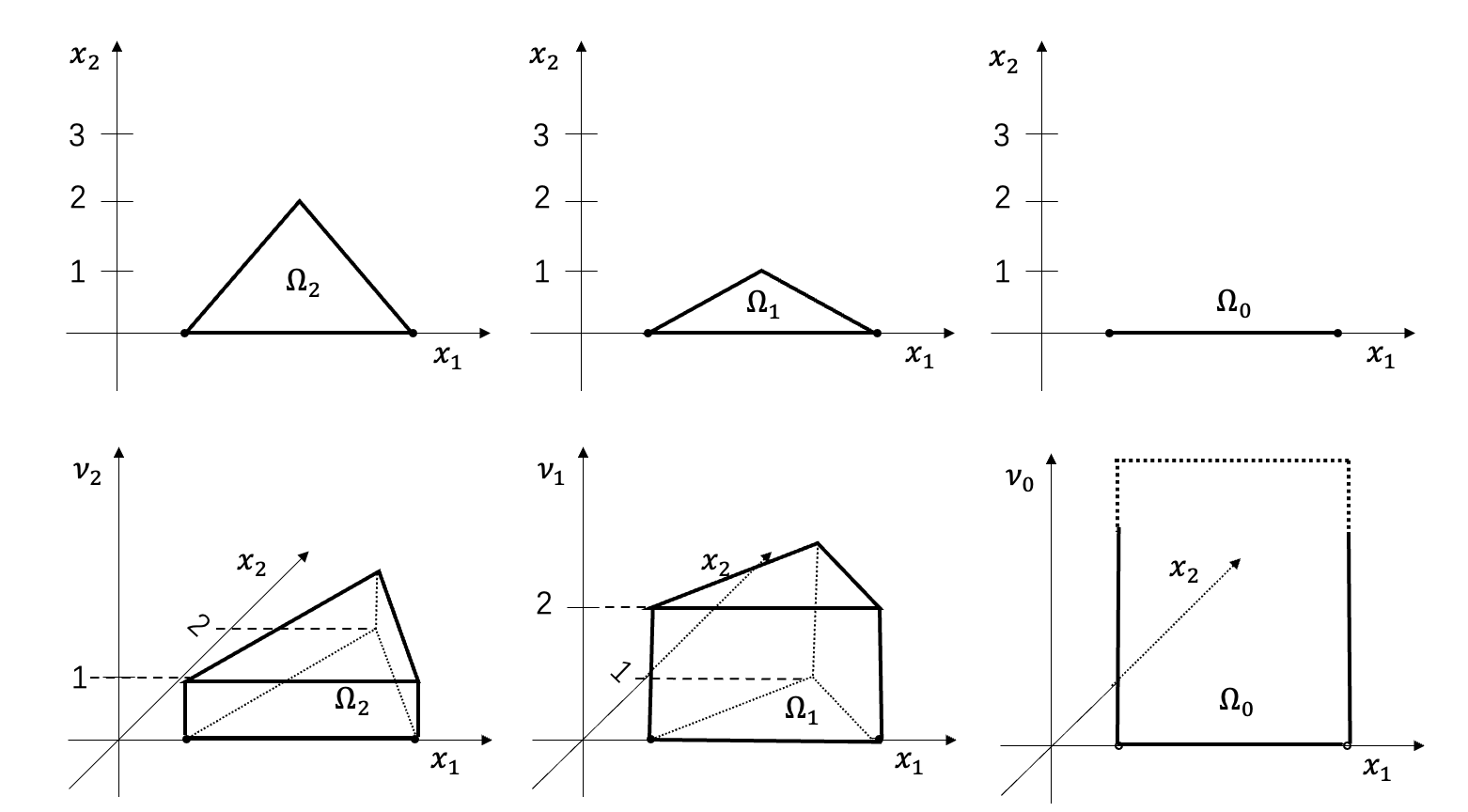}\hspace{0.4 cm} 
\caption{In this figure, we illustrate how limit points of $\{ 1_\Omega/|\Omega| \colon \Omega \in S_k\}$ in the narrow topology are not, in general, given by uniform measures on their support. For example, the top row of this figure illustrates the limit of a sequence of triangles $\Omega_\delta$, where the base of the triangle remains fixed, while the height of the triangle converges $\delta$ converges to zero. The second row of this figure shows the limit of the corresponding probability measures. We see that the limiting measure $\nu_0$ is supported on the one dimensional interval prescribed by the base of the triangle, but its density with respect to $\mathcal{L}^1$ is not constant, but piecewise linear, concentrating more mass in the interior of the interval, where the third vertex converged. In particular, we see that $\lim_{\delta \to 0} 1_{\Omega_\delta}/|\Omega_\delta| = \nu_0 \neq 1_{\Omega_0 }/|\Omega_0|$.  }\label{limitpointexample}
\end{figure}
 
Finally, in terms of uniqueness, we observe that, in dimensions higher than one, the minimizer is clearly not unique: for example, if $\mu$ is the uniform distribution on the unit ball, any rotation of an optimal polytope would also be optimal. Understanding whether uniqueness holds up to such invariances, remains an interesting open question.

As described above, our approach to proving existence of (\ref{mainproblem}) in two dimensions proceeds by adapting a perturbation argument of Cuesta, el. al. \cite{cuesta2003approximation}, which considered a related problem: given $\mu \in \P_{2,ac}(\Rd)$, find the \emph{convex set} $\Omega$ that minimizes $W_2(\mu, 1_{\Omega}/|\Omega|)$. This built on previous work by the same authors, which also examined optimal $W_2$ approximation of measures $\mu \in \P(\R)$ in one dimension by empirical measures, uniform measures on intervals, and ellipsoids \cite{cuesta2002shape}. The motivation of this work was to describe the \emph{shape} and \emph{flatness} of a measure $\mu$. Subsequent work by Belili and Heinich \cite{belili2006approximation} considered $W_2$ approximation of a measure $\mu$ over more general types of measures, including uniform measures on convex sets and uniform measures on sets of the form $\Omega_1 \setminus \Omega_2$ where $\Omega_1, \Omega_2$ are convex. Furthermore, Belili and Heinich suceeded in proving existence of a closest approximation under more general hypotheses on $\mu$: instead of requiring $\mu \in \P_{2, ac}(\Rd)$, they require that the affine hull of the support of $\mu$ is $d$-dimensional. However, an essential hypothesis in the work of Belili and Heinich is that the class of approximating measures satisfy certain symmetry properties \cite[condition (3)]{belili2006approximation}. For example, if $\nu_0$ were the limiting measure shown in  Figure \ref{limitpointexample}, there would have to exist a triangle $\Omega$ so that the first marginal of $1_\Omega/|\Omega|$ is $\nu_0$ and its second marginal is symmetric about the origin. Unfortunately, no such triangle exists. For this reason, while our study of existence of solutions to \eqref{mainproblem} is closely related to the aforementioned works, the fact that \eqref{mainproblem} constrains $\Omega$ to be a $k$-gon requires the development of new techniques.

Motivated by the fact that the key challenge in proving existence of minimizers to (\ref{mainproblem}) arises when the support of the  minimizing sequence collapses to a lower dimensional set, we now introduce a regularized version of the problem that prevents this degeneracy. For $m\geq 1$, the  $m$-R\'enyi entropy is given by 
\[\mathcal{U}_m(\nu) = 
\begin{cases} 
\frac{1}{m-1} \int \nu(x)^m dx &\text{ if } \nu \ll \mathcal{L}^d \text{ and } d \nu (x) = \nu(x) dx , \\ 
+\infty &\text{ otherwise.} 
\end{cases} 
\]
In particular, if $\nu = 1_\Omega / |\Omega|$ for $\Omega \in S_k$, we have
\begin{align} \label{intenshape} \mathcal{U}_m(1_\Omega/|\Omega|) = \begin{cases} \frac{1}{(m-1) |\Omega|^{m-1}} &\text{ if } m>1 , \\
 -\log (|\Omega|) & \text{ if } m =1 . \end{cases}
\end{align}
For fixed  $\varepsilon > 0$, $k \geq d+1$, $m \geq 1$, and $\mu \in \P_2(\Rd)$, we consider the regularized  problem:
\begin{align} 
\label{mainprobv2} \tag{$\text{WAA}_\varepsilon$}
 \min_{\Omega \in S_k}  \quad W_2 \left(\mu, 1_\Omega / |\Omega| \right) + \varepsilon\mathcal{U}_m(1_\Omega/|\Omega|) .
 \end{align}
Due to the fact that the regularization prevents the minimizer from collapsing to a lower dimensional set, we are able to obtain existence of solutions in all dimensions, for general $\mu \in \P_2(\Rd)$.
 
\begin{theorem}[Existence of minimizer for \eqref{mainprobv2}, in all dimensions]
\label{t:Existence}
Fix $\varepsilon > 0$, $k \geq d+1$, $m \geq 1$, and $\mu \in \P_{2}(\Rd)$. 
Then there exists a solution of \eqref{mainprobv2}.
\end{theorem}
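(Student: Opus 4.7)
The plan is to apply the direct method of the calculus of variations to the functional $\mathcal{F}(\Omega) := W_2(\mu, 1_\Omega/|\Omega|) + \varepsilon \mathcal{U}_m(1_\Omega/|\Omega|)$. Let $\{\Omega_n\}_{n \in \mathbb{N}} \subseteq S_k$ be a minimizing sequence. The whole point of the regularization is that, by the explicit formula \eqref{intenshape}, in both cases $m=1$ and $m>1$ the term $\mathcal{U}_m(1_{\Omega_n}/|\Omega_n|)$ diverges to $+\infty$ as $|\Omega_n| \to 0$, so the boundedness of $\mathcal{F}(\Omega_n)$ forces $|\Omega_n| \geq c$ for some $c > 0$. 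Simultaneously, the triangle inequality yields
\[ \sqrt{M_2(1_{\Omega_n}/|\Omega_n|)} = W_2(\delta_0, 1_{\Omega_n}/|\Omega_n|) \leq \sqrt{M_2(\mu)} + W_2(\mu, 1_{\Omega_n}/|\Omega_n|), \]
and the right-hand side is uniformly bounded along the minimizing sequence. This controls both the centroid of $\Omega_n$ and the variance of the uniform distribution on $\Omega_n$; combined with the volume lower bound and the classical fact that for convex bodies the squared diameter is controlled, up to a dimensional constant, by the variance of the uniform distribution, it follows that the $k$ vertices $v_1^{(n)},\ldots,v_k^{(n)}$ of $\Omega_n$ all lie in a common compact subset of $\R^d$.

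Applying Bolzano--Weierstrass to each of the $k$ vertex sequences, I extract a (not relabeled) subsequence along which $v_\ell^{(n)} \to v_\ell^*$ for each $\ell$, and set $\Omega_* := \mathrm{co}(\{v_\ell^*\}_{\ell=1}^k)$, so that $\Omega_n \to \Omega_*$ in the Hausdorff distance. To pass to the limit, I would use that Hausdorff convergence of convex sets implies narrow convergence of the corresponding uniform probability measures; the uniform second moment bound upgrades this to convergence in $W_2$, so the first term of $\mathcal{F}$ passes to the limit by continuity of $\nu \mapsto W_2(\mu,\nu)$. For the regularizer, Hausdorff convergence of convex sets is volume-continuous whenever the limit has nonempty interior, and $|\Omega_*| \geq c > 0$ by the a priori bound, so the explicit expression \eqref{intenshape} gives $\mathcal{U}_m(1_{\Omega_n}/|\Omega_n|) \to \mathcal{U}_m(1_{\Omega_*}/|\Omega_*|)$. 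Combining these, $\mathcal{F}(\Omega_*) = \inf_{\Omega \in S_k}\mathcal{F}(\Omega)$.

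The main obstacle I anticipate is verifying that $\Omega_* \in S_k$, i.e., that the $k$ limiting vertices are genuinely distinct. The volume bound $|\Omega_*| \geq c > 0$ precludes the degeneration to a lower-dimensional set that was the central difficulty for the unregularized problem \eqref{mainproblem}, but it does not rule out the weaker degeneracy of two or more vertices coalescing, in which case $\Omega_*$ would be a $k'$-gon with $k' < k$. I would handle this either by interpreting $S_k$ to include polygons with coincident vertex labels (so that $\Omega_*$ lies in $S_k$ by convention), or by a short perturbation argument: from such an $\Omega_*$ a genuine $k$-gon can be produced by splitting each coincident vertex into nearby distinct points on $\partial \Omega_*$, and the continuity of $\mathcal{F}$ established in the previous paragraph ensures the perturbation can be arranged to realize the infimum within $S_k$ itself.
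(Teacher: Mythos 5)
Your overall strategy is the same as the paper's: along a minimizing sequence the entropy term forces $|\Omega_n|$ to stay bounded below, the $W_2$ term forces the second moments (hence, by convexity, the vertices) to stay in a compact set, one extracts convergent vertices, and the volume lower bound prevents the collapse to a lower-dimensional set that obstructs the unregularized problem. Your use of Hausdorff convergence plus full continuity of both terms (justified here by the uniformly bounded supports) in place of the paper's narrow convergence plus lower semicontinuity is a cosmetic difference, and your explicit treatment of possible vertex coalescence is, if anything, more careful than the paper's (which silently treats a polytope with at most $k$ vertices and nonempty interior as an element of $S_k$; note that your ``perturbation'' fix cannot actually realize the infimum with $k$ distinct vertices --- only the convention fix resolves this, and it is really a matter of how $S_k$ is read).

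There is, however, one genuine gap, and it sits exactly where the one non-routine ingredient of the paper's proof lives: the case $m=1$. You assert that the right-hand side of
\[
\sqrt{M_2(1_{\Omega_n}/|\Omega_n|)} \;\leq\; \sqrt{M_2(\mu)} + W_2\!\left(\mu, 1_{\Omega_n}/|\Omega_n|\right)
\]
is uniformly bounded along the minimizing sequence. For $m>1$ this is immediate since $\mathcal{U}_m \geq 0$, so $W_2 \leq \mathcal{F}(\Omega_n) \leq C$. But for $m=1$ one has $\varepsilon\,\mathcal{U}_1(1_{\Omega_n}/|\Omega_n|) = -\varepsilon \log|\Omega_n|$, which tends to $-\infty$ if $|\Omega_n| \to +\infty$; a priori this could compensate an unbounded $W_2$ term and keep $\mathcal{F}(\Omega_n)$ bounded (indeed, one must even rule out $\inf \mathcal{F} = -\infty$). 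Your argument never produces an upper bound on $|\Omega_n|$, and deducing one from the second-moment bound would be circular, since the second-moment bound is what you are trying to prove. The paper closes this loop with a Carleman-type estimate, $\mathcal{U}_1(\nu) \geq -(2\pi/\delta)^{d/2} - \delta M_2(\nu)$ for all $\delta>0$; alternatively, an elementary fix is to note that for a convex body $|\Omega| \leq \operatorname{diam}(\Omega)^d \leq C_d\, M_2(1_\Omega/|\Omega|)^{d/2}$, so the offending term $-\varepsilon\log|\Omega_n|$ is bounded below by $-C\log M_2(1_{\Omega_n}/|\Omega_n|)$ and is therefore dominated by the growth $W_2(\mu,\cdot) \geq M_2(\cdot)^{1/2} - M_2(\mu)^{1/2}$, which restores coercivity. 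With that single repair (and the convention on $S_k$ noted above), your proof is complete.
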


The proof of Theorem~\ref{t:Existence} follows via Prokhorov's theorem and the lower semicontinuity of the Wasserstein metric and R\'enyi entropy in the narrow topology.   Uniqueness again fails in dimensions higher than two, due to the rotational invariance of the R\'enyi entropy.

\begin{remark}[Limit as $\varepsilon \to 0$]
If $d=1$ or $2$ and $\mu \in \P_{2, ac}(\Rd)$, similar arguments as in the proof of Theorem \ref{t:Existence2d} can be used to show that, for any $\e_n\to 0$ as $n\to+\infty$, the solutions to \eqref{mainprobv2} with $\e = \e_n$,  form a minimizing sequence to \eqref{mainproblem} and converge (up to a subsequence) to a solution of \eqref{mainproblem}.  
This shows   consistency of the regularized problem with the original problem, as the regularization is removed.
\end{remark}

Our final results consider practical application of Wasserstein archetypal analysis to data. In particular, we seek to understand how solutions of  the regularized problem (\ref{mainprobv2}) behave when a measure $\mu \in \P_2(\Rd)$ is a approximated by a sequence of empirical measure $\mu_n := \frac{1}{n} \sum_{i =1}^n \delta_{X_i}$, \ $X_i \overset{\text{iid}}{\sim} \mu$.
We show that, almost surely and up to a subsequence,  optimizers $\Omega_{n}$ for the empirical measure  $\mu_n$ converge as $n \to \infinity$  to an optimizer $\Omega$ for $\mu$.  We also  provide  a convergence rate in terms of the value of the objective function, based on a classical estimate of Horowitz and Karandikar \cite{horowitz1994mean}. While more sophisticated rates can be obtained using sharper moment estimates or estimates on the dimensionality of the support of $\mu$ \cite{fournier2015rate, weed2019sharp},   we use the classical estimate for simplicity and to preserve the focus on the archetypal analysis problem.

\begin{theorem}[Consistency and convergence for  (\ref{mainprobv2})]
\label{t:Consistency}
Fix $\varepsilon > 0$, $k \geq d+1$, $m \geq 1$, and $\mu \in \P_{2}(\Rd)$. 
Suppose 
\[ \mu_n = \frac{1}{n} \sum_{i=1}^n \delta_{X_i}  , \quad  X_i \overset{\text{iid}}{\sim} \mu. \]
For each $n \in \mathbb{N}$, let $\{\Omega_{n}\}_{n\in\N}$ be a   minimizer  to  \eqref{mainprobv2} for   $\mu_n$. 
Then the following hold:
\begin{enumerate}[(i)]
\item Almost surely, there exists $\Omega \in S_k$ so that, up to a subsequence, 
\begin{align}  1_{\Omega_{n}}/|\Omega_n| \to 1_\Omega/|\Omega| \text{ narrowly} , \label{indicatorsconverge}\end{align}
\label{consisconv}
and $\Omega$ solves \eqref{mainprobv2} for $\mu$; \label{consissol}
\item If $\beta: = \int_{\R^d}|x|^{d+5}d\mu(x)<+\infty$, then \label{ratepart}
\begin{align*}
\mathbb E\left[W_2 \left(\mu, \frac{1}{|\Omega_n|} 1_{\Omega_n} \right) + \e  \mathcal{U}_m(1_{\Omega_n}/|\Omega_n|) -\left(\min_{\Omega \in S_k} W_2 \left(\mu, \frac{1}{|\Omega|} 1_{\Omega} \right) + \e  \mathcal{U}_m(1_\Omega/|\Omega|)\right)\right] \leq \frac{C(\beta, d)}{n^{d+4}},
\end{align*}
where $C(\beta, d)$ is some constant depending on $\beta$ and $d$. 
\end{enumerate}
\end{theorem}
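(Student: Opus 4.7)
The plan is to establish parts (i) and (ii) in turn. Part (i) will use compactness and lower semicontinuity to extract a narrow limit of $\{1_{\Omega_n}/|\Omega_n|\}$ that solves \eqref{mainprobv2} for $\mu$, while part (ii) will combine two $W_2$ triangle inequalities with the Horowitz--Karandikar estimate. The shared ingredient is the almost-sure convergence $W_2(\mu_n, \mu) \to 0$, a standard strong law of large numbers in the Wasserstein metric that holds for $\mu \in \P_2(\Rd)$.

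For part (i), I first derive uniform a priori bounds. Fixing any reference polytope $\tilde{\Omega} \in S_k$, the optimality of $\Omega_n$ for $\mu_n$ gives
\[
W_2(\mu_n, 1_{\Omega_n}/|\Omega_n|) + \e \mathcal{U}_m(1_{\Omega_n}/|\Omega_n|) \leq W_2(\mu_n, 1_{\tilde{\Omega}}/|\tilde{\Omega}|) + \e \mathcal{U}_m(1_{\tilde{\Omega}}/|\tilde{\Omega}|),
\]
whose right-hand side is almost surely uniformly bounded in $n$ by the triangle inequality and $W_2(\mu_n,\mu)\to 0$. This yields a uniform bound on the second moments of $1_{\Omega_n}/|\Omega_n|$ (so the family is tight by Prokhorov) and a uniform upper bound on $\mathcal{U}_m(1_{\Omega_n}/|\Omega_n|)$ (so, via \eqref{intenshape}, $|\Omega_n|$ is bounded below away from zero).

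Next, I pass to a narrowly convergent subsequence $1_{\Omega_n}/|\Omega_n| \wto \nu$. Narrow lower semicontinuity of $\mathcal{U}_m$ forces $\nu \ll \mathcal{L}^d$, ruling out the low-dimensional degenerations of Figure~\ref{limitpointexample}. Combined with the characterization of the narrow closure of $\{1_\Omega/|\Omega| : \Omega \in S_k\}$ from Proposition~\ref{ourlemma}, this identifies $\nu = 1_\Omega/|\Omega|$ for some $\Omega \in S_k$. To see that $\Omega$ is a minimizer for $\mu$, fix any $\Omega' \in S_k$ and write $F_\sigma(\Omega) := W_2(\sigma, 1_\Omega/|\Omega|) + \e \mathcal{U}_m(1_\Omega/|\Omega|)$. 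The optimality of $\Omega_n$ gives $F_{\mu_n}(\Omega_n) \leq F_{\mu_n}(\Omega')$; the right side converges to $F_\mu(\Omega')$ by continuity of $W_2(\cdot, 1_{\Omega'}/|\Omega'|)$ under $W_2$-convergence of $\mu_n \to \mu$, while the left side is bounded below by $F_\mu(\Omega)$ using joint narrow lower semicontinuity of $W_2$ on $\P_2$ (valid under uniformly bounded second moments) together with narrow lower semicontinuity of $\mathcal{U}_m$.

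For part (ii), let $\Omega^*$ be any minimizer of $F_\mu$ on $S_k$, which exists by Theorem~\ref{t:Existence}. Two applications of the triangle inequality for $W_2$ and the optimality of $\Omega_n$ for $\mu_n$ yield
\[
F_\mu(\Omega_n) \leq F_{\mu_n}(\Omega_n) + W_2(\mu, \mu_n) \leq F_{\mu_n}(\Omega^*) + W_2(\mu, \mu_n) \leq F_\mu(\Omega^*) + 2\, W_2(\mu, \mu_n).
\]
Taking expectations and inserting the Horowitz--Karandikar bound \cite{horowitz1994mean} on $\mathbb{E}[W_2(\mu, \mu_n)]$, which is available under the moment hypothesis $\int |x|^{d+5} d\mu < \infty$, completes the proof. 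The main obstacle lies in part (i): identifying the narrow limit $\nu$ as an element of $\{1_\Omega/|\Omega| : \Omega \in S_k\}$, since this narrow closure also contains degenerate measures supported on lower-dimensional sets (cf.\ Figure~\ref{limitpointexample}). The R\'enyi regularization is what resolves this, ruling out both dimensional collapse and vanishing of $|\Omega_n|$ and thereby enabling Proposition~\ref{ourlemma} to pin down the limit.
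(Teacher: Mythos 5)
Your proposal follows essentially the same route as the paper: almost sure convergence $W_2(\mu_n,\mu)\to 0$ from the law of large numbers, a compactness argument identifying the narrow limit as $1_\Omega/|\Omega|$ with $\Omega\in S_k$ via Proposition~\ref{ourlemma}, the $\liminf$ chain against an arbitrary competitor $\Omega'$ using joint lower semicontinuity of $W_2$ and $\mathcal{U}_m$, and the double triangle inequality combined with the Horowitz--Karandikar estimate for the rate. Your phrasing of the degeneracy exclusion (lower semicontinuity of $\mathcal{U}_m$ forces $\nu\ll\mathcal{L}^d$, hence $\dim(\aff(\supp\nu))=d$) is a slightly cleaner packaging than the paper's route through $|\Omega|>0$ via dominated convergence, but it is the same idea. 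One step, however, has a gap when $m=1$: you claim that the uniform bound on the objective "yields a uniform bound on the second moments of $1_{\Omega_n}/|\Omega_n|$," but $\mathcal{U}_1(1_{\Omega_n}/|\Omega_n|)=-\log|\Omega_n|$ can be arbitrarily negative, so a bounded sum $W_2(\mu_n,1_{\Omega_n}/|\Omega_n|)+\e\,\mathcal{U}_1(1_{\Omega_n}/|\Omega_n|)\leq C$ does not by itself bound $W_2(\mu_n,1_{\Omega_n}/|\Omega_n|)$, nor the second moments. The paper's Lemma~\ref{Umcpt} closes this with a Carleman-type estimate, $\mathcal{U}_1(1_\Omega/|\Omega|)\geq -(2\pi/\delta)^{d/2}-\delta M_2(1_\Omega/|\Omega|)$, which lets the negative part of the entropy be absorbed into the second moment for $\delta$ small relative to $\e$. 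For $m>1$, where $\mathcal{U}_m\geq 0$, your argument is complete as written; with the Carleman repair for $m=1$, the rest goes through and matches the paper's proof.
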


In addition to ensuring consistency of WAA under empirical approximation, the previous theorem is also relevant from the perspective of numerical methods. Our final main result is the development of a numerical method for solving (\ref{mainproblem}) and (\ref{mainprobv2}), based on approximating a given measure $\mu \in \P_2(\Rd)$ by a sequence of finitely supported measures $\mu_n = \sum_{i=1}^n \delta_{x_i} m_i$ and approximating the solution of (\ref{mainproblem}) or (\ref{mainprobv2}) for $\mu_n$. Such an approximation greatly simplifies the problem numerically, since the Wasserstein distance $W_2(\mu_n , 1_{\Omega}/|\Omega|)$ can now be computed using the \emph{semidiscrete} method introduced by M\'erigot  \cite{M_rigot_2011}. Based on this perspective, we introduce an alternating gradient-based method to approximate the optimizer; see Section~\ref{s:CompApproach} and Algorithm~\ref{alg1}.

In  Section~\ref{s:NumExp}, we consider several numerical experiments for approximating solutions of (\ref{mainproblem}) and (\ref{mainprobv2})   in two dimensions. 
First, we consider the case when $\mu$ is the uniform distribution on a disk (Section~\ref{sec:uniform}) and 
a normal distribution (Section~\ref{sec:normal}). In this case, the solution for each $k$   is a regular $k$-gon. 
Next, we study the sensitivity to the parameter $\varepsilon$ in \eqref{mainprobv2} (Section~\ref{s:EpsSensitivity}) and   provide an example that demonstrates the non-convexity of the energy landscape in (\ref{mainproblem}) (Section~\ref{s:NonConvexLandscape}). 
Finally, we consider an example of the early spread of the COVID-19 virus in the U.S. (Section~\ref{s:COVID}). 

We conclude in Section~\ref{s:Disc} with a discussion of our contributions and directions for future work. 
%


\section{Preliminaries}
Given a closed, convex set $S \subseteq \Rd$, let $\pi_S:\Rd \to S$ denote the projection on the set. Given any (Borel) measureable function $T \colon \Rd \to \Rd$ and $\mu \in \P(\Rd)$, the \emph{push-forward of $\mu$ through $T$}, denoted $T \# \mu $, is the probability measure defined by 
\[ \int_\Rd f(T(x)) d \mu(x) = \int_\Rd f(x) d (T \# \mu)(x) , \quad \forall f \text{ bounded and (Borel) measurable. }\]

We now recall several facts about the Wasserstein metric and uniform measures on convex polygons. First, recall that in one dimension, we have an explicit formula for the Wasserstein metric in terms of cumulative distributions functions (CDFs) \cite[Theorem 2.18]{Villani}. In particular, if $\mu, \nu \in \P_2(\R)$ have CDFs $F$ and $G$, then
\begin{align} \label{1dwasserstein} W_2(\mu, \nu) = \left( \int_0^1 |F^{-1}(t) - G^{-1}(t)|^2 d t  \right)^{1/2} ,\end{align}
where the \emph{generalized inverse} of $F$ is given by $F^{-1}(t) = \inf \{ x \in \R \colon F(x) >t \}$.

Next, we recall relevant notions of convergence, beginning with narrow convergence.
\begin{definition}[Narrow convergence of probability measures in $\R^d$]
We say that $\{\mu_n\}_{n=1}^\infty \subseteq \P(\Rd)$ converges narrowly to  $\mu\in\mathcal P(\R^d)$ if 
\begin{align}
&\int_{\R^d}f(x) d\mu_n(x)\to\int_{\R^d}f(x)d\mu(x), \qquad \forall f\in C_b(\R^d),
\end{align}
where $C_b(\R^d)$ denotes the set of bounded continuous functions on $\R^d$. 
\end{definition}
In the probability literature, narrow convergence is also called weak convergence or convergence in distribution. 
Narrow convergence is slightly weaker than convergence in $W_2$, and they are equivalent when the second moments also converge.
\begin{lemma}[{\cite[Remark. 7.1.11]{AGS}}]
Given $\mu_n , \mu \in \P_2(\Rd)$, the following statements are equivalent:
\begin{align} \label{charW2conv}
\lim_{n \to +\infty} W_2(\mu_n, \mu) = 0 \iff \mu_n \to \mu \text{ narrowly and }\lim_{n\to +\infty} M_2( \mu_n)= M_2( \mu).
\end{align}
\end{lemma}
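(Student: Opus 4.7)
The plan is to prove the two directions separately, exploiting the special role that the reference measure $\delta_0$ plays in detecting second moments and using a coupling representation for the harder implication.

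For the forward direction, suppose $W_2(\mu_n,\mu)\to 0$. To obtain convergence of second moments, I note that $W_2(\nu,\delta_0)^2 = \int |x|^2 d\nu(x) = M_2(\nu)$ for every $\nu \in \P_2(\Rd)$, and then apply the triangle inequality for $W_2$ to conclude
\[
\bigl|\sqrt{M_2(\mu_n)} - \sqrt{M_2(\mu)}\bigr| = \bigl|W_2(\mu_n,\delta_0) - W_2(\mu,\delta_0)\bigr| \leq W_2(\mu_n,\mu) \to 0 .
\]
To obtain narrow convergence, I first use Jensen's inequality on any coupling to conclude $W_1(\mu_n,\mu)\leq W_2(\mu_n,\mu)\to 0$. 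Kantorovich--Rubinstein duality then yields $|\int f d\mu_n - \int f d\mu|\leq \mathrm{Lip}(f)\, W_1(\mu_n,\mu)\to 0$ for every Lipschitz $f$. To upgrade this to arbitrary $f\in C_b(\Rd)$, I observe that the sequence $\{\mu_n\}$ is tight (since its second moments are bounded) and approximate $f$ uniformly on each compact set by Lipschitz functions with the same sup-norm bound, splitting the integrals into a compact-core contribution and a tail controlled by tightness.

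For the backward direction, suppose $\mu_n \to \mu$ narrowly and $M_2(\mu_n) \to M_2(\mu)$. The strategy is to use Skorokhod's representation theorem to build random variables $X_n \sim \mu_n$ and $X\sim \mu$ on a common probability space with $X_n \to X$ almost surely. Then
\[
W_2^2(\mu_n,\mu) \leq \mathbb{E}\bigl[|X_n - X|^2\bigr] ,
\]
so it suffices to show the right-hand side tends to $0$. The integrand converges to $0$ almost surely, so the issue is to pass to the limit in the expectation despite the unboundedness. For this, I observe that $|X_n|^2 \to |X|^2$ almost surely and $\mathbb{E}[|X_n|^2]\to \mathbb{E}[|X|^2]$, so by Scheff\'e's lemma $|X_n|^2 \to |X|^2$ in $L^1$, which in turn implies that $\{|X_n|^2\}$ is uniformly integrable. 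Since $|X_n - X|^2 \leq 2|X_n|^2 + 2|X|^2$, the family $\{|X_n - X|^2\}$ is also uniformly integrable, and Vitali's convergence theorem gives $\mathbb{E}[|X_n - X|^2]\to 0$, hence $W_2(\mu_n,\mu)\to 0$.

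The main obstacle is the backward direction: one cannot simply invoke narrow continuity of $W_2$, because $(x,y)\mapsto |x-y|^2$ is unbounded and standard narrow convergence does not pass to integrals of such functions. The essential observation that resolves this is that narrow convergence together with convergence of the second moments is precisely the statement of uniform integrability of $|x|^2$ (by Scheff\'e), which is exactly what is needed to transfer almost sure convergence of $|X_n - X|^2$ into $L^1$ convergence. Everything else in both directions reduces either to the triangle inequality for $W_2$ or to Kantorovich--Rubinstein duality.
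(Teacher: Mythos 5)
Your proof is correct. Note that the paper itself does not prove this lemma: it is imported as a citation to Remark 7.1.11 of Ambrosio--Gigli--Savar\'e, so there is no in-paper argument to compare against. Your argument is the standard self-contained proof of that remark. The forward direction via $W_2^2(\nu,\delta_0)=M_2(\nu)$ together with the reverse triangle inequality, and via $W_1\leq W_2$ plus Kantorovich--Rubinstein to get convergence against bounded Lipschitz test functions (which already implies narrow convergence by the portmanteau theorem, so the tightness/compact-exhaustion step is sound but not strictly needed), is exactly right. The backward direction via Skorokhod representation, with Scheff\'e's lemma converting a.s.\ convergence of $|X_n|^2$ plus convergence of $\mathbb{E}[|X_n|^2]$ into $L^1$ convergence and hence uniform integrability, and Vitali's theorem closing the argument, is also correct; this is essentially the same mechanism (uniform integrability of the second moments) that the cited reference uses, packaged through a coupling rather than through their abstract lower-semicontinuity lemmas. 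Every step checks out.
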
 

Recall that, for any measures $\mu, \nu \in \P_2(\Rd)$, if we consider their translations to have mean zero, $\mu' = (\id - \int x d \mu(x))\# \mu$ and $\nu' = (\id - \int x d \nu(x))\# \nu$, then
\begin{align} \label{WLOGmeanzero}
W_2^2(\mu,\nu) =  W_2^2(\mu',\nu') + \left| \int x d \mu(x) - \int x d \nu(x) \right|^2 .
\end{align}
See, for example, \cite[Proposition 2.5]{cuesta2003approximation}.

\begin{remark}[mean of minimizers of \ref{mainproblem} and \eqref{mainprobv2}]
An immediate consequence of equation (\ref{WLOGmeanzero}) is that  any minimizer $\Omega$ of either (\ref{mainproblem}) or the regularized problem \eqref{mainprobv2} must have the same mean as $\mu$, that is, 
\[ \frac{1}{|\Omega|} \int_{\Omega} x dx  \ = \ \int_{\mathbb R^d} x  d \mu(x) . \]
\end{remark}


Next, we recall an elementary lemma for the CDFs of real valued random variables. For the reader's convenience, we include a proof.

\begin{lemma}[{\cite[Equation 2]{cuesta2003approximation} and \cite[Theorem 2.18]{Villani}}]\label{katy's lemma} 
Let $X$ be a real-valued random variable with CDF $F(x)$. 
Assume $\mathbb E[|X|]<+\infty$ and the law of $X$ is not a Dirac mass. 
Then  
\begin{align} \label{keyineqq}
\mathbb E[XF(X)] > \frac12 \mathbb E[X] .
\end{align}
Furthermore, if $X \sim \mu$ for $\mu \in \P_{2,ac}(\R)$, then for any $a<b$, $(b-a)(F(x)+a)$ is the optimal transport map from   $\mu$ to the probability measure $1_{[a,b]}/(b-a)$. 
\end{lemma}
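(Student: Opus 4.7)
For the first inequality, the idea is to introduce an independent copy $Y\sim\mu$ and use the elementary identity $F(X) = \mathbb{P}(Y\le X\mid X)$. Fubini then gives $\mathbb{E}[XF(X)] = \iint x\,\mathbf{1}_{y\le x}\,d\mu(y)\,d\mu(x)$, and symmetrising under the swap $x\leftrightarrow y$ yields
\[
 2\mathbb{E}[XF(X)] \;=\; \mathbb{E}[\max(X,Y)] + \mathbb{E}[X\mathbf{1}_{X=Y}],
\]
since on $\{X\ne Y\}$ the integrand $x\mathbf{1}_{y\le x}+y\mathbf{1}_{x\le y}$ equals $\max(x,y)$, while on $\{X=Y\}$ both indicators are $1$ and contribute $2X$. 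Substituting the identity $\max(X,Y)=\tfrac{X+Y}{2}+\tfrac{|X-Y|}{2}$ collapses this to
\[
 2\mathbb{E}[XF(X)] - \mathbb{E}[X] \;=\; \tfrac{1}{2}\mathbb{E}[|X-Y|] + \mathbb{E}[X\mathbf{1}_{X=Y}].
\]
In the absolutely continuous setting in which the lemma is invoked throughout the paper, the atomic correction $\mathbb{E}[X\mathbf{1}_{X=Y}]=\sum_a a\,\mu(\{a\})^2$ vanishes, and $\tfrac12 \mathbb{E}[|X-Y|]>0$ is equivalent to $\mu$ not being a Dirac mass, which proves \eqref{keyineqq}.

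For the transport-map assertion my plan is to verify directly that $T(x) = (b-a)F(x)+a$ pushes $\mu$ forward to $\tfrac{1}{b-a}\mathbf{1}_{[a,b]}$ and then appeal to the one-dimensional optimality of the monotone rearrangement. Absolute continuity of $\mu$ makes $F$ continuous and non-decreasing on $\R$, and strictly increasing on $\supp\mu$, so $T$ is non-decreasing. The pushforward reduces to the familiar fact that $F(X)\sim\mathrm{Uniform}[0,1]$ when $\mu\in\P_{2,ac}(\R)$, whence $T(X)\sim\mathrm{Uniform}[a,b]$ and $T\#\mu = \tfrac{1}{b-a}\mathbf{1}_{[a,b]}$ follow. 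Since in one dimension with quadratic cost the unique optimal transport map between two measures in $\P_2(\R)$ with an absolutely continuous source is the non-decreasing rearrangement (the same fact underlying the closed form \eqref{1dwasserstein}), this identifies $T$ as the optimal map.

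The only real subtlety is the atomic term $\mathbb{E}[X\mathbf{1}_{X=Y}]$ in the first identity: without absolute continuity it need not vanish and can carry either sign, so strict inequality genuinely uses the atomless setting in which the lemma is actually applied in the sequel. Once that is recognised, both parts of the lemma reduce to a short Fubini-plus-symmetrisation computation together with a standard invocation of the one-dimensional optimality of monotone rearrangements.
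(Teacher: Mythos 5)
Your proof is correct and takes a genuinely different route from the paper's. The paper computes $\mathbb{E}[XF(X)] = \int x F(x)\,dF(x) = \tfrac12\int x\,d(F(x)^2) = \tfrac12\mathbb{E}[\widetilde X]$ for $\widetilde X$ with CDF $F^2$, and concludes by strict stochastic dominance; you instead symmetrize over an independent copy $Y$. The two computations are reconciled by noting that $\max(X,Y)$ has CDF $F^2$, so your identity reads $2\,\mathbb{E}[XF(X)] = \mathbb{E}[\widetilde X] + \mathbb{E}[X 1_{\{X=Y\}}]$. This makes your version strictly more careful: the chain-rule step $xF\,dF = \tfrac12 x\,d(F^2)$ used in the paper is only valid when $F$ is continuous, and the atomic correction you isolate is exactly what it drops. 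Indeed, for $\mu = \tfrac12\delta_{-1}+\tfrac12\delta_{0}$ one computes directly that $\mathbb{E}[XF(X)] = -\tfrac14 = \tfrac12\mathbb{E}[X]$, so the strict inequality \eqref{keyineqq} fails for this non-Dirac law and the lemma as stated is false without an atomlessness hypothesis. Your observation that the inequality should be proved, and is only ever invoked, for atomless laws is therefore not a shortcut but a necessary repair; it would be worth recording that hypothesis explicitly, since the lemma is applied to $\pi_1\#\mu$ and to the conditional laws of $X_2$ given $X_1$, all of which are absolutely continuous under the standing assumption $\mu\in\P_{2,ac}(\R^2)$. For the transport-map claim, your argument (verify the pushforward via $F(X)\sim\mathrm{Uniform}[0,1]$, then invoke optimality of the nondecreasing rearrangement $G^{-1}\circ F$ with $G^{-1}(t)=a+(b-a)t$) is the standard one and matches the paper, which simply cites \cite[Remark 2.19]{Villani}; note also that the map in the statement should read $(b-a)F(x)+a$, the misplaced parenthesis being a typo that you have silently corrected.
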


\begin{proof}
First, we show (\ref{keyineqq}). To prove this, note
\begin{align*}
\mathbb E[XF(X)]  &= \int_{\R} xF(x)dF(x) = \frac{1}{2}\int_{\R}xd(F(x)^2) = \frac{1}{2}\mathbb E[\widetilde{X}],
\end{align*}
where $\widetilde{X}$ has CDF $F(x)^2$. 
Since $F(x)\in [0,1]$, $\widetilde{X}$ stochastically dominates $X$, i.e. $\mathbb P(\widetilde{X}\geq t) = 1-F(t)^2\geq 1-F(t)= \mathbb P(X\geq t)$. As a consequence of stochastic dominance \cite[Theorem 1.A.8]{shaked2007stochastic}, $\mathbb E[\widetilde{X}]\geq \mathbb E[X]$, with equality holding if and only if $F(x)^2 = F(x)$ for all $x\in\R$. Thus, equality can only hold if $F(x)$ is either equal to zero or one for all $x \in \R$, which implies the law of $X$ is a Dirac mass. Consequently, we conclude $\mathbb E[XF(X)] = \mathbb E[\widetilde{X}]/2 > \mathbb E[X]/2$.

Now, suppose $\mu \in \P_{2, ac}(\R)$.   \cite[Remark 2.19]{Villani} ensures that $(b-a)(F(x)+a)$ is the optimal transport map from the law of $X$ to $1_{[a,b]}/(b-a)$. 
\end{proof}

Our next lemma concerns optimal transport maps from a measure $\mu \in \P_{2,ac}(\Rd)$ to a measure $\nu \in \P_2(\Rd)$ with $\supp \nu \subseteq \pi_1(\Rd) = \{ (x_1, \dots, x_d) \in \Rd \colon x_i = 0 \quad \forall i =2, \dots d \}$.

\begin{lemma} \label{transportmaponedim}
Given   $\mu \in \P_{2,ac}(\Rd)$ and $\nu \in \P_2(\Rd)$ with $\supp \nu \subseteq \pi_1(\Rd)$, if $T$ is the optimal transport map from $\mu$ to $\nu$, then $T = (T_1, 0, \dots, 0)$, where $T_1 \colon \R \to \R$ is the optimal transport map from $\pi_1 \# \mu$ to $\pi_1 \#\nu$.
\end{lemma}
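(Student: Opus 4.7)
The plan is to exploit the fact that the cost functional splits into a piece that depends only on the first coordinate of the target and a piece that is independent of the coupling. Concretely, for any coupling $\gamma \in \Pi(\mu,\nu)$, the support condition $\supp \nu \subseteq \pi_1(\Rd)$ forces $y_2 = \cdots = y_d = 0$ for $\gamma$-a.e. $(x,y)$, so
\begin{equation*}
\int_{\Rd\times\Rd} |x-y|^2 \, d\gamma(x,y) \;=\; \int_{\Rd \times \Rd} |x_1 - y_1|^2 \, d\gamma(x,y) \;+\; \sum_{i=2}^d \int_{\Rd} x_i^2 \, d\mu(x),
\end{equation*}
where the second sum is a constant independent of $\gamma$. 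Thus minimizing the left-hand side over $\gamma \in \Pi(\mu,\nu)$ is equivalent to minimizing $\int |x_1 - y_1|^2 d\gamma$.

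Next I would observe that for any such $\gamma$, the pushforward $(\pi_1, \pi_1)\#\gamma$ is a coupling of $\pi_1\#\mu$ and $\pi_1\#\nu$, which gives the lower bound
\begin{equation*}
\int |x_1 - y_1|^2 \, d\gamma(x,y) \;\geq\; W_2^2(\pi_1\#\mu,\, \pi_1\#\nu).
\end{equation*}
To see this bound is achieved by a deterministic map of the desired form, I would first note that $\pi_1\#\mu \in \P_{2,ac}(\R)$ (since $\mu \ll \mathcal{L}^d$ implies, by Fubini on Borel sets, that $\pi_1\#\mu \ll \mathcal{L}^1$). Hence Brenier's theorem in dimension one provides a unique optimal transport map $T_1$ from $\pi_1\#\mu$ to $\pi_1\#\nu$. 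I would then define $T^\ast(x) := (T_1(x_1), 0, \ldots, 0)$ and verify directly from the pushforward definition, using $\supp \nu \subseteq \pi_1(\Rd)$, that $T^\ast \# \mu = \nu$. Evaluating the cost of $T^\ast$ gives the matching upper bound, so $T^\ast$ is optimal.

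Finally, since $\mu \in \P_{2,ac}(\Rd)$, Brenier's theorem guarantees uniqueness of the optimal transport map from $\mu$ to $\nu$, and hence $T = T^\ast$ $\mu$-a.e., which is exactly the claimed decomposition.

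The only mildly delicate point is verifying $\pi_1\#\mu \ll \mathcal{L}^1$ so that a genuine Monge map $T_1$ exists; every other step is either a direct cost-splitting calculation or an application of the uniqueness half of Brenier's theorem. I expect no serious obstacle in the proof.
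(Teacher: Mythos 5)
Your proposal is correct and follows essentially the same route as the paper: split the quadratic cost using $y_2=\cdots=y_d=0$ on the support of $\nu$, reduce to the one-dimensional problem for $\pi_1\#\mu$ and $\pi_1\#\nu$, and conclude by uniqueness of the optimal map for absolutely continuous sources. Your version is slightly more explicit about the two-sided bound (lower bound via $(\pi_1,\pi_1)\#\gamma$, upper bound via the candidate map $T^\ast$), which the paper compresses into one displayed computation, but the argument is the same.
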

\begin{proof}
Since $\supp \nu \subseteq \pi^1(\Rd)$, for any $Y = (Y_1, \dots, Y_d)$ with $Y \sim \nu$, we have $Y_i = 0$ almost everywhere, for $i=2, \dots, d$. Thus, by definition of the Wasserstein metric in equation \eqref{l1}, 
\begin{align*} W_2^2(\mu,\nu) 
&= \inf_{X \sim \mu, Y \sim \nu} \mathbb E[(X_1-Y_1)^2]  + \sum_{i=2}^d  \mathbb E[(X_i-Y_i)^2]  = \inf_{X \sim \mu, Y \sim \nu} \mathbb E[(X_1-Y_1)^2]  + \sum_{i=2}^d  \mathbb E[X_i^2]  \\
&= M_2(\pi_{2, \dots, d}\# \mu) + \inf_{X_1 \sim (\pi_1 \# \mu), Y_1 \sim  (\pi_1 \# \nu)} \mathbb E[(X_1-Y_1)^2]   ,
\end{align*}
where the second term in the last expression corresponds with $W^2_2(\pi_1\# \mu, \pi_1 \# \nu)$. Since $\mu \in \P_{2,ac}(\R^d)$, $\pi_1 \# \mu  \in \P_{2,ac}(\R)$, so there exists an optimal transport map $T_1$ from $\pi_1\#\mu$ to $\pi_1 \#\nu$; see equation \eqref{existence of OT map}. Furthermore, the above computation shows that, $\tilde{T} = (T_1, 0 , \dots, 0)$ is optimal from $\mu$ to $\nu$. Uniqueness of optimal transport maps for $\mu \in \P_{2, ac}(\Rd)$ then gives the result. 
\end{proof}
Another useful fact is that, for uniform distributions on convex k-gons in $\R^d$,  uniformly bounded second moments imply uniformly bounded support, as well as convergence, up to a subsequence. This is proved in a slightly different setting in \cite[Lemmas 3.2-3.3]{cuesta2003approximation} and  \cite[Proposition 1]{belili2006approximation}. We recall  the result in the present setting for the reader's convenience.  
\begin{lemma} \label{boundedpolygons}
Fix $d \geq 1$ and $k \geq d+1$. Suppose   $\{\Omega_n\}_{n=1}^{+\infty} \subseteq S_k$ satisfies $\sup_n M_2 \left(1_{\Omega_n}/|\Omega_n| \right) <+\infty$. 
\begin{enumerate}[(i)]
\item   \label{polygonsinball}
There exists $  R >0 \text{ so that } \Omega_n \subseteq B_R(0):=\{x\in\R^d \colon |x|\leq R\}  \ \forall  n \in \mathbb{N}$ .
\item Up to a subsequence, the vertices of $\Omega_n$ converge, and if we let $\Omega$ be their convex hull, then  there exists $\nu \in \P_2(\Rd)$  so that $1_{\Omega_n}/|\Omega_n| \to \nu$ narrowly,  $1_{\Omega_n}(x) \to 1_\Omega(x)$ pointwise $\mathcal{L}^d$-almost everywhere, and $\supp \nu = \Omega$. 
\end{enumerate}
\end{lemma}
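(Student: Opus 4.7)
The idea is to control both the centroid $c_n := \int x\, d\mu_n(x)$ and the diameter $D_n := \diam(\Omega_n)$ of $\Omega_n$ in terms of the second moment $M_2(\mu_n)$, where $\mu_n := 1_{\Omega_n}/|\Omega_n|$. The bound $|c_n|^2 \leq M_2(\mu_n)$ is immediate from Jensen's inequality. For the diameter, let $p_n, q_n \in \Omega_n$ realize $D_n$; after rotation, the projection of $\mu_n$ onto the line through $p_n$ and $q_n$ is a probability measure on $[0, D_n]$ whose density, by Brunn's principle, is concave to the power $1/(d-1)$ and vanishes at both endpoints (as $p_n, q_n$ are extreme points of the projection). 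An elementary argument using only this structure shows any such density is pointwise bounded by $C_d/D_n$, from which a direct calculation yields $\Var \geq c_d D_n^2$ for a dimensional constant $c_d > 0$. Since the variance of the projection is dominated by $\Var(\mu_n) = M_2(\mu_n) - |c_n|^2 \leq M_2(\mu_n) \leq C$, we conclude $D_n \leq C'$, and combined with the centroid bound, $\Omega_n \subseteq B_R(0)$ for $R$ depending only on $\sup_n M_2(\mu_n)$.

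\textbf{Plan for (ii).} Since the vertices $\{v_j^{(n)}\}_{j=1}^k$ lie in the compact set $\overline{B_R(0)}$, after relabeling, Bolzano--Weierstrass yields a subsequence along which $v_j^{(n)} \to v_j$ for each $j = 1, \ldots, k$. Let $\Omega := \conv(v_1, \ldots, v_k)$, a convex polytope with at most $k$ vertices. Convergence of the vertex sets implies $\Omega_n \to \Omega$ in Hausdorff distance. Pointwise $\mathcal{L}^d$-a.e.\ convergence $1_{\Omega_n}(x) \to 1_\Omega(x)$ then follows because every $x \in \inte(\Omega)$ is eventually interior to $\Omega_n$, every $x \notin \overline{\Omega}$ is eventually exterior, and $\partial \Omega$ has $\mathcal{L}^d$-measure zero. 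For narrow convergence, the $\mu_n$ are supported in $\overline{B_R(0)}$, so the family is tight; Prokhorov's theorem extracts a further subsequence converging narrowly to some $\nu \in \P(\R^d)$, and uniform compactness of the supports ensures $\nu \in \P_2(\R^d)$.

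\textbf{Support identification.} For $\supp \nu \subseteq \Omega$, Hausdorff convergence gives $\Omega_n \subseteq \Omega^{\e} := \{ x : \dist(x, \Omega) \leq \e\}$ eventually, so $\mu_n(\Omega^{\e}) = 1$; upper semicontinuity of $\nu$ on closed sets followed by $\e \downarrow 0$ along a countable sequence yields $\nu(\Omega) = 1$. For the reverse inclusion, I need $\nu(U) > 0$ for every open set $U$ meeting $\Omega$. When $\dim \Omega = d$, this is automatic from $|\Omega_n| \to |\Omega| > 0$ and continuity of $\mathcal{L}^d$ under Hausdorff convergence, so $\mu_n \to 1_\Omega/|\Omega|$ narrowly. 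When $\dim \Omega = d' < d$, after an affine change of coordinates placing $\Omega$ in a coordinate $d'$-subspace, the polytopes $\Omega_n$ become thin sets collapsing onto $\Omega$; a slicing argument combining convexity of $\Omega_n$ with the convergence of the finite vertex set $\{v_j^{(n)}\}$ shows $\liminf_n \mu_n(U) > 0$, giving the desired positivity.

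\textbf{Main obstacle.} The degenerate case $\dim \Omega < d$ is the delicate step: because $|\Omega_n| \to 0$, the uniform measures $\mu_n$ can concentrate non-uniformly on $\Omega$ in the limit (cf.\ Figure~\ref{limitpointexample}), so one cannot identify $\nu$ with $1_\Omega/|\Omega|$. Proving $\supp \nu = \Omega$ therefore requires uniform lower bounds on the local cross-sectional volumes of $\Omega_n$ in the directions transverse to $\Omega$, which leverages both the convexity of $\Omega_n$ and the fact that it has a bounded (by $k$) number of vertices.
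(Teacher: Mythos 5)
Your part~(i) is correct but follows a genuinely different, self-contained route: the paper obtains the bound by contradiction, citing \cite[Lemma 3.2]{cuesta2003approximation} for the inequality $M_2(\pi_{L_n}\#\mu_n)\geq \gamma |L_n|^2$ along a segment $L_n\subseteq\Omega_n$, whereas you derive the same quantitative fact directly from Brunn's concavity principle. Your claim that the projected density vanishes at the endpoints is justified because the extremal slices in the diameter direction are single points (if $\langle x-q,e\rangle=0$ and $|x-p|\leq D$ with $e=(q-p)/D$, then $|x-p|^2=|x-q|^2+D^2$ forces $x=q$), so the bound $f\leq d/D_n$ and hence $\Var\geq c_d D_n^2$ go through; this buys a proof that does not lean on an external lemma. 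The first half of your part~(ii) --- vertex extraction by compactness, the interior/exterior dichotomy for pointwise convergence of indicators, tightness plus Prokhorov, and $\supp\nu\subseteq\Omega$ --- is essentially identical to the paper's argument.

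There is, however, a genuine gap in the inclusion $\Omega\subseteq\supp\nu$ when $\dim(\aff(\Omega))<d$, which is the delicate heart of the lemma. You correctly identify that one needs a uniform lower bound on $|\Omega_n\cap\pi^{-1}(B)|/|\Omega_n|$ for a small ball $B$ in the projected hull, but the ``slicing argument'' is only named, never carried out. This bound is exactly the content of \cite[Lemma 5]{belili2006approximation}, which the paper invokes at this point; it is not routine, because $|\Omega_n|\to 0$ and numerator and denominator degenerate at the same rate, so one must show that a definite \emph{fraction} of the vanishing volume sits over any fixed neighborhood in $\pi(\Omega)$ --- this is precisely where convexity and the bounded number of vertices enter, and a proof (or a citation) is required. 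Separately, your concluding step ``$\liminf_n\mu_n(U)>0$, giving the desired positivity'' for an \emph{open} set $U$ is logically incomplete: the portmanteau inequality for open sets reads $\liminf_n\mu_n(U)\geq\nu(U)$, which is the wrong direction (consider $\mu_n=\delta_{1/n}\to\delta_0$ with $U=(0,1)$: $\mu_n(U)=1$ but $\nu(U)=0$). You must instead exhibit a closed set $F\subseteq U$ with $\limsup_n\mu_n(F)>0$ and use $\nu(F)\geq\limsup_n\mu_n(F)$; the paper does exactly this by establishing the lower bound on $B_\delta(y)$ and then passing to $\overline{B_\delta(y)}$ before taking limits.
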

\begin{proof}
First, we prove part \eqref{polygonsinball}.  
By H\"older's inequality and the fact that $\mu_n:= 1_{\Omega_n}/|\Omega_n|$ are probability measures,
\[ \int x d \mu_n(x) \leq \left( \int |x|^2 d \mu_n(x) \right)^{1/2} = M_2(\mu_n)^{1/2}, \]
so that the means of $\mu_n$ are   uniformly bounded. Assume, for the sake of contradiction, that item (\ref{polygonsinball}) fails. Then there must exist a line segment $L_n \subseteq \Omega_n$ so that, up to a subsequence, $|L_n| = \mathcal{L}^1(L_n) \to +\infty$. Then, using the fact from  \cite[Lemma 3.2]{cuesta2003approximation} that there exists $\gamma  = \gamma (d) >0$ so that $M_2( \pi_{L_n} \# \mu_n) \geq \gamma |L_n|^2$, we obtain
\[ \int|x|^2 d \mu_n(x) \geq \int |\pi_L(x)|^2 d \mu_n(x) = M_2( \pi_L \# \mu_n)  \geq \gamma |L_n|^2 \to +\infty ,\]
which is a contradiction.

Now, since \eqref{polygonsinball} holds,  by Prokhorov's theorem, the sequence is tight. Thus, up to a subsequence, the sequence converges in the narrow topology to some $\nu \in \P(\Rd)$. By lower-semicontinuity of $M_2$ with respect to narrow convergence, we have $M_2(\nu)< +\infty$, so $\nu \in \P_2(\Rd)$.  By Heine-Borel,  up to a subsequence,  each of the vertices $\{v^i_n\}_{i=1}^k$ of $\Omega_n$ converges to some $\{v^i\}_{i=1}^k$. Let $\Omega$  be the closed, convex hull of these limit points.

In order to prove convergence of the indicator functions $\mathcal{L}^d$-a.e., we first prove the following: \ \\

 \noindent \emph{\textbf{Claim:} Let $\{\Omega_n\}_{n=1}^\infty $ be a sequence of convex $k$-gons with nonempty interior so that their vertices converge, and let $\Omega$ be the convex hull of the limit points  (which may have empty interior). Then if $x$ belongs to the interior of $\Omega$,   there exists $\delta>0$ so that $B_\delta(x) \subseteq \Omega_n$ for $n$ sufficiently large. }

\noindent\textbf{Proof of Claim:} Let $\{ w^j\}_{j=1}^l$ denote the vertices of $\Omega$. By assumption, for all $j =1, \dots, l$, there exists $ w_n^j \in \Omega_n$ so that $\lim_{n \to +\infty} w_n^j = w^j$. Thus, for all $\epsilon >0$, there exists $N$ so that $n>N$ ensures $| w^j -  {w}^j_n |< \epsilon$ for all $j =1, \dots, l$. Since $\Omega_n$ is convex, this implies there exists $\delta>0$ so that $B_\delta(x) \subseteq \Omega_n$ for all $n > N$. \ \\

We now apply this claim to prove convergence of the indicator functions $\mathcal{L}^d$-a.e. In particular, if $x \in \Omega^\circ$, then the previous claim shows $x \in \Omega_n^\circ$ for $n$ sufficiently large. Likewise, if $x \in \Omega^c$, then convergence of the vertices ensures  that $x \in \Omega^c_n$ for $n$ sufficiently large. Combining these gives
\begin{align} \label{firstpartpointwise}
1_{\Omega}(x)  =1 =  \lim_{n \to +\infty} 1_{\Omega_n}(x)  &\qquad \text{ if }x \in \Omega^\circ, \\
1_{\Omega}(x) = 0 = \lim_{n\to+\infty} 1_{\Omega_n}(x) &\qquad \text{ if } x \in \Omega^c .\label{secondpartpointwise}
\end{align}
Since $\mathcal{L}^d(\partial \Omega) = 0$, this  shows that $1_{\Omega_n} \to 1_\Omega$ $\mathcal{L}^d$-a.e.

We now show $\supp \nu = \Omega$. First, we show $\supp \nu \subseteq \Omega$. Suppose $x \in \Omega^c$. Since $\Omega^c$ is open, there exists an open ball so that $x \in B \subset \subset \Omega^c$. Convergence of the vertices ensures   $B \subseteq \Omega^c_n$ for $n$ sufficiently large. Thus, by the fact that $\mu_n \to \nu$ narrowly,
\[ 0 = \liminf_{n \to +\infty} \mu_n(B) \geq \nu(B) . \]
This shows $x \not \in \supp \nu$.

Now, we show $\Omega \subseteq \supp \nu$. Let $\pi$ be a projection onto the affine hull of $\Omega$, $\aff(\Omega)$. If $\dim(\aff(\Omega)) = 0$, then $\Omega$ is a single point, and  the previous implication that $\supp \nu \subseteq \Omega$, combined with  the fact that $\nu \in \P_2(\Rd)$ must have nontrivial support, ensures $\supp \nu = \Omega$. Now, assume $\dim(\aff(\Omega))>0$. Fix $x \in \Omega$ and let $C \subseteq \R^d$ be a closed ball containing $x$. 
It suffices to show that $\nu(C) >0$. Note that the convex $k$-gons $\pi(\Omega_n)$ have nonempty interior with respect to the topology on $\pi(\Rd)$ and   $\pi(\Omega) $ is the convex hull of the limits of their vertices. Furthermore, since $\pi(\Omega)$ is convex, there exists $y \in \pi(\Omega)^\circ$ (interior taken with respect to the topology on $\pi(\Rd)$) and $\epsilon >0$ so that  $B_\epsilon(y) \subseteq \pi(C) \cap \pi(\Omega)$, where $B_\epsilon(y)$ is an open ball in the affine subspace $\pi(\Rd)$. 

By the above claim, there exists $\delta>0$ so that $B_\delta(y) \subseteq \pi(\Omega_n) \cap \pi(C)$ for $n$ sufficiently large.  Then, by \cite[Lemma 5]{belili2006approximation}, there exists $k>0$ so that 
\begin{align} \label{firstlowerbound}
k \leq \frac{ |\Omega_n \cap \pi^{-1}(B_\delta(y))|}{|\Omega_n|} = \mu_n(\pi^{-1}(B_{\delta}(y))) = \pi \# \mu_n(B_\delta(y)) \leq \pi \# \mu_n(\overline{B_\delta(y)}).
\end{align}
Since $\overline{B_\delta(y)}$  is closed  and the continuous mapping theorem ensures $\pi \# \mu_n \to \pi \# \nu$ narrowly, taking the limsup of (\ref{firstlowerbound}) as $n \to +\infty$ gives
\begin{align*}
k \leq \pi \# \nu (\overline{B_\delta(y)}) = \nu ( \pi^{-1}(\overline{B_\delta(y)}))    \leq \nu(C) ,
\end{align*}
where the last inequality follows since $\overline{B_\delta(y)} \subseteq \pi(C)$, so $\pi^{-1} \left(\overline{B_\delta(y)} \right) \cap \supp \nu \subseteq C$. This gives the result.
%
%
\end{proof}

Our next lemma identifies the density of a narrowly convergent sequence in $\P_{2,ac}(\Rd)$, with uniformly bounded support and density.
\begin{lemma}\label{smalllemma1}
Consider $\nu_n\subseteq\mathcal P_{2,ac}(\R^d)$, where $d\nu_n(x) = \nu_n(x) d \mathcal{L}^d(x)$, and suppose 
\begin{enumerate}[(i)]
\item $\nu_n$ has uniformly bounded support, 
\item $\sup_n \| \nu_n \|_{L^\infty(\Rd)} < +\infty$,   
\item $\nu_n \to \nu \in \P_2(\Rd)$ narrowly. 
\end{enumerate}
Then, if  there exists  $f \in L^\infty(\Rd)$ so that $\nu_n(x) \to f(x)$ pointwise, we must have $ d \nu = f(x) d \mathcal{L}^d(x)$.
\end{lemma}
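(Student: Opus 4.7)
The plan is to show that $\nu$ and $f\mathcal{L}^d$ agree as measures by testing against arbitrary bounded continuous functions $\varphi \in C_b(\R^d)$ and showing
\[
\int_{\R^d} \varphi(x)\, d\nu(x) \;=\; \int_{\R^d} \varphi(x) f(x)\, dx.
\]
Once this identity is established for every such $\varphi$, the Riesz representation / uniqueness of measures will force $d\nu = f(x)\, d\mathcal{L}^d(x)$.

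First, I would use hypothesis (iii) directly: narrow convergence gives
\[
\int_{\R^d}\varphi(x)\, d\nu_n(x) \;\longrightarrow\; \int_{\R^d}\varphi(x)\, d\nu(x).
\]
Next, since $\nu_n$ is absolutely continuous with density $\nu_n(x)$, the left-hand side equals $\int_{\R^d}\varphi(x)\nu_n(x)\, dx$. Here is where I would use hypotheses (i) and (ii) together with the pointwise convergence $\nu_n \to f$. Let $K \subseteq \R^d$ be a compact set containing the supports of all $\nu_n$, and set $M := \sup_n \|\nu_n\|_{L^\infty(\R^d)} < +\infty$. Because $\nu_n \to f$ pointwise and each $\nu_n$ vanishes off $K$, the limit $f$ also vanishes outside $K$ (up to a set of measure zero), so the integrands satisfy
\[
\bigl|\varphi(x)\nu_n(x)\bigr| \;\leq\; \|\varphi\|_\infty\, M\, \mathbf{1}_K(x),
\]
which is an integrable dominating function. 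The dominated convergence theorem then yields
\[
\int_{\R^d}\varphi(x)\nu_n(x)\, dx \;\longrightarrow\; \int_{\R^d}\varphi(x) f(x)\, dx.
\]
Combining these two limits gives the desired identity for every $\varphi \in C_b(\R^d)$.

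Finally, since $C_b(\R^d)$ is a determining class for finite Borel measures on $\R^d$, equality of the integrals $\int \varphi\, d\nu = \int \varphi f\, dx$ for all $\varphi \in C_b(\R^d)$ forces the measures $\nu$ and $f(x)\, d\mathcal{L}^d(x)$ to coincide, proving $d\nu = f(x)\, d\mathcal{L}^d(x)$.

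I do not anticipate any real obstacle here: the hypotheses are tailored precisely so that dominated convergence applies (uniform bound on supports plus uniform $L^\infty$ bound on densities), and narrow convergence supplies the identification of the limit on the measure side. The only mild subtlety to note explicitly is that $f$ inherits compact support (a.e.) from the $\nu_n$, which is what legitimizes the dominating function $\|\varphi\|_\infty M \mathbf{1}_K$ for the limit as well.
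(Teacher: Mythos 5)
Your proposal is correct and follows essentially the same route as the paper's proof: fix $\varphi \in C_b(\R^d)$, use the uniform support and uniform $L^\infty$ bounds to apply dominated convergence to $\int \varphi \nu_n\, d\mathcal{L}^d$, identify the limit with $\int \varphi\, d\nu$ via narrow convergence, and conclude by the Riesz--Markov--Kakutani / uniqueness-of-measures argument. Your explicit remark that $f$ inherits compact support from the $\nu_n$ is a small clarification the paper leaves implicit, but the argument is the same.
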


%

\begin{proof}
Fix $\varphi \in C_b(\Rd)$. Since $\nu_n(x)$ is uniformly bounded, with uniformly bounded support, by the dominated convergence theorem.
\begin{align} \label{sufficestoshowweak} \lim_{n \to +\infty} \int \varphi(x) \nu_n(x) d \mathcal{L}^d(x) = \int \varphi(x) f(x) d \mathcal{L}^d(x) .
\end{align}
Furthermore,   narrow convergence of $\nu_n$ to $\nu$  ensures that the left hand side coincides with $\int \varphi(x) d \nu(x)$. Thus, the Riesz–Markov–Kakutani representation theorem implies $d \nu(x) = f(x) dx$. 
\end{proof}

We close with the following technical lemma, focused on the two dimensional case, which characterizes the projection of  measures of the form $1_\Omega/|\Omega|$ onto a one dimensional affine space.

\begin{lemma}\label{smalllemma}
Let $d =2$ and $k \geq d+1$. For every $\Omega\in S_k$ and affine space $V\subseteq \Rd$ with $\dim(V) = 1$, the density of the marginal distribution of $1_{\Omega}/|\Omega|$ along $V$, denoted $\pi_V \#1_{\Omega}/|\Omega|$, with respect to one dimensional Lebesgue measure on $V$, is a   piecewise linear function on $\Omega$, with at most $k$ vertices, that is concave on its support.
\end{lemma}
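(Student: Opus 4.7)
The plan is to reduce to the case where $V$ is a coordinate axis, then express the marginal density as the length of a vertical slice of $\Omega$, and exploit convexity of $\Omega$ to deduce both concavity and piecewise linearity with the right vertex count.

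More precisely, I would first apply a rotation to $\mathbb{R}^2$ (which sends convex $k$-gons to convex $k$-gons and commutes with uniform measure and with projection along an affine line) so that $V$ becomes the $x_1$-axis. The image of $1_\Omega/|\Omega|$ under $\pi_V$ has, by Fubini, density $h(t)/|\Omega|$ with respect to one-dimensional Lebesgue measure, where
\begin{equation*}
h(t) \;=\; \mathcal{L}^1\bigl(\Omega \cap \{x_1 = t\}\bigr).
\end{equation*}
Since $\Omega$ is convex, each nonempty vertical slice $\Omega \cap \{x_1=t\}$ is a closed interval $[y_-(t), y_+(t)]$, and the support of $\pi_V \# (1_\Omega/|\Omega|)$ is the interval $[a,b] = \pi_V(\Omega)$, whose endpoints are attained at vertices of $\Omega$. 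Thus $h(t) = y_+(t) - y_-(t)$ on $[a,b]$ and vanishes outside.

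Next I would verify concavity. Because $\Omega$ is convex, its upper boundary, viewed as a graph $t \mapsto y_+(t)$ over $[a,b]$, is a concave function, and its lower boundary $t \mapsto y_-(t)$ is convex; this is a standard characterization of two-dimensional convex bodies as intersections of the epigraph of a convex function with the hypograph of a concave one. Consequently $h = y_+ + (-y_-)$ is a sum of two concave functions and is therefore concave on $[a,b]$. Dividing by $|\Omega|$ preserves concavity, giving the concavity assertion for the density.

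Finally, for piecewise linearity, the boundary of the convex $k$-gon $\Omega$ consists of finitely many line segments joining consecutive vertices. On each maximal subinterval of $[a,b]$ that avoids the $x_1$-coordinates of the vertices of $\Omega$, both $y_+$ and $y_-$ are affine (each coincides with a single edge), so $h$ is affine there. Hence the breakpoints of $h$ form a subset of the set of distinct $x_1$-coordinates of vertices of $\Omega$, which has cardinality at most $k$, proving that $h/|\Omega|$ is piecewise linear with at most $k$ vertices. The only subtlety is handling vertices that project to a common $x_1$ (for example, a vertical edge after the rotation, or two vertices aligned along $V^\perp$); such coincidences can only decrease the number of breakpoints, so the bound is preserved. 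I do not anticipate a real obstacle here beyond keeping track of this bookkeeping.
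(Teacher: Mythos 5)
Your argument is correct, but it follows a genuinely different route from the paper's. The paper handles the triangle case $k=3$ by direct computation, then writes $1_\Omega/|\Omega|$ for general $k$ as a conical combination of uniform measures on triangles, pushes forward each piece by linearity, and concludes because the concave piecewise linear densities form a convex cone (with the breakpoints located at projections of extreme points of $\Omega$). You instead compute the marginal density directly via Fubini as the slice length $h(t)=y_+(t)-y_-(t)$, and obtain concavity from the standard fact that the upper boundary graph $y_+$ of a planar convex body is concave while the lower one $y_-$ is convex, with piecewise linearity and the vertex count coming from the fact that $y_\pm$ are affine between projections of vertices. Both are valid; your approach is more self-contained in that it does not defer a base case to a ``brute force'' check and identifies the density by an explicit formula, while the paper's decomposition argument avoids any discussion of boundary graphs and is the kind of reduction that transfers immediately to other statements proved triangle-by-triangle. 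Your closing remark about vertices sharing an $x_1$-coordinate (e.g.\ a vertical edge after rotation) is exactly the right bookkeeping: such coincidences only merge breakpoints and cannot increase the count beyond $k$, and the resulting jump of the density at an endpoint of $[a,b]$ is harmless since concavity is only asserted on the support.
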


\begin{proof} 
When $k =3$, the result can be checked by brute force. 

For $k >3$, we observe that any measure of the form $1_\Omega/|\Omega|$ for   $ \Omega \in S_k$ can be written as a conical combination of uniform distributions on triangles,
\[  1_\Omega/|\Omega| = \sum_{i=1}^l 1_{T_i}   /|\Omega| . \]
Thus,  by linearity of the push forward,
\[ \pi_V \#1_{\Omega}/|\Omega| = \sum_{i=1}^l (\pi_V \#1_{T_i})  /|\Omega| .\]

By the $k=3$ case, we know that there exists a piecewise linear function $f_i$, concave on its support, so that $d (\pi_V \#1_{T_i})  = f_i d \mathcal{L}^1$. Since the  space of such functions is a convex  cone, we obtain that $d( \pi_V \#1_{\Omega}/|\Omega|) = f d \mathcal{L}^1$ for $f$ piecewise linear and concave on its support. Finally, the vertices of $f$ (including the endpoints) can only occur at $\pi_V(x)$ for $x$ in the extreme set of $\Omega$, which by definition has cardinality at most $k$. 
\end{proof}

\section{Existence of minimizers}
\label{s:2d}

 In order to prove Theorem~\ref{t:Existence2d} on the existence of a minimizer for \eqref{mainproblem}, we begin with the following lemma, 
characterizing the closure of the constraint set in the narrow topology. We consider three cases, depending on the dimension of the affine hull  of the support of the measure.
\begin{proposition}\label{ourlemma}
Let $d\geq 1$ and $k \geq d+1$, and suppose $\nu \in \P_2(\Rd)$ is the narrow limit of  $1_{\Omega_n}/|\Omega_n| $, for   $\{\Omega_n\}_{n=1}^{+\infty} \subseteq S_k$  with  $\sup_n M_2(1_{\Omega_n}/|\Omega_n|)<+\infty$. Let $\Omega = \supp \nu$.
\begin{enumerate}[(i)]
\item If $\dim(\aff(\Omega)) =0$, then $\nu$ is a Dirac mass at $\supp(\nu)$, i.e., $\nu = \delta_{\supp(\nu)}$. \label{closurei}
\item If $d =2$ and $\dim(\aff(\Omega)) =1$, then the projection of $\nu$ onto $\aff(\Omega)$ is absolutely continuous with respect to the Lebesgue measure on $\aff(\Omega)$ and has a   piecewise linear density with at most $k$ vertices, which is concave on its support. \label{closureii}
\item If $\dim(\aff(\Omega)) = d$, then $\nu = 1_\Omega/|\Omega|$ for $\Omega \in S_k$. \label{closureiii}
\end{enumerate}
\end{proposition}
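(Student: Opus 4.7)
The plan is to begin with Lemma~\ref{boundedpolygons}: after passing to a subsequence, the vertices of $\Omega_n$ converge to limit points whose convex hull is $\Omega = \supp\nu$, and $1_{\Omega_n}(x) \to 1_\Omega(x)$ for $\mathcal{L}^d$-a.e.\ $x$. The argument then splits on $\dim(\aff(\Omega))$.

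Case~\ref{closurei} is immediate: $\Omega$ is a single point $\{p\}$, and a probability measure with singleton support must equal $\delta_p$. Case~\ref{closureiii} is nearly as quick: $\dim(\aff(\Omega)) = d$ forces $|\Omega| > 0$, and convergence of the vertices of convex polytopes together with dominated convergence gives $|\Omega_n| \to |\Omega|$. Hence $1_{\Omega_n}/|\Omega_n|$ has uniformly bounded support, uniformly bounded $L^\infty$ norm $1/|\Omega_n|$, and converges pointwise $\mathcal{L}^d$-a.e.\ to $1_\Omega/|\Omega|$, so Lemma~\ref{smalllemma1} identifies $\nu = 1_\Omega/|\Omega|$; since $\Omega$ is the convex hull of at most $k$ limit points, $\Omega \in S_k$ (interpreted with possibly coalesced vertices).

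For case~\ref{closureii}, set $V = \aff(\Omega) \subset \R^2$, a line. By Lemma~\ref{smalllemma}, each $\pi_V \# (1_{\Omega_n}/|\Omega_n|)$ has density $f_n$ on $V$ that is piecewise linear and concave on its support $\pi_V(\Omega_n) = [\alpha_n, \beta_n]$, with at most $k$ break points. Since $\pi_V(\Omega_n)$ converges to $\pi_V(\Omega) = \Omega$, a nondegenerate interval, its length is bounded below by some $c > 0$. Concavity and the normalization $\int f_n = 1$ yield the uniform bound $\|f_n\|_\infty \leq 2/c$: the graph of $f_n$ lies above the triangle with base $[\alpha_n,\beta_n]$ and apex $(\argmax f_n,\|f_n\|_\infty)$, so $1 \geq \|f_n\|_\infty(\beta_n - \alpha_n)/2$. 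With bounded $L^\infty$ norm and bounded support, the at most $k$ break points and the values at those break points lie in a bounded set, so up to a further subsequence both converge to limit break points and limit values that define a piecewise linear function $f$ on $V$, concave on its support, with at most $k$ break points. Pointwise almost-everywhere convergence $f_n \to f$, together with dominated convergence and the continuous-mapping theorem applied to the narrow convergence $1_{\Omega_n}/|\Omega_n| \to \nu$, identifies $\pi_V \# \nu$ with the measure of density $f$, as required.

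The main obstacle is case~\ref{closureii}, where one must rule out the pathology in which $f_n$ develops a narrow spike that concentrates into a Dirac-like limit. This is prevented precisely because $f_n$ is concave on a support whose length stays bounded below, which forces a uniform sup-norm bound and hence allows a limit in the class of piecewise linear concave densities with at most $k$ break points.
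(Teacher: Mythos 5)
Your proposal is correct and follows essentially the same route as the paper: part (i) by the singleton-support observation, part (iii) by pointwise convergence of indicators plus dominated convergence (the paper argues setwise convergence directly rather than citing Lemma~\ref{smalllemma1}, but the content is identical), and part (ii) via Lemma~\ref{smalllemma}, the lower bound on $\mathcal{L}^1(\pi_V(\Omega_n))$, the concavity/triangle-area sup-norm bound, and compactness of the break points and values. The closing remark correctly identifies the same key obstruction the paper addresses, namely that concavity on a support of length bounded below rules out spike concentration.
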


\begin{proof}[Proof of Proposition \ref{ourlemma}]
First, recall from Lemma \ref{boundedpolygons}, that $\Omega$ is a convex $k$-gon,  $\Omega_n$ and $\Omega$ are uniformly bounded in $\Rd$, and $1_{\Omega_n}$ converges pointwise a.e. to $1_\Omega$.

We now consider part (\ref{closurei}). This result is immediate, since the limit of any narrowly convergent sequence of probability measures must be a probability measure, and the only probability measure concentrated on a point is a Dirac mass at that point.

Next we  show part (\ref{closureii}), under the assumption that $d=2$. 
 Since $\Omega$ is a  bounded, convex k-gon with $\dim(\aff(\Omega)) = 1$, $\Omega$ must be a bounded line segment.  For simplicity of notation, let $V = \aff(\Omega)$, and let $\pi_{V} \# (1_{\Omega_n}/|\Omega_n|)$ denote the marginal distribution of $1_{\Omega_n}/|\Omega_n|$ along $V$. 
By Lemma \ref{smalllemma}, the density of $\pi_{V} \# (1_{\Omega_n}/|\Omega_n|)$ with respect to one dimensional Lebesgue measure on $V$,  which we denote by $f_n$, is a   piecewise linear function with at most $k$ vertices, which is concave on its support. Without loss of generality, suppose $V$ coincides with the $x_1$-axis, so $f_n$ is a function of $x_1$.

We now show that $f_n$ is uniformly bounded for all $n$.  To do this, we begin by showing that, up to a subsequence, $\inf_n \mathcal{L}^1(\pi_V(\Omega_n))>0$.  By Lemma \ref{boundedpolygons},  the vertices  $\{v^i_n\}_{i=1}^k$ of $\Omega_n$ converge to $\{v^i\}_{i=1}^k$, where $\Omega$ is the convex hull of these limit points. By definition of $V$, $v^i = (v_1^i,0)$ and $\pi_V(v^i) = v_1^i$ for all $i=1, \dots, k$.  Furthermore, note that $\pi_V(\Omega_n)$ is a sequence of uniformly bounded intervals on the $x_1$-axis, with nonempty interior with respect to $\R$. Thus, by Lemma \ref{boundedpolygons}, up to a subsequence, the endpoints of the intervals converge, and  we may let $\Omega' \subseteq \mathbb{R}$ denote the convex hull of their limits. By uniqueness of limits and the continuity of $\pi_V$, we have $\Omega' = \pi_V(\Omega)$, so  Lemma \ref{boundedpolygons} ensures $1_{\pi_V(\Omega_n)}(x_1) \to 1_{\pi_V(\Omega)}(x_1)$ pointwise $\mathcal{L}^1$ almost everywhere.
Thus, by the dominated convergence theorem,
\begin{align*}
 \lim_{n \to +\infty } \mathcal{L}^1(\pi_V(\Omega_n)) = \lim_{n \to +\infty} \int 1_{\pi_V(\Omega_n)} d \mathcal{L}^1= \int 1_{ \pi_V(\Omega)}(x_1) d x_1 = \mathcal{L}^1( \pi_V(\Omega)).
 \end{align*} 
 Since $\dim(\aff(\Omega))=1$, $\mathcal{L}^1( \pi_V(\Omega))>0$. Thus, up to another subsequence, we may assume that $\inf_n  \mathcal{L}^1(\pi_V(\Omega_n)) >0$.
 
Since $f_n$ is a concave piecewise linear density, for any $x_1 \in \R$, the area of the triangle with base $\mathcal{L}^1(\pi_V(\Omega_n))$ and height $f_n(x_1)$ is always smaller than $\int f_n d \mathcal{L}^1$. Therefore, for all $n \in \mathbb{N}$ and $x_1 \in \pi_V(\Omega_n)$,
\begin{align*}
\frac{1}{2} \left( \inf_m \mathcal{L}^1(\pi_V(\Omega_m)) \right) f_n(x_1)& \leq \frac12 \mathcal{L}^1(\pi_V(\Omega_n)) f_n(x_1)  \leq\int  f_n  d \mathcal{L}^1 = 1\Rightarrow f_n(x_1)\leq 2 \left( \inf_m \mathcal{L}^1(\pi_V(\Omega_m)) \right)^{-1}.  
\end{align*}
This shows that the densities $f_n$ are uniformly bounded in $n$.

We now seek to apply Lemma \ref{smalllemma1}.   
By the Continuous Mapping Theorem,  
\[ f_n d \mathcal{L}^1 = \pi_{V} \# (1_{\Omega_n}/ |\Omega_n|)\to \pi_{V} \# \nu  \text{ narrowly.} \]
Given that $f_n(x)$ is piecewise linear with at most $k$ vertices,  we may let   $x_{n, 1}\leq \cdots \leq x_{n,k}$ denote the vertices in increasing order. Since the support of $f_n$ is bounded, up to a subsequence, $x_{n,i}$ converges to some $x_i$ for all $i =1, \dots k$, and since $f_n$ is uniformly bounded,  $f_n(x_{n, i})$ converges to some $y_i$ as  for all $i =1, \dots, k$. Let $f$ be the piecewise linear functions that interpolates between $\{(x_i, y_i)\}_{i=1}^k$. Then $f_n \to f$ pointwise so, by   Lemma \ref{smalllemma1}, $d (\pi_V \#\nu) = f d \mathcal{L}^1$.  Finally, since $f_n$ are concave on their support, so is $f$. This completes the proof of part (\ref{closureii}).

It remains to show part (\ref{closureiii}), which holds for general $d \geq 1$. Since $1_{\Omega_n}$ converges pointwise $\mathcal{L}^d$ a.e. to $1_\Omega$ and both have uniformly bounded support,  the dominated convergence theorem ensures
\begin{align*}
 \lim_{n \to +\infty } |\Omega_n| = \lim_{n \to +\infty} \int 1_{\Omega_n} d \mathcal{L}^d= \int 1_{ {\Omega}} d \mathcal{L}^d = | {\Omega}|.
 \end{align*} 
 Therefore, up to a subsequence, we may assume $\inf_{n} |\Omega_n | >0$.  Applying the dominated convergence theorem again, we conclude that for any Borel measurable set $A \subseteq \mathbb{R}^d$,
\begin{align*}
  \lim_{n \to +\infty} \frac{1}{|\Omega_n|} \int_{\Omega_n} 1_A d \mathcal{L}^d   =\lim_{n \to +\infty} \frac{1}{|\Omega_n|}  \int 1_{A }1_{ \Omega_n}d \mathcal{L}^d = \frac{ \int 1_{A }1_{ {\Omega}}}{| {\Omega}|} d \mathcal{L}^d .
\end{align*}
This shows that $1_{\Omega_n}/|\Omega_n| \to 1_{ {\Omega} }/| {\Omega} |$ strongly as measures, hence narrowly. 
By uniqueness of limits, we conclude that $\nu = 1_{ {\Omega}}/|{\Omega}|$. Furthermore, since $\dim(\aff(\Omega))=d$, $\Omega \subseteq \Rd$ must have non-empty interior, so $\Omega \in S_k$.
\end{proof}

We now turn to the proof of  Theorem~\ref{t:Existence2d}. 
\begin{proof}[Proof of Theorem~\ref{t:Existence2d}]
First, suppose $d=1$. Let  $F$ be the CDF of $\mu$, and let $G$ be the CDF of the probability measure $\frac{1}{|b-a| } 1_{[a,b]}$.  Then $G^{-1}(t) := a + (b-a) t$, so, by equation (\ref{1dwasserstein}),
\begin{align*}
W^2_2\left(\mu, \ \frac{1}{|b-a| } 1_{[a,b]} \right) &= \int_0^1 \left( F^{-1}(t) - G^{-1}(t) \right)^2 dt \\
&= \int_0^1 \left( F^{-1}(t) - \left[ a + (b-a) t \right]  \right)^2 dt \\
&=  \frac{1}{6}\begin{pmatrix} a\\ b \end{pmatrix}^t \begin{pmatrix} 2 & 1\\ 1 & 2 \end{pmatrix} \begin{pmatrix} a\\ b \end{pmatrix}
- 2\begin{pmatrix} C_{0} - C_{1}\\ C_{1} \end{pmatrix}^t
\begin{pmatrix} a\\ b \end{pmatrix}
+ C_{2} =: \varphi(a,b), 
\end{align*}
where 
\begin{align*}
C_{0} &= \int_0^1F^{-1}(t) dt = \int_{\Rd} x d\mu(x)  , \\
C_{1} &=\int_0^1t F^{-1}(t) dt = \int_{\Rd} x F(x) d \mu(x) ,  \\
C_{2} &= \int_0^1 \left( F^{-1}(t) \right)^2 dt = \int_{\Rd} x^2 d \mu(x) . 
\end{align*}
The result then follows from the fact that $\varphi(a,b)$ is a strictly convex quadratic function, with unique minimizer given in \eqref{e:1dSol}.  

Now, we turn to the case $d=2$. 
By equation \eqref{WLOGmeanzero}, we may assume without loss of generality that $\mu$ has mean zero and restrict the minimization problem to the set $\{\Omega \in S_k \colon \int_\Omega x d \mathcal{L}^d(x) = 0 \}$.
Consider a minimizing sequence $\Omega_n \in S_k$ with $\int_{\Omega_n} x d \mathcal{L}^d(x) = 0$ so that 
\begin{align} \label{minimizingsequence}
\lim_{n \to +\infty} W_2(\mu, 1_{\Omega_n}/|\Omega_n|)   = \inf_{\Omega \in S_k} W_2(\mu, 1_\Omega/|\Omega|) .
\end{align}
By the triangle inequality,
\[ \sup_n M_2(1_{\Omega_n}/|\Omega_n|) = \sup_n W_2^2(1_{\Omega_n}/|\Omega_n|, \delta_0) \leq \sup_n 2 \left( W_2^2( \mu,\delta_0)+ W_2^2(\mu, 1_{\Omega_n}/|\Omega_n|) \right) < +\infty . \]
Thus, Lemma \ref{boundedpolygons} ensures that, up to a subsequence, there exists $\nu \in \P_2(\Rd)$ and a closed, convex $k$-gon $\Omega$ so that $1_{\Omega_n}/|\Omega_n| \to \nu$ narrowly and $\supp \nu = \Omega$. Since $1_{\Omega_n}/|\Omega_n|$ has mean zero for all $n \in \mathbb{N}$ and uniformly bounded support, $\nu$ also has mean zero. By the lower semicontinuity of the Wasserstein metric with respect to narrow convergence,
\begin{align} \label{nuassmallaspossible}
 W_2(\mu, \nu) \leq \lim_{n \to +\infty} W_2(\mu, 1_{\Omega_n}/|\Omega_n|) = \inf_{\Omega \in S_k} W_2(\mu, 1_\Omega/|\Omega|) . 
 \end{align}
Thus, it suffices to show that $\nu = 1_\Omega/|\Omega|$ for $\Omega \in S_k$ in order to conclude that a minimizer exists.
By Proposition \ref{ourlemma}, if $\dim(\aff(\Omega))=2$, the result holds. Thus, it remains to exclude the possibility that $\dim(\aff(\Omega)) \leq 1$. We accomplish this by showing that, if either $\dim(\aff(\Omega))=0$ or $\dim(\aff(\Omega))=1$, then there exists $\tilde{\Omega} \in S_k$ with $W_2(\mu, 1_{\tilde{\Omega}}/|\tilde{\Omega}|) < W_2(\mu,\nu)$, contradicting equation (\ref{nuassmallaspossible}).

 Let $X = (X_1,X_2)$ be a random variable with distribution $\mu$, and let $F(x_1)$ denote the CDF of $X_1$. Let $X_2|X_1$ denote the conditional distribution of $X_2$ given $X_1$, with CDF $F_{X_1}(x_2)$. By Lemma \ref{katy's lemma}, $F$ is the optimal transport map from $\mu$ to $1_{[0,1]}$,  $F - 1$ is the optimal transport map from $\mu$ to $1_{[-1,0]}$, and, almost everywhere, $F_{X_1}$ is the optimal transport map from the law of $X_2 | X_1$ to $1_{[0,1]}$. Suppose that 
 \begin{align} \label{ghyp}
 &g \colon \R \to [0,+\infty) \text{ is   compactly supported, $\int g = 1$, and, on its support,} \\
 &\text{$g$ is concave and piecewise linear,   with at most $k$ vertices.} \nonumber\end{align}
In particular, we have
\begin{align}\label{fpositivity}
g(x_1) >0 \text{ for }x_1 \in {\rm int}(\supp g) .
\end{align}

For $\alpha \in \R \setminus \{0\}, \beta >0$, consider the following family of convex $k$-gons:
\[ \Omega(\alpha, \beta) := \{ (x_1,x_2) \colon   x_2 \in [0, \alpha g(x_1/\beta) \} \in S_k. \]
Note that $|\Omega(\alpha,\beta)| = \alpha \beta$. Let $T_1$ be the optimal transport map from $\pi_1 \# \mu$ to $ \pi_1 \# 1_\Omega(1,1)$. 
Define the random variables
\begin{align*}
 Z_1&= T_1(X_1) , \\
 Z_2 &=  g(T_1(X_1)) (F_{X_1}( X_2)-C_\alpha),   \quad \text{ for } \quad C_\alpha = \begin{cases} 0 &\text{ if } \alpha >0, \\ 1 &\text{ if } \alpha <0, \end{cases} \\
  Y &= (\beta Z_1,|\alpha| Z_2)  .
 \end{align*}
By construction,   $Y \sim 1_{\Omega(\alpha,\beta)}/|\Omega(\alpha,\beta)|$.
Since $\pi_1 \#  1_{\Omega(1,1)} \in \P_{2,ac}(\R)$ and $T_1(X_1) \in {\rm int}(\pi_1 \#1_{\Omega(1,1)}) = {\rm int}(\supp g)$ almost everywhere, equation (\ref{fpositivity}) and Lemma \ref{katy's lemma} ensure
\begin{align} 
|\alpha| \mathbb{E}[Z_2X_2] &= |\alpha| \mathbb E[g(T_1(X_1)) (F_{X_1}(X_2)-C_\alpha) X_2] \nonumber \\
&= |\alpha| \mathbb{E} [g(T_1(X_1 )) \left( \mathbb{E}[F_{ X_1}(X_2)X_2| X_1] - C_\alpha \mathbb{E}[X_2|X_1]\right) ] , \nonumber \\
&>  |\alpha| \mathbb{E} \left[g(T_1(X_1)) \left(  \frac{1}{2} \mathbb{E}[X_2|X_1]  - C_\alpha \mathbb{E}[X_2|X_1] \right) \right]\nonumber \\
&= \begin{cases}   \ \ \frac{|\alpha|}{2}  \mathbb{E} \left[g(T_1(X_1)) \left(  \mathbb{E}[X_2|X_1]    \right) \right] &\text{ if } \alpha >0 ,  \\
- \frac{|\alpha|}{2} \mathbb{E} \left[g(T_1(X_1)) \left(   \mathbb{E}[X_2|X_1]     \right) \right] &\text{ if } \alpha < 0 . \end{cases}\label{lastcontradictionstep0}
\end{align} 

We now apply this construction to rule out the possibility $\dim(\aff(\Omega)) \leq 1$. First, suppose $\dim(\aff(\Omega))=0$.  By Proposition \ref{ourlemma} and the fact that $\nu$ has mean zero, $\nu = \delta_0$, so $W_2^2(\mu,\nu) = \mathbb{E}[X_1^2+X_2^2]$.
 Define
\[ g(x_1) = \begin{cases} 1+x_1 &\text{ if }x_1 \in [-1,0], \\ 1-x_1 &\text{ if } x_1 \in [0,1]. \end{cases} \]
Then $g$ satisfies (\ref{ghyp}).
Furthermore,
\begin{align}
&W_2^2(\mu,\nu) - W_2^2(\mu, 1_{\Omega_{\alpha, \beta}}/|\Omega(\alpha,\beta)|)  \geq \mathbb{E}[X_1^2 + X_2^2] - \mathbb{E}[(X_1 - \beta Z_1)^2 + (X_2 - |\alpha| Z_2)^2 ] \nonumber \\
&\quad = 2 \beta \mathbb{E}[X_1 Z_1] - \beta^2 \mathbb{E} [Z_1^2] + 2 |\alpha|   \mathbb{E} [ Z_2X_2] - \alpha^2 \mathbb{E}[Z_2^2]  . \label{boundbelow1d}
\end{align}

We now apply inequality (\ref{lastcontradictionstep0}) to bound this strictly from below.
If $\mathbb{E} \left[g(T_1(X_1)) \left(  \mathbb{E}[X_2|X_1]    \right) \right]  \geq0$, then $\mathbb{E}[Z_2X_2]>0$ for all $\alpha >0$. Thus, we may choose $\alpha >0$ and $\beta>0$ sufficiently small so that the right hand side of (\ref{boundbelow1d}) is strictly positive. On the other hand, if $\mathbb{E} \left[g(T_1(X_1)) \left(  \mathbb{E}[X_2|X_1]    \right) \right]  \leq0$, then $\mathbb{E}[Z_2X_2]>0$ for all $\alpha <0$. Thus, we may choose $\alpha<0$ sufficiently close to zero and $\beta>0$ sufficiently small so that the right hand side of (\ref{boundbelow1d}) is strictly positive. In particular, in either case, there exist $\alpha \in \R \setminus \{0\}$ and $\beta >0$ so that 
\begin{align*}
W_2^2(\mu,\nu) > W_2^2(\mu, 1_{\Omega(\alpha,\beta)}/|\Omega(\alpha,\beta)|) , 
\end{align*}
which contradicts (\ref{nuassmallaspossible}). This shows that $\dim(\aff(\Omega))=0$ is impossible.

It remains to exclude the possibility that $\dim(\aff(\Omega))=1$. We proceed by contradiction, assuming  $\dim(\aff(\Omega))=1$. Without loss of generality, we may rotate $\mu$ and $\nu$ so that $\aff(\Omega)$ coincides with the $x_1$-axis. By  Proposition \ref{ourlemma}, $d(\pi_1 \# \nu) = g d \mathcal{L}^1$, for $g$ satisfying (\ref{ghyp}) above. Note that, in this case,  $\pi_1 \# (1_{\Omega(1,1)}/|\Omega(1,1)|) = \pi_1 \#\nu$.
By Lemma \ref{transportmaponedim}, the optimal coupling between $\mu$ and $\nu$ is of the form $(X, T(X))$, where $T(x_1,x_2) = (T_1(x_1), 0)$ and $T_1$ is the optimal transport map from $\pi_1 \# \mu$ to $\pi_1 \# \nu$. Therefore,
\begin{align} \label{W2smaller}
&W^2_2\left(\mu, \nu\right)-W^2_2\left(\mu,  1_{\Omega(\alpha,1)} / |\Omega(\alpha,1)| \right)\nonumber\\
&\quad \geq \mathbb E[(X_1-T_1(X_1))^2+X_2^2]-\mathbb E[(X_1-Z_1)^2+(X_2- |\alpha| Z_2)^2]\nonumber\\
&\quad = \mathbb E[(X_1-T_1(X_1))^2+X_2^2]-\mathbb E[(X_1 -T_1(X_1))^2+(X_2 - |\alpha| Z_2)^2]\nonumber\\
&\quad =  2 |\alpha|  \mathbb E[X_2Z_2] - \alpha^2  \mathbb E[Z_2^2].
\end{align}

As before, we   apply inequality (\ref{lastcontradictionstep0}) to bound this   from below.
If $\mathbb{E} \left[g(T_1(X_1)) \left(  \mathbb{E}[X_2|X_1]    \right) \right]  \geq0$, then $\mathbb{E}[X_2Z_2]>0$ for all $\alpha >0$. Thus, we may choose $\alpha >0$ sufficiently small so that the right hand side of (\ref{boundbelow1d}) is strictly positive. On the other hand, if $\mathbb{E} \left[g(T_1(X_1)) \left(  \mathbb{E}[X_2|X_1]    \right) \right]  \leq0$, then $\mathbb{E}[X_2Z_2]>0$ for all $\alpha <0$. Thus, we may choose $\alpha<0$ sufficiently close to zero   so that the right hand side of (\ref{boundbelow1d}) is strictly positive. In particular, in either case, there exists $\alpha \in \R \setminus \{0\}$ 
\begin{align*}
W_2^2(\mu,\nu) > W_2^2(\mu, 1_{\Omega(\alpha,\beta)}/|\Omega(\alpha,\beta)|) , 
\end{align*}
which contradicts (\ref{nuassmallaspossible}). This shows that $\dim(\aff(\Omega))=1$ is impossible.  
\end{proof}

\begin{remark}[regularity of $\mu$ when $d=2$] \label{muabscts}
 While in the one dimensional case, our proof of existence of minimizers to (\ref{mainproblem}) holds for all $\mu \in \P_2(\R)$, in the two dimensional case, we require $\mu$ to be absolutely continuous with respect to $\mathcal{L}^2$. This is used in our application of Lemma \ref{katy's lemma}. In particular, we must have the following: after an arbitrary rotation of the coordinate plane, if $X = (X_1,X_2)$ is a random variable with law $\mu$, then, on a set of positive measure, the conditional distribution of $X_2$ given $X_1$, $X_2|X_1$, is not a single Dirac mass. Note that this fails to be true when $\mu$ is an empirical measure. The assumption that $\mu \in \P_{2, ac}(\R^2)$ is also convenient, since it ensures $\pi_1\#\mu$ and the conditional distribution $d \mu_{x_1}(x_2)$ are absolutely continuous with respect to one dimensional Lebesgue measure, hence that optimal transport maps exist from these measures to any other measure in $\P_2(\Rd)$.
 
 It remains an open question how to extend our result to more general $\mu \in \P_2(\Rd)$, particularly $\mu$ with lower dimensional support, such as an empirical measure. While related work due to Belili and Heinich \cite{belili2006approximation} on approximating measures by general convex sets $\Omega$ succeeded in weakening the condition on $\mu$ to merely require  $\dim(\aff(\supp \mu))=d$, their approach strongly uses symmetry arguments, which fail in our setting.
\end{remark}

\begin{remark}[existence of minimizers for $d \geq 3$] \label{difficulty}
There are two key gaps that prevent obtaining existence of minimizers to (\ref{mainproblem}) when $d\geq 3$ by a similar approach. The first is an analogue of Proposition \ref{ourlemma}, characterizing how limits of minimizing sequences could degenerate.
This becomes more difficult in higher dimensions, as a degeneracy can occur downwards by more than one dimension: for example, a three dimensional polytope can collapse to a line segment.

The second, more significant, gap is that, even given an appropriate analogue of Proposition \ref{ourlemma}, the perturbation argument used in the proof of Theorem ~\ref{t:Existence2d} would fail. 
For example, consider the following sequence $\Omega_n \in S_k$ for  $d=3$ with $k = 4$, illustrated in Figure \ref{omeganfig},
\begin{align*}
&\Omega_n = \conv \left( \left\{(-1, 0, 0), (0,1,0), (0,-1,0), \left(\cos \left(\frac{\pi}{2n} \right), 0, \sin \left(\frac{\pi}{2n}  \right) \right) \right\} \right), \quad n\in\N.
\end{align*}  
While the polygons $\Omega_n$ converge to the square $\Omega = \conv \{(-1, 0, 0), (0,1,0), (0,-1,0), (1,0,0) \}$ in the $xy$-plane, the measures $1_{\Omega_n}/|\Omega_n|$  narrowly converge to the measure $\nu = f d \mathcal{L}^2$, where $f$ is the piecewise linear density,  supported on $\Omega$, that interpolates between $(-1, 0, 0)$, $(0,1,0)$, $(0,-1,0)$, $(1,0,0)$, and $(0, 0, 3/2)$. 
In other words, $f$ is a tent-like function with a peak in the interior. 
In this case, one cannot construct a competitive element in $S_k$, $k=4$, following the same path as in the proof of Theorem ~\ref{t:Existence2d}. In particular, perturbing $\nu$ by scaling the $z$-direction according to $f$, since would result in an element in $S_5$, instead of $S_4$.

While there do exist elements in $S_4$ whose marginal distributions in the $xy$-plane coincide with $\nu$, e.g. the polytopes with vertices $(-1, 0, 0)$, $(0,1,0)$, $(0,-1,0)$, $(1,0,h)$ for any $h\neq 0$, this example illustrates how a different approach from that used in Theorem \ref{t:Existence2d} would be needed to deal with the more intricate structure of the narrow closure of the constraint set found in higher dimensions.
\end{remark}

\begin{figure}[t!] 
\includegraphics[width =0.3\textwidth]{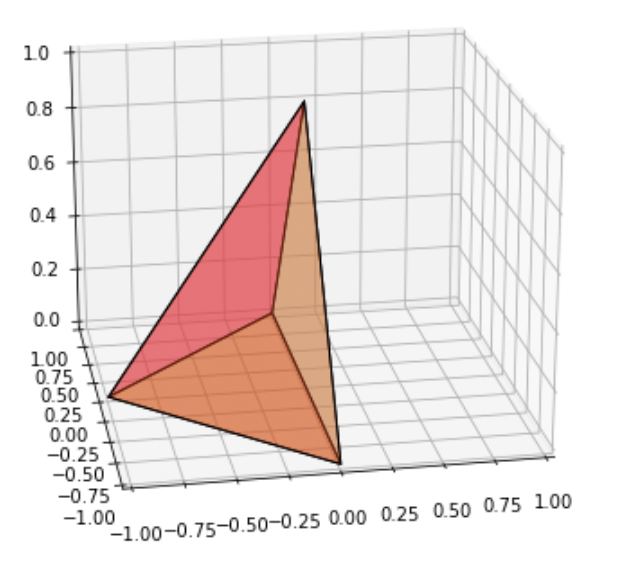}\hspace{0.4 cm}
\includegraphics[width =0.3\textwidth]{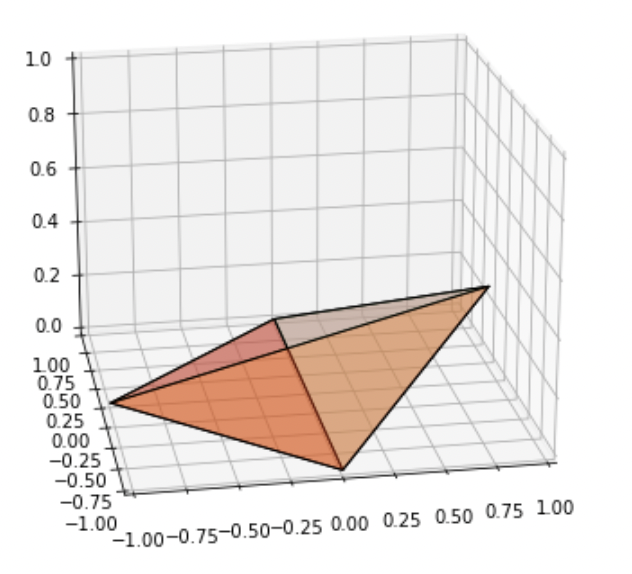}\hspace{0.4 cm}
\includegraphics[width =0.3\textwidth]{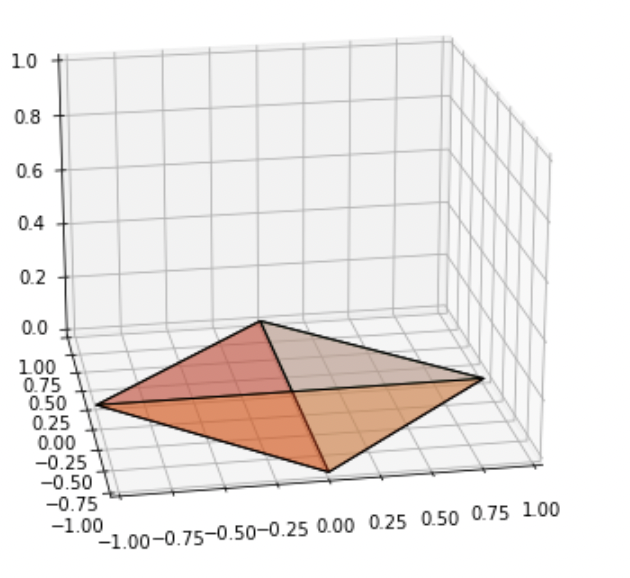}
\caption{Visualization of $\Omega_n$ as $n\to+\infty$.} \label{omeganfig}
\end{figure}


In spite of the difficulty of obtaining existence of minimizers to (\ref{mainproblem}) when $d \geq 3$, by introducing an arbitrarily small regularization term, we are able to obtain existence in all dimensions.  We begin with a lemma, providing compactness of the constraint set for the regularized problem \eqref{mainprobv2}.
 
 \begin{lemma} \label{Umcpt}
Consider $\mu_n \in \P_2(\Rd)$ with $\sup_n M_2(\mu_n) < +\infty$. Fix $\varepsilon >0$, $k \geq d+1$, and $m \geq 1$. Suppose $\Omega_n \in S_k$ satisfy 
 \begin{align*}
 \sup_n W_2(\mu_n, 1_{\Omega_n}/|\Omega_n|) + \varepsilon \mathcal{U}_m(1_{\Omega_n}/|\Omega_n|) < +\infty.
 \end{align*}
 Then there exists $\Omega \in S_k$ so that, up to a subsequence, $1_{\Omega_n}/|\Omega_n| \to 1_\Omega/|\Omega|$ narrowly.
 \end{lemma}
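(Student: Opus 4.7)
The plan is to combine the bound on $W_2$ with the entropy bound to rule out degenerate limits, and then apply Proposition~\ref{ourlemma}(\ref{closureiii}).

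First, I would use the triangle inequality in $W_2$ to control second moments. Since $M_2(\mu_n) = W_2^2(\mu_n, \delta_0)$ is uniformly bounded and $W_2(\mu_n, 1_{\Omega_n}/|\Omega_n|)$ is uniformly bounded by hypothesis, we get
\[
\sup_n M_2(1_{\Omega_n}/|\Omega_n|) \ = \ \sup_n W_2^2(1_{\Omega_n}/|\Omega_n|,\delta_0) \ \leq \ 2\sup_n\bigl(W_2^2(\mu_n,\delta_0) + W_2^2(\mu_n, 1_{\Omega_n}/|\Omega_n|)\bigr) \ < \ +\infty.
\]
Thus Lemma~\ref{boundedpolygons} applies: up to a subsequence, there exists $R>0$ with $\Omega_n\subseteq B_R(0)$ for all $n$, the vertices of $\Omega_n$ converge, the convex hull $\Omega$ of the limiting vertices is a closed convex $k$-gon (possibly degenerate), $1_{\Omega_n}\to 1_\Omega$ pointwise $\mathcal L^d$-a.e., and $1_{\Omega_n}/|\Omega_n|\to \nu$ narrowly with $\supp\nu=\Omega$.

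Next, I would use the R\'enyi entropy bound to obtain a uniform lower bound on $|\Omega_n|$. Indeed, by \eqref{intenshape}, the bound
\[
\sup_n \varepsilon\,\mathcal U_m(1_{\Omega_n}/|\Omega_n|) \ < \ +\infty
\]
translates directly (whether $m>1$ or $m=1$) into a uniform lower bound $|\Omega_n|\geq c>0$. On the other hand, by dominated convergence applied to $1_{\Omega_n}\to 1_\Omega$ on the bounded ball $B_R(0)$, we have $|\Omega_n|\to|\Omega|$. Therefore $|\Omega|\geq c>0$, which forces $\dim(\mathrm{aff}(\Omega))=d$, i.e., $\Omega$ has nonempty interior and so $\Omega\in S_k$.

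Finally, since we have narrow convergence $1_{\Omega_n}/|\Omega_n|\to\nu$ with limiting support of full affine dimension, Proposition~\ref{ourlemma}(\ref{closureiii}) identifies $\nu = 1_\Omega/|\Omega|$ with $\Omega\in S_k$, completing the proof. The main (and only real) obstacle is ensuring non-degeneracy of the limit polytope; this is precisely the reason the R\'enyi regularization was introduced, and the argument above shows that its role in the energy is exactly what is needed to rule out the dimensional collapse that caused the difficulty in Theorem~\ref{t:Existence2d} and Remark~\ref{difficulty}.
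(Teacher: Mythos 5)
Your argument follows the same architecture as the paper's proof (bound the second moments, invoke Lemma~\ref{boundedpolygons}, use the entropy bound to force $|\Omega|>0$, then apply Proposition~\ref{ourlemma}\eqref{closureiii}), and for $m>1$ it is complete: there both terms of the energy are nonnegative, so the bounded sum really does bound each term separately, and everything you write goes through.

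For $m=1$, however, there is a genuine gap at your very first step. You assert that ``$W_2(\mu_n, 1_{\Omega_n}/|\Omega_n|)$ is uniformly bounded by hypothesis,'' but the hypothesis only bounds the \emph{sum} $W_2 + \varepsilon\,\mathcal{U}_1$, and $\mathcal{U}_1(1_{\Omega_n}/|\Omega_n|) = -\log|\Omega_n|$ is not bounded below a priori: if $|\Omega_n|\to+\infty$ then $\mathcal{U}_1\to-\infty$ and the sum could stay bounded while $W_2\to+\infty$. Your later observation that the entropy bound gives $|\Omega_n|\geq c>0$ does not repair this, since that yields only an \emph{upper} bound on $\mathcal{U}_1$, whereas upper-bounding $W_2$ requires a \emph{lower} bound on $\mathcal{U}_1$, i.e.\ an upper bound on $|\Omega_n|$ --- which is essentially what you are trying to prove, so the argument as written is circular. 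This is precisely the point the paper addresses before anything else: it invokes a Carleman-type estimate, $\mathcal{U}_1(\nu) \geq -(2\pi/\delta)^{d/2} - \delta M_2(\nu)$, chooses $\delta$ in terms of $\varepsilon$, and absorbs the negative part of the entropy into the second moment so that the moment bound closes. (Alternatively, for uniform measures on $k$-gons one can note $|\Omega_n|\leq C_d R_n^d$ with $M_2(1_{\Omega_n}/|\Omega_n|)\geq \gamma R_n^2$, so $-\mathcal{U}_1$ grows only logarithmically in $M_2$ while $W_2(1_{\Omega_n}/|\Omega_n|,\delta_0)$ grows like $M_2^{1/2}$, and the sum bound again forces $\sup_n M_2(1_{\Omega_n}/|\Omega_n|)<+\infty$.) Once this is supplied, the remainder of your proof is correct and matches the paper's.
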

 \begin{proof}
 When $m >1$, $\mathcal{U}_m(1_\Omega/|\Omega|) \geq 0$, and when $m =1$, a Carleman-type estimate \cite[Lemma 4.1]{carrillo2016convergence} shows that, for all $\delta >0$,  $\mathcal{U}_1(1_\Omega/|\Omega|) \geq -(2 \pi/\delta)^{d/2} - \delta M_2(1_{\Omega}/|\Omega|)$.  Thus, along the minimizing sequence, taking $\delta = 1/(4\varepsilon)$ and applying the triangle inequality gives
\begin{align*} \sup_n   M_2(1_{\Omega_n}/|\Omega_n|) &= \sup_n 2 W_2^2(1_{\Omega_n}/|\Omega_n|, \delta_0) -  M_2(1_{\Omega_n}/|\Omega_n|) \\
&\leq \sup_n 4\left( W_2^2( \mu_n,\delta_0)+ W_2^2(\mu_n, 1_{\Omega_n}/|\Omega_n|) \right) -   M_2(1_{\Omega_n}/|\Omega_n|) \\
&\leq \sup_n  4 M_2(\mu_n) + 4 W_2^2(\mu_n, 1_{\Omega_n}/|\Omega_n|)  +  4 \varepsilon \mathcal{U}_m(1_{\Omega_n}/|\Omega_n|) + 4\varepsilon (8\varepsilon \pi)^{d/2} < +\infty . \end{align*}
Thus, Lemma \ref{boundedpolygons} ensures that, up to a subsequence, there exists $\nu \in \P_2(\Rd)$ and a closed, convex $k$-gon $\Omega$ so that $1_{\Omega_n}/|\Omega_n| \to \nu$ narrowly, $1_{\Omega_n}(x) \to 1_\Omega(x)$ pointwise almost everywhere, and $\supp \nu = \Omega$. Furthermore, there exists $R>0$ so that $\Omega_n \subseteq B_R(0)$ for all $n \in \mathbb{N}$. Since $\e>0$, $\mathcal{U}^m(1_{\Omega_n}/|\Omega_n|)$ is   bounded above uniformly along a minimizing sequence, so by equation (\ref{intenshape}), $|\Omega_n|$ is bounded below. Hence, by the dominated convergence theorem, 
\begin{align}
|\Omega| = \int_\Rd 1_\Omega = \lim_{n \to +\infty} \int_\Rd 1_{\Omega_n} = \lim_{n \to +\infty} |\Omega_n| >0. \label{torefer}
\end{align}
Therefore $\dim(\aff(\Omega))= d$, so by Proposition \ref{ourlemma}, we have $\nu = 1_\Omega/|\Omega|$.
 \end{proof}

Now, we turn to the proof of Theorem~\ref{t:Existence}, which ensures existence of solutions of the regulaized problem \eqref{mainprobv2}, for all $\mu \in \P_2(\Rd)$, $d \geq 1$.

\begin{proof}[Proof of Theorem~\ref{t:Existence}]
Our proof begins similarly to the proof of Theorem~\ref{t:Existence2d}.
By equation \eqref{WLOGmeanzero} and the fact that $\mathcal{U}_m$ is invariant under translations, we may assume without loss of generality that $\mu$ has mean zero and restrict the minimization problem to the set $\{\Omega \in S_k \colon \int_\Omega x d \mathcal{L}^d(x) = 0 \}$.
Consider a minimizing sequence $\Omega_n \in S_k$ with $\int_{\Omega_n} x d \mathcal{L}^d(x) = 0$ so that 
\begin{align} \label{minimizingsequence}
\lim_{n \to +\infty} W_2(\mu, 1_{\Omega_n}/|\Omega_n|) + \varepsilon\mathcal{U}_m(1_{\Omega_n}/|\Omega_n|) = \inf_{\Omega \in S_k} W_2(\mu, 1_\Omega/|\Omega|) + \varepsilon\mathcal{U}_m(1_{\Omega}/|\Omega|) .
\end{align}
By Lemma \ref{Umcpt}, with $\mu_n \equiv \mu$, there exists $\Omega \in S_k$ so that, up to a subsequence $1_{\Omega_n}/|\Omega_n| \to 1_\Omega / |\Omega|$ narrowly. 
 By lower semicontinuity of the Wasserstein metric  and $\mathcal{U}_m$ in the narrow topology, we have 
\begin{align} \label{nuachievesinf}
\inf_{\Omega \in S_k} W_2(\mu,1_\Omega/|\Omega|) + \varepsilon \mathcal{U}_m(1_\Omega/|\Omega|) &= \liminf_{n \to +\infty} W_2(\mu,1_{\Omega_n}/|\Omega_n|) +  \varepsilon \mathcal{U}_m(1_{\Omega_n}/|\Omega_n|) \\
& \geq W_2(\mu,1_{\Omega}/|\Omega|) + \mathcal{U}_m(1_{\Omega}/|\Omega|).  \nonumber
\end{align} 
Therefore, $\Omega$ is a minimizer of (\ref{mainprobv2}).
\end{proof}

\section{Consistency} 
We now turn to the proof of consistency for the regularized Wasserstein archetypal analysis problem, as the measure $\mu$ is approximated by a sequence of empirical measures.

\label{s:Proofs}

\begin{proof}[Proof of Theorem~\ref{t:Consistency}]
First, note that by definition of $\Omega_n$ as the solution of (\ref{mainprobv2}), for any $\tilde{\Omega} \in S_k$, we have
\begin{align} \label{competitor}
W_2( \mu_n, 1_{\Omega_n}/|\Omega_n|) +\varepsilon \mathcal{U}_m(1_{\Omega_n}/|\Omega_n|) & \leq  W_2( \mu_n, 1_{\tilde{\Omega}}/|\tilde{\Omega}|) +\varepsilon \mathcal{U}_m(1_{\tilde{\Omega}}/|\tilde{\Omega}|) .
\end{align}

We now turn to part (\ref{consisconv}). Since $\mu \in \P_2(\Rd)$, the law of large numbers ensures  
 $M_2(\mu_n) \to M_2(\mu)$ and $\mu_n \xrightarrow{narrowly} \mu$, almost surely. Thus, equation \eqref{charW2conv} ensures $W_2(\mu_n,\mu) \to 0$ almost surely.  Thus, Lemma \ref{Umcpt}  ensures that, almost surely, there is a subsequence such that $1_{\Omega_n}/|\Omega_n| \to 1_\Omega/|\Omega|$ narrowly, where $\Omega \in S_k$.
 
To see $\Omega$ is optimal, note that, by inequality (\ref{competitor}) and lower semicontinuity of the Wasserstein metric and $\mathcal{U}_m$ with respect to narrow convergence, almost surely, for any $\tilde{\Omega} \in S_k$,
\begin{align*}
W_2( \mu, 1_{\tilde{\Omega}}/|\tilde{\Omega}|) +\varepsilon \mathcal{U}_m(1_{\tilde{\Omega}}/|\tilde{\Omega}|)  &= \liminf_{n \to +\infty} W_2( \mu_n, 1_{\tilde{\Omega}}/|\tilde{\Omega}|) +\varepsilon \mathcal{U}_m(1_{\tilde{\Omega}}/|\tilde{\Omega}|)\\
&\geq \liminf_{n \to +\infty} W_2(\mu_n, 1_{\Omega_n}/|\Omega_n|) + \varepsilon \mathcal{U}_m(1_{\Omega_n}/|\Omega_n|) \\
&\geq W_2(\mu, 1_{\Omega}/|\Omega|) + \varepsilon \mathcal{U}_m(1_{\Omega}/|\Omega|) ,
\end{align*} 
so $\Omega$ solves (\ref{mainprobv2}).

Finally, to get the convergence rate in part (\ref{ratepart}), note that for all $n \in \mathbb{N}$, 
\begin{align*}
W_2 \left(\mu,   1_{\Omega_n} /|\Omega_n| \right) + \e \mathcal{U}_m( 1_{\Omega_n} /|\Omega_n| )&\leq W_2 \left(\mu_n,  1_{\Omega_n} /|\Omega_n|  \right) +\e \mathcal{U}_m( 1_{\Omega_n} /|\Omega_n| ) + W_2(\mu_n,\mu)\\
&\leq W_2 \left(\mu_n, 1_{\Omega} /|\Omega| \right) +\e \mathcal{U}_m(1_{\Omega} /|\Omega| ) + W_2(\mu_n,\mu)\\
&\leq W_2 \left(\mu, 1_{\Omega} /|\Omega|  \right) +\e \mathcal{U}_m(1_{\Omega} /|\Omega| ) + 2W_2(\mu_n,\mu),
\end{align*}
which rearranges to 
\begin{align*}
\mathbb E\left[W_2 \left(\mu,   1_{\Omega_n} /|\Omega_n| \right) + \e \mathcal{U}_m( 1_{\Omega_n} /|\Omega_n| )- (W_2 \left(\mu, 1_{\Omega} /|\Omega|  \right) +\e \mathcal{U}_m(1_{\Omega} /|\Omega| ))\right]&\leq 2\mathbb E[W_2(\mu_n,\mu)]\\
&\leq 2(\mathbb E[W^2_2(\mu_n,\mu)])^{1/2}\\
&\leq \frac{C(\beta, d)}{n^{d+4}},
\end{align*}
where the last step follows from \cite[Theorem 1]{horowitz1994mean}, where $C(\beta, d)$ is some constant depending on $\beta$ and $d$, completing the proof. 

\end{proof}

\section{Computational approach in two dimensions}
\label{s:CompApproach}
Here, we develop a computational approach for solving (\ref{mainproblem}) and (\ref{mainprobv2}) in dimension $d=2$, when $\mu$ is a sum of Dirac masses, based on the semidiscrete approach introduced by M\'erigot  \cite{M_rigot_2011}; see also \cite[Section 6.4.2]{santambrogio2015optimal}.
This approach is based on the  dual formulation of the 2-Wasserstein metric, which we now recall; see also \cite[Theorem 6.1.1, Theorem 6.1.4]{AGS}.

\subsection{Semidiscrete formulation of (\ref{mainproblem}) and (\ref{mainprobv2})}
Suppose that $\mu \in \P(X)$ and $\nu \in \P(Y)$, where $X, Y$ are compact subsets of $\Rd$.\footnote{The compactness assumptions on $X$ and $Y$ can be removed, but then the maximum in the dual problem becomes a supremum. Also, $X$, $Y$ compact is sufficient for our purposes, since our discrete measure $\mu$ and uniform distribution on a polygon $\nu$ will always be contained in  compact sets.} Then
\begin{align*}
   W_2(\mu,\nu)
&  = \max_{ {\varphi}  \in C(X)  } \Phi_\nu({\varphi})
\end{align*}
for
\begin{align}
\Phi_{\nu}({\varphi})&:=  \int_X  \varphi(x) d \mu(x) + \int_Y \varphi^*(y) d \nu(y) , \label{originalphi} \\
  \varphi^*(y) &:= \inf_{x \in X} |x-y|^2 - {\varphi}(x) . \label{varphic}
\end{align}
In the special case that $\mu$ is a sum of Dirac masses, $\mu   =  \sum_{i =1}^n m_i \delta_{x_i}  $, we may suppose $X = \{ x_i \}_{i =1}^n$ and identify  $C(X) \simeq \R^n$, abbreviate ${\varphi}_i := {\varphi}(x_i) $, for any $\varphi \in C(X) \simeq \R^n$. Given any such  $\varphi \in \R^n$, we may define the weighted Voronoi tessellation corresponding to the points $\{x_i\}_{i =1}^n$ by
\[ \Vor^\varphi_\mu(x_i) = \{ y \in \Rd \colon |x_i - y|^2 - \varphi_i \leq |x_j - y|^2 - \varphi_j   \text{ for all } j = 1, \dots, n\}  .\] 
Using this partition of the domain $\Rd$, we may rewrite $\Phi_\nu$   as 
\begin{align*}
\Phi_{\nu}({\varphi})& =  \sum_{i =1}^n  \left( \varphi_i m_i + \int_{\Vor^{{\varphi}}_\mu(x_i)}  \varphi^*(y) d \nu(y) \right), \\
& =  \sum_{i =1}^n  \left( \varphi_i m_i + \int_{\Vor^{{\varphi}_i}_\mu(x_i)}   |x_i - y|^2 - {\varphi}_i d \nu(y) \right) .
\end{align*}
In this way, we may equivalently  reformulate \eqref{mainproblem} and  \eqref{mainprobv2} by writing, for $\e \geq 0$,
\begin{align}
&\min_{ \Omega \in S_k  }   W_2^2(\mu,1_\Omega/|\Omega|) + \varepsilon \mathcal{U}_m(1_{\Omega}/|\Omega|) \nonumber \\
&\quad = \min_{  \Omega \in S_k     }  \max_{\varphi \in \R^n} \Phi_{1_\Omega / |\Omega|}(\varphi) + \e   \frac{1}{(m-1) |\Omega|^{m-1}}  \nonumber \\
\label{goalnum}  
&\quad = \min_{\Omega \in S_k} \max_{\varphi \in \R^n}  \sum_{i =1}^n \left(  \varphi_i m_i + \frac{1}{|\Omega|}\int_{\Vor^\varphi_\mu(x_i) \cap \Omega} (|y-x_i|^2-\varphi_i ) d y+ \e \frac{1}{(m-1) |\Omega|^{m-1}} \right),  
\end{align}
where in the second equality, we apply equation (\ref{intenshape}) for $\mathcal{U}_m(1_{\Omega}/|\Omega|)$, with the convention that $|\Omega|^0/0 = -\log(|\Omega|)$.

\subsection{Optimization approach}
We now discuss our approach for finding a polygon $\Omega \in S_k$ and dual vector $\varphi \in \R^n$ that solves equation \eqref{goalnum}, via an alternating gradient descent/ascent method in $\Omega$ and $\varphi$.  

First, we consider the gradient ascent step in $\varphi$. Since $\varphi$ only appears in the first two terms of our objective functional, it suffices to compute the variation of $\Phi_{1_\Omega/|\Omega|}(\varphi)$ with respect to $\varphi$, as in M\'erigot's original work  \cite{M_rigot_2011}. For the reader's convenience, we recall the form of this variation, following the presentation of Santambrogio \cite[section 6.4.2]{santambrogio2015optimal}.
 
\begin{proposition} 
\label{p:DerivPhi}
Suppose  $\mu  =  \sum_{i =1}^n \delta_{x_i} m_i $ and $\nu \ll \mathcal{L}^d$. Then, for all $l =1, \dots, n$,
\begin{align} \label{gradformula}
 \frac{\partial \Phi_\nu}{\partial \varphi_l}  = m_l- \nu (\Vor^\varphi_\mu(x_l)) .
\end{align}
\end{proposition}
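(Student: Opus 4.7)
The plan is to differentiate the two terms of $\Phi_\nu(\varphi) = \sum_{i=1}^n \varphi_i m_i + \int_Y \varphi^*(y)\,d\nu(y)$ separately in $\varphi_l$. The first is linear in $\varphi$, so its $\varphi_l$-derivative is immediately $m_l$. The entire content of the proposition is therefore contained in showing
\[
\frac{\partial}{\partial \varphi_l}\int_Y \varphi^*(y)\,d\nu(y) \;=\; -\,\nu\bigl(\Vor^\varphi_\mu(x_l)\bigr).
\]

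For each fixed $y\in Y$, the map $\varphi\mapsto \varphi^*(y)=\min_{1\le i\le n}\bigl(|x_i-y|^2-\varphi_i\bigr)$ is the minimum of finitely many affine functions, hence is concave and piecewise affine in the finite-dimensional variable $\varphi\in\R^n$. By a standard envelope/Danskin computation, at any $\varphi$ for which the minimizing index $i(y)$ is unique, $\varphi^*$ is differentiable at $\varphi$ with
\[
\frac{\partial \varphi^*(y)}{\partial \varphi_l} \;=\; -\mathbf{1}_{\{i(y)=l\}} \;=\; -\mathbf{1}_{\Vor^\varphi_\mu(x_l)}(y).
\]
The set of $y$ where the minimizer is not unique is contained in the union of the boundaries $\partial \Vor^\varphi_\mu(x_i)$, each of which is a finite union of pieces of affine hyperplanes in $\R^d$, hence has $\mathcal{L}^d$-measure zero; since $\nu\ll\mathcal{L}^d$, this bad set is $\nu$-negligible. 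So the pointwise identity above holds $\nu$-a.e.

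To promote this pointwise derivative to a derivative of the integral, I would use a dominated convergence argument on the difference quotients. For $h\in\R$, write
\[
\frac{\varphi^*_{\varphi+h e_l}(y)-\varphi^*_\varphi(y)}{h}
\]
and note that, since $\varphi^*$ is concave and piecewise affine in $\varphi$ with uniformly bounded subgradient components in $[-1,0]$, these difference quotients are uniformly bounded by $1$. Because $\nu$ is a finite measure, the dominated convergence theorem allows interchanging the limit $h\to 0$ with the integral over $Y$, yielding
\[
\frac{\partial}{\partial \varphi_l}\int_Y \varphi^*(y)\,d\nu(y) \;=\; \int_Y \frac{\partial \varphi^*(y)}{\partial \varphi_l}\,d\nu(y) \;=\; -\int_{\Vor^\varphi_\mu(x_l)} d\nu(y) \;=\; -\nu\bigl(\Vor^\varphi_\mu(x_l)\bigr).
\]
Adding the contribution $m_l$ from the linear term gives \eqref{gradformula}.

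The only subtle point is the absolute-continuity hypothesis on $\nu$, which is used exactly to discard the negligible set of $y$ lying on Voronoi cell boundaries (where the pointwise derivative would otherwise be ambiguous). The rest is bookkeeping: the minimum structure of $\varphi^*$ gives the envelope formula, and the finiteness of $n$ together with $\nu(Y)=1$ provides the integrable dominating function required for the interchange of limit and integral.
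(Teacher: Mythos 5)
Your proposal is correct, and the underlying idea is the same as the paper's: the $\varphi_l$-derivative of the pointwise minimum $\varphi^*(y)=\min_i\bigl(|x_i-y|^2-\varphi_i\bigr)$ is $-1$ exactly when $l$ is the unique minimizing index (i.e.\ on the interior of $\Vor^\varphi_\mu(x_l)$), and the tie set lies in finitely many hyperplanes, hence is $\nu$-null by $\nu\ll\mathcal{L}^d$. Where you differ is in how the derivative is passed through the integral. The paper splits $Y$ into the three regions $S_1,S_2,S_3$ (unique minimizer $l$, minimizer $\neq l$, ties) and formally differentiates each piece, treating those regions as fixed even though they depend on $\varphi$; the justification that the moving boundaries contribute nothing (the integrand is continuous across them) is left implicit. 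You instead invoke the Danskin/envelope formula pointwise and then pass to the limit of difference quotients via dominated convergence, using the uniform bound $\bigl|\varphi^*_{\varphi+he_l}(y)-\varphi^*_\varphi(y)\bigr|\le|h|$ coming from the fact that each affine competitor has slope in $\{0,-1\}$ in $\varphi_l$. Your route is slightly more rigorous on the interchange of limit and integral and buys you a clean one-line justification that the paper's region-by-region computation only gestures at; the paper's version is more geometric and makes the role of the Voronoi cells visually explicit. One small point both treatments gloss over: the claim that the tie sets are hyperplanes of measure zero uses that the points $x_i$ are distinct (if $x_i=x_j$ with $\varphi_i=\varphi_j$ the tie set is all of $\R^d$), which is harmless here but worth a word.
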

\begin{proof}
By definition of $\Phi_\nu$ and $\varphi^*$ in equations \eqref{originalphi}-\eqref{varphic}, we may rewrite this as
\begin{align} \label{ddwPhi}
\frac{d}{d \varphi_l} \Phi_\nu(\varphi) &= \frac{d}{d \varphi_l} \sum_{i =1}^n \varphi_i m_i  + \int \left( \inf_{j } |x_j-y|^2 - \varphi_j \right) d \nu(y) \\
&= m_l +  \frac{d}{d \varphi_l} \int \left( \inf_{j } |x_j-y|^2 - \varphi_j \right) d \nu(y) .
\end{align}
To compute the remaining derivative, we follow Santambrogio \cite[p243]{santambrogio2015optimal}. We proceed by decomposing the domain of integration into the following three regions:
\begin{align*}
S_1 &:= \{ y \colon |x_l - y|^2 - \varphi_l < |x_j - y |^2 - \varphi_j  \ \text{ for all } j \neq l \} \\
S_2 &:= \{ y \colon  |x_l - y|^2 - \varphi_l > |x_j - y |^2 - \varphi_j \ \text{ for some } j \neq l \} \\
S_3 &:= \{ y \colon |x_l - y|^2 - \varphi_l = |x_j - y|^2 - \varphi_j = \inf_{j'} |x_{j'} - y|^2 - \varphi_{j'} \ \text{ for some } j \neq l \}. 
\end{align*}
These regions may be interpreted as the region for which $l$ is the unique index that attains the infimum in $\inf_{j } |x_j-y|^2 - \varphi_j$, the region for which $l$ does not attain the infimum, and the region for which $l$ attains the infimum, but not uniquely. Then,
\begin{align*}
\frac{d}{d \varphi_l} \int_{S_1} \left( \inf_{j } |x_j-y|^2 - \varphi_j \right) d \nu(y)  &= \frac{d}{d \varphi_l} \int_{S_1}  \left( |x_l - y|^2 - \varphi_l \right) d \nu(y) = - \nu(S_1) = - \nu (\Vor^\varphi_\mu(x_l)) \\
\frac{d}{d \varphi_l} \int_{S_2} \left( \inf_{j } |x_j-y|^2 - \varphi_j \right) d \nu(y) & = \frac{d}{d \varphi_l} \int_{S_2}  \left( |x_j - y|^2 - \varphi_j \right) d \nu(y) = 0 \\
\frac{d}{d \varphi_l} \int_{S_3}  \left( \inf_{j  } |x_j-y|^2 - \varphi_j \right) d \nu(y) & = \frac{d}{d \varphi_l} 0 = 0 \
\end{align*}
where the last equality follows since $S_3$ has zero Lebesgue measure and $\nu \ll \mathcal{L}^d$. Combining these estimates with equation \eqref{ddwPhi} above gives \eqref{gradformula}. 
\end{proof}

Now, we turn to the gradient descent step in $\Omega$, which we will perform by perturbing the vertices of $\Omega$. 
 In particular, in the two dimensional case, we may assume that
$\Omega$ is a convex $k$-gon with vertices $\{ a_\ell \}_{\ell =1}^k$, ordered in the clockwise direction. We denote the edge connecting vertices $a_\ell$ and $a_{\ell + 1}$ by $E_\ell = [a_\ell, a_{\ell + 1}]$, where we use circular indexing, and we denote the outward normal vector to $E_\ell$ by  $n_\ell$. 
The following theorem will allow us to compute the gradient of the objective function in \eqref{goalnum} with respect to the vertex $a_\ell$, $\ell =1,\dots, k$. 

\begin{proposition} 
\label{p:DerivOmega} Let $g$ be a continuous function on $\R^2$ and consider the shape function $J(\Omega) = \int_\Omega g(x) \ dx $. 
Then the gradient of $J$ with respect to the vertex $a_\ell$, $\ell =1 , \dots, k $, is given by
\begin{align}
\label{eq:derivative}
\nabla_{a_\ell} J = L_{\ell} n_{\ell - 1} + R_{\ell} n_\ell,  
\end{align}
where 
$$
L_{\ell} = \int_{E_{\ell-1}} g(x) F_{\ell-1,\ell}(x) \ dx  
\qquad \textrm{and} \qquad 
R_{\ell} = \int_{E_\ell} g(x) F_{\ell+1, \ell}(x)  \ dx
$$
are constants computed on the intervals to the left and right of vertex $a_\ell$ and $ F_{\ell-1,\ell}$ is the affine function defined in \eqref{e:F}. 
\end{proposition}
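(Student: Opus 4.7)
The plan is to apply the shape derivative formula (a special case of the Reynolds transport theorem): for a smooth one-parameter family of domains $\Omega_t$ obtained by deforming $\partial\Omega$ with boundary velocity $V$,
\begin{equation*}
\left.\frac{d}{dt}\right|_{t=0}\!\int_{\Omega_t} g(x)\,dx \;=\; \int_{\partial\Omega} g(x)\,V(x)\cdot n(x)\,d\mathcal{H}^1(x).
\end{equation*}
To extract $\nabla_{a_\ell} J$, I would fix every vertex except $a_\ell$ and perturb $a_\ell$ in an arbitrary test direction $\xi\in\R^2$; then $\Omega_t$ is the convex hull with $a_\ell$ replaced by $a_\ell + t\xi$. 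For $|t|$ small, convexity and the combinatorial structure of the vertex ordering are preserved, so $\Omega_t \in S_k$. Since only the two edges $E_{\ell-1}$ and $E_\ell$ share $a_\ell$ as an endpoint, the boundary velocity $V$ vanishes identically on every other edge.

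Next, I would parameterize the two relevant edges affinely. Writing $x = (1-s)a_{\ell-1} + s\,a_\ell$ on $E_{\ell-1}$ for $s\in[0,1]$, the deformed point is $(1-s)a_{\ell-1} + s(a_\ell + t\xi)$, so $V(x) = s\,\xi$. Because $F_{\ell-1,\ell}$ is, by definition in \eqref{e:F}, the affine function on $E_{\ell-1}$ equal to $0$ at $a_{\ell-1}$ and $1$ at $a_\ell$, this can be rewritten as $V(x) = F_{\ell-1,\ell}(x)\,\xi$. A symmetric computation on $E_\ell$ gives $V(x) = F_{\ell+1,\ell}(x)\,\xi$. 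Substituting these two expressions into the shape derivative formula, and noting that the outward unit normals on $E_{\ell-1}$ and $E_\ell$ are $n_{\ell-1}$ and $n_\ell$ respectively, yields
\begin{equation*}
\nabla_{a_\ell} J \cdot \xi \;=\; \Bigl(\!\int_{E_{\ell-1}}\! g(x) F_{\ell-1,\ell}(x)\,dx\Bigr)(\xi\cdot n_{\ell-1}) \;+\; \Bigl(\!\int_{E_\ell}\! g(x) F_{\ell+1,\ell}(x)\,dx\Bigr)(\xi\cdot n_\ell).
\end{equation*}
Since $\xi$ is arbitrary, reading off coefficients gives the claimed formula $\nabla_{a_\ell} J = L_\ell n_{\ell-1} + R_\ell n_\ell$.

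If one prefers a more elementary, self-contained argument, the same identity can be obtained by decomposing the symmetric difference $\Omega_t \triangle \Omega$ into two small ``wedge'' triangles near $a_\ell$, namely $\conv\{a_{\ell-1}, a_\ell, a_\ell + t\xi\}$ and $\conv\{a_{\ell+1}, a_\ell, a_\ell + t\xi\}$, and expanding $\int_{\Omega_t} g - \int_\Omega g$ to first order in $t$ using the continuity of $g$ and the affine change of variables on each wedge. The main (modest) obstacle is bookkeeping of signs: the signed contribution of each wedge has to match the sign of $\xi\cdot n_{\ell-1}$ and $\xi\cdot n_\ell$ respectively, so that the formula is valid uniformly whether $\xi$ points inward or outward and regardless of the clockwise vertex ordering. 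The Reynolds-based argument above handles both cases simultaneously via the single outward-normal sign convention, which is why I would prefer it.
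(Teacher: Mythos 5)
Your argument is correct and is essentially the paper's own proof: both apply the shape-derivative (Reynolds/Hadamard) formula, identify the boundary velocity on the two adjacent edges as $F_{\ell-1,\ell}(x)\,\xi$ and $F_{\ell+1,\ell}(x)\,\xi$ using the normalization $F_{j,k}(a_j)=0$, $F_{j,k}(a_k)=1$, and read off the coefficients of $\xi\cdot n_{\ell-1}$ and $\xi\cdot n_\ell$. The alternative wedge decomposition you sketch is not needed and is not what the paper does.
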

\begin{proof}
In general, if a domain $\Omega$ is perturbed in the direction of a vector field, $v$, the shape derivative of $J(\Omega)$ is given by 
$$
\delta J(\Omega) \cdot v = \int_{\partial \Omega} g(x)  \langle v,  n\rangle \ dx,
$$
where $n = n(x)$ denotes the outward normal vector at $x \in \partial \Omega$. 
Let $F_{j,k}\colon \mathbb R^2 \to \mathbb R$ be the affine function 
\begin{equation} \label{e:F}
F_{j,k}(x) = \frac{\langle x - a_j, a_k - a_j \rangle}{ \|a_k - a_j\|^2 }, 
\end{equation}
which satisfies 
$F_{j,k}(a_j) = 0$ and 
$F_{j,k}(a_k) = 1$. 
When we perturb the vertex $a_\ell \mapsto a_\ell + \alpha$ for $\alpha \in \mathbb R^2$, 
this perturbs the two adjacent edges 
$E_{\ell-1}$ and $E_\ell$. 
In particular, the velocity field, $v$, on these edges has the form 
$$
v(x) = 
\begin{cases}
F_{\ell-1,\ell}(x) \alpha, 
& x \in E_{\ell-1} \\ 
F_{\ell+1,\ell}(x) \alpha, 
& x \in  E_\ell. 
\end{cases}
$$
We compute 
$$
\langle \nabla_{a_\ell} J, \alpha \rangle
=  \int_{E_{\ell-1}} g(x) F_{\ell-1,\ell}(x) \langle n_{\ell-1}, \alpha\rangle \ dx   
\ + \ 
 \int_{E_\ell} g(x) F_{\ell+1, \ell}(x) \langle n_\ell, \alpha\rangle  \ dx. 
$$
from which \eqref{eq:derivative} follows.
\end{proof}

Using the above result, we may now compute the gradient descent step in (\ref{goalnum}) by rewriting the second two terms in the objective function as
\begin{align} \label{secondpartobj}
\sum_{i=1}^n \frac{1}{|\Omega|}\int_{\Vor^\varphi_\mu(x_i) \cap \Omega} (|y-x_i|^2-\varphi_i ) d y+ \e \frac{1}{(m-1) |\Omega|^{m-1}} = \frac{1}{|\Omega|} \int_\Omega f_\varphi(y) dy + \e \frac{1}{(m-1) |\Omega|^{m-1}} ,
\end{align}
for
\begin{align*}
  f_\varphi(y) =  \sum_{i =1}^n 1_{{\Vor_\mu^{\varphi}(x_i)}}(y) \left(|y-x_i|^2-\varphi_i \right) ,
\end{align*}
and taking the gradient of (\ref{secondpartobj}) with respect to each vertex by applying Proposition \ref{p:DerivOmega}, the formula for $|\Omega|$ in terms of its vertices,  and the quotient rule.

In summary, our method aims to optimize the saddle point problem (\ref{goalnum}) by alternating gradient ascent steps according to (\ref{gradformula}) and gradient descent steps of (\ref{secondpartobj}), for each vertex of $\Omega$; see Algorithm \ref{alg1}. While our algorithm performs well in practice (see section \ref{s:NumExp}) we leave convergence analysis to future work. In particular, as illustrated in Figure \ref{f:Landscape}, we generically expect the energy landscape to have multiple stationary points, so that further hypotheses would be needed to ensure our method converges to a global optimizer.
\begin{algorithm}[t!]
\DontPrintSemicolon
 \KwIn{A distribution $\mu = \sum_{i=1}^n \delta_{x_i}m_i$, 
 an initial dual variable $\varphi^0$, a number of vertices $k \geq 3$, 
 an initial polygon  $\Omega^0 = \conv \left(\{a^0_\ell\}_{\ell \in [n]} \right)$, regularization parameters $\e \geq 0$ and $m \geq 1$,
 step sizes $\tau_1,\tau_2>0$, and tolerances $\delta_1, \delta_2>0$. }
 \KwOut{ A polygon $\Omega^n$ that is a candidate approximate solution for \eqref{mainproblem} or  \eqref{mainprobv2}.}
 \For{$s \in \mathbb N$}{
{\bf 1. Gradient ascent in $\varphi$.} 
Set $\bar \varphi^{s,1} = \varphi^s $\;
 \For{$r\in \mathbb N$}{
Find the power diagram at $\bar \varphi^{s,r}$, compute  $ |\Vor^\varphi_\mu(x_i) \cap \Omega|/|\Omega|$ and compute
$$\bar \varphi^{s,r+1}_i = \bar \varphi^{s,r}_i+ \tau_1(m_i -  |\Vor^\varphi_\mu(x_i) \cap \Omega|/|\Omega|)  $$\;
If $\|\bar \varphi^{s,r+1} - \bar \varphi^{s,r}\|_1 \leq \delta_1$, STOP \;  
}
Set $\varphi^{s+1} = \bar \varphi^{s,r}$\;
{\bf 2. Gradient descent in $\Omega$.} Update the vertices $\{a^s_\ell\}_{\ell = 1}^k$ of $\Omega$ by 
$$a_{\ell}^{s+1} = a_{\ell}^s - \tau_2 \nabla_{a_\ell}\left( \frac{1}{|\Omega|} \int_\Omega f_{\varphi^{s+1}}(y) dy +  \e \frac{1}{(m-1) |\Omega|^{m-1}} \right).$$
If $\|\alpha^{s+1}-\alpha^s\|_\infty <\delta_2$, STOP\;  
 }
\caption{An alternating method to approximately solve \eqref{mainproblem} and \eqref{mainprobv2}. } 
\label{alg1}
\end{algorithm}

\section{Numerical experiments} 
\label{s:NumExp}

In this section, we illustrate the performance of the algorithm  via several computational experiments.  In all numerical experiments, we set $\tau_1 = 0.5$, $\tau_2 = 0.1$, $\delta_1 = 10^{-3}$, and $\delta_2 = 10^{-5}$. We choose the initial dual variable $\varphi^0$ to be a zero vector with dimension $n$ in all experiments.

\subsection{Example 1: Uniform distribution in a disk} \label{sec:uniform}
We first consider the case where the measure $\mu$ is the uniform probability measure on the unit disk.
The measure $\mu$ is approximated as follows. We first generate $10, \ 000$ data points, where  each data point $x_i = (x^1_i, x^2_i)$ is randomly generated by 
\[\begin{cases} x^1 = \sqrt{r} \cos (2\pi\theta) , \\
x^2 = \sqrt{r} \sin (2\pi\theta) ,
\end{cases}
\]
with $r$ and $\theta$   drawn from uniform distributions on $[0,1]$. We generate a uniform $15\times 15$ discretization of the square $[x_{min},x_{max}]\times[y_{min},y_{max}]$ where $x_{min/max},y_{min/max}$ are the smallest or largest value in $x,y$, respectively. We then set the center of the $i$-th cell as $x_i$ and the ratio of random points located in this cell as $m_i$ to have the approximate measure $ \mu_n = \sum_{i=1}^n m_i \delta_{x_i}$ by dropping those cells where $m_i=0$. In this experiment, $n = 172$.
We seek to solve (\ref{mainproblem}), with no regularization ($\e = 0$), and $k =3$. 

\begin{figure}[ht!]
\includegraphics[width =0.18\textwidth,clip, trim = 2cm 0cm 3cm 0cm]{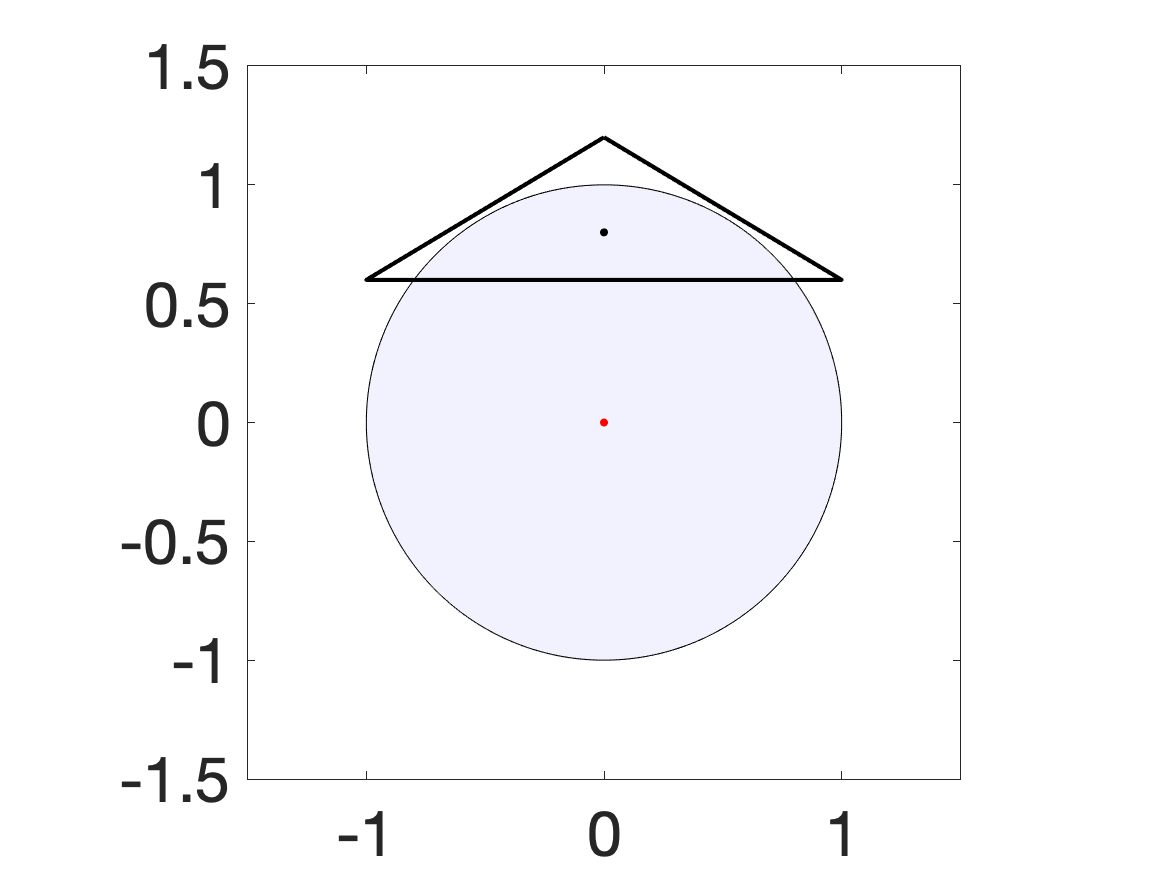}
\includegraphics[width =0.18\textwidth,clip, trim = 2cm 0cm 3cm 0cm]{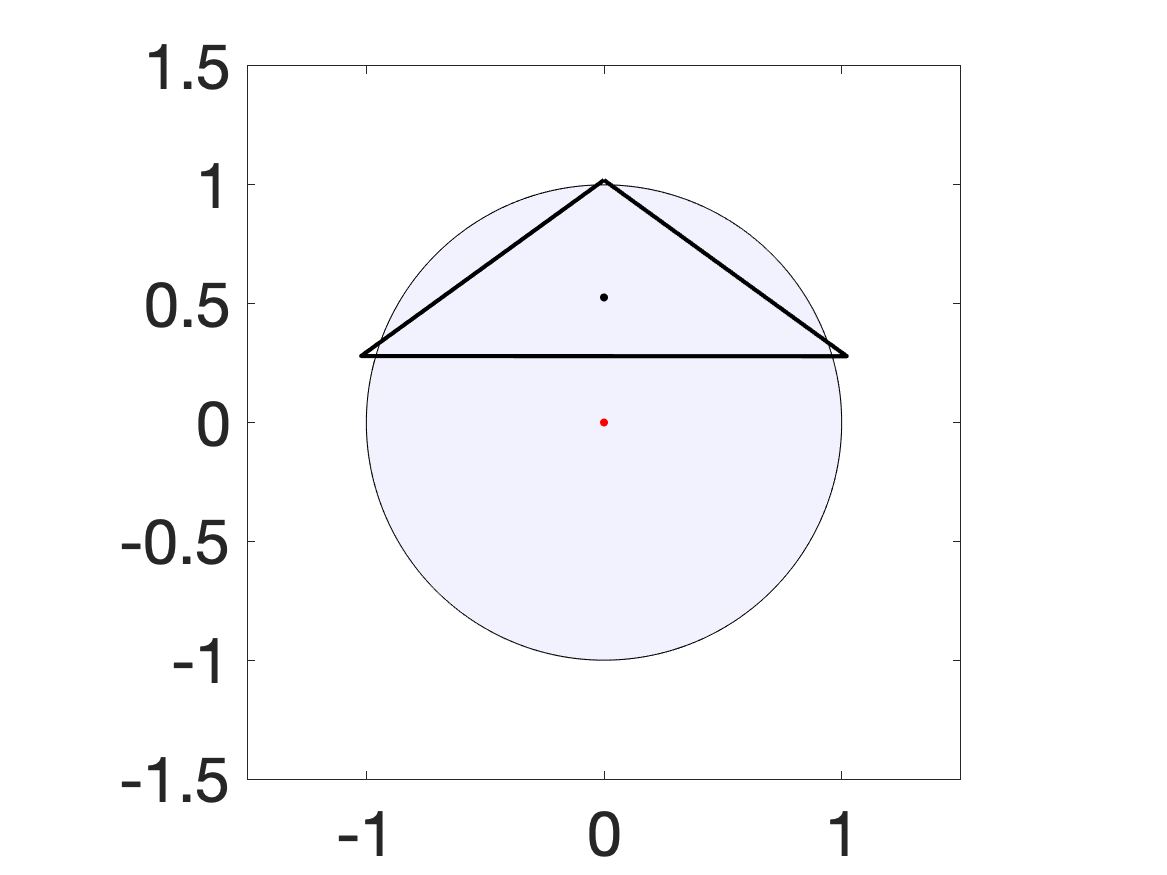}
\includegraphics[width =0.18\textwidth,clip, trim = 2cm 0cm 3cm 0cm]{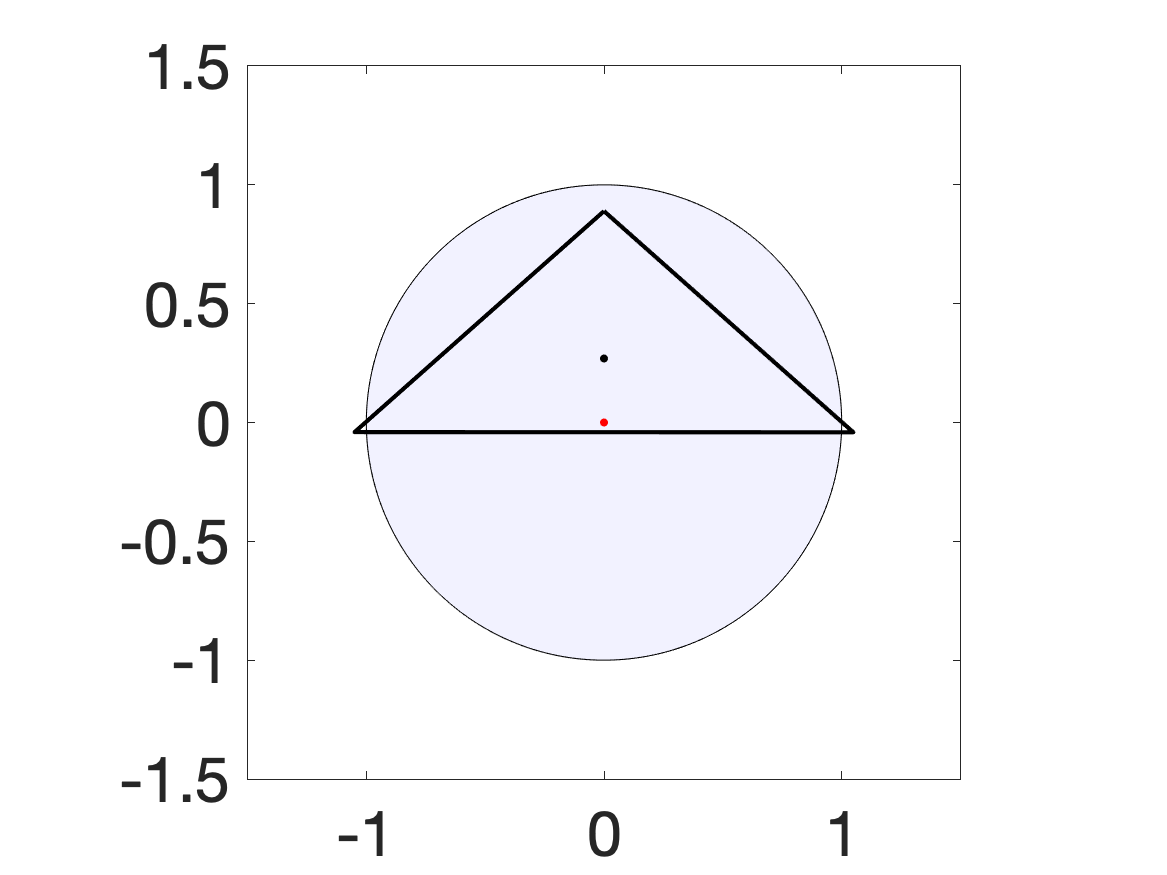}
\includegraphics[width =0.18\textwidth,clip, trim = 2cm 0cm 3cm 0cm]{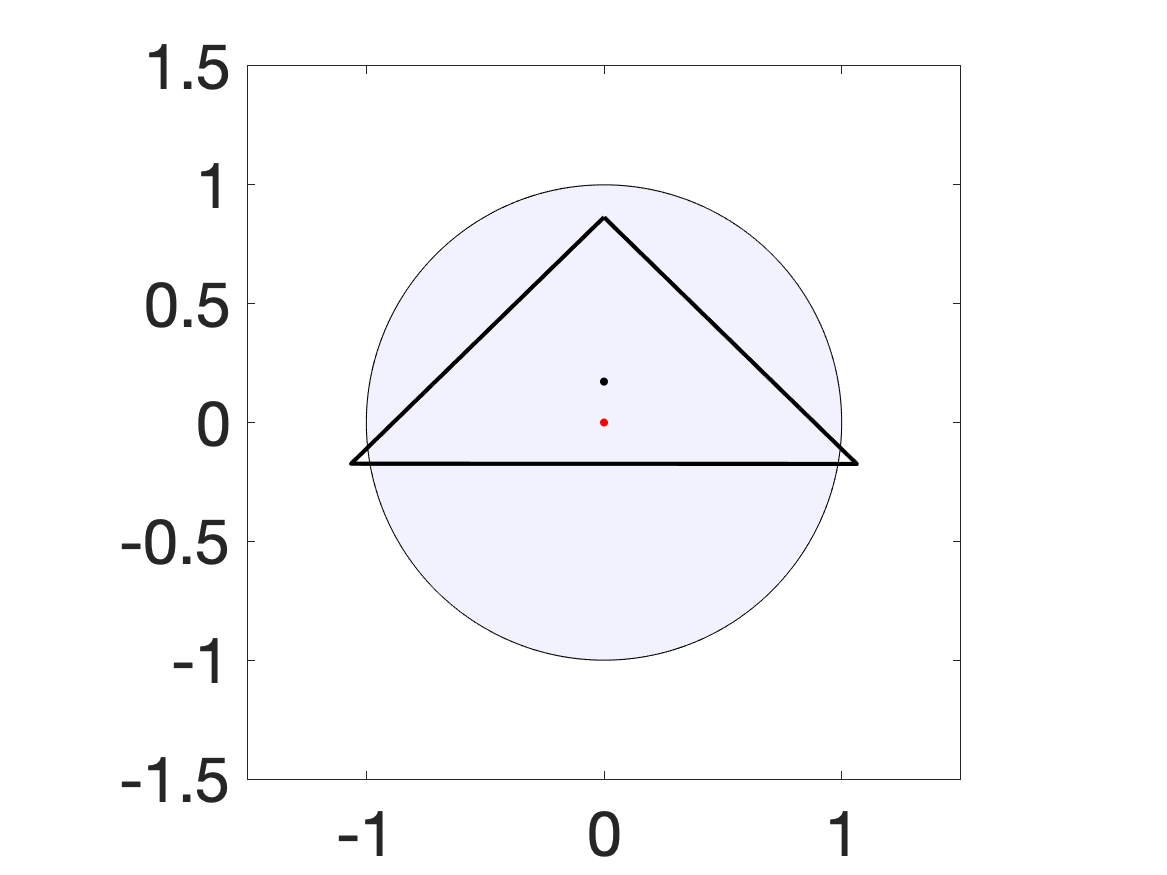}
\includegraphics[width =0.18\textwidth,clip, trim = 2cm 0cm 3cm 0cm]{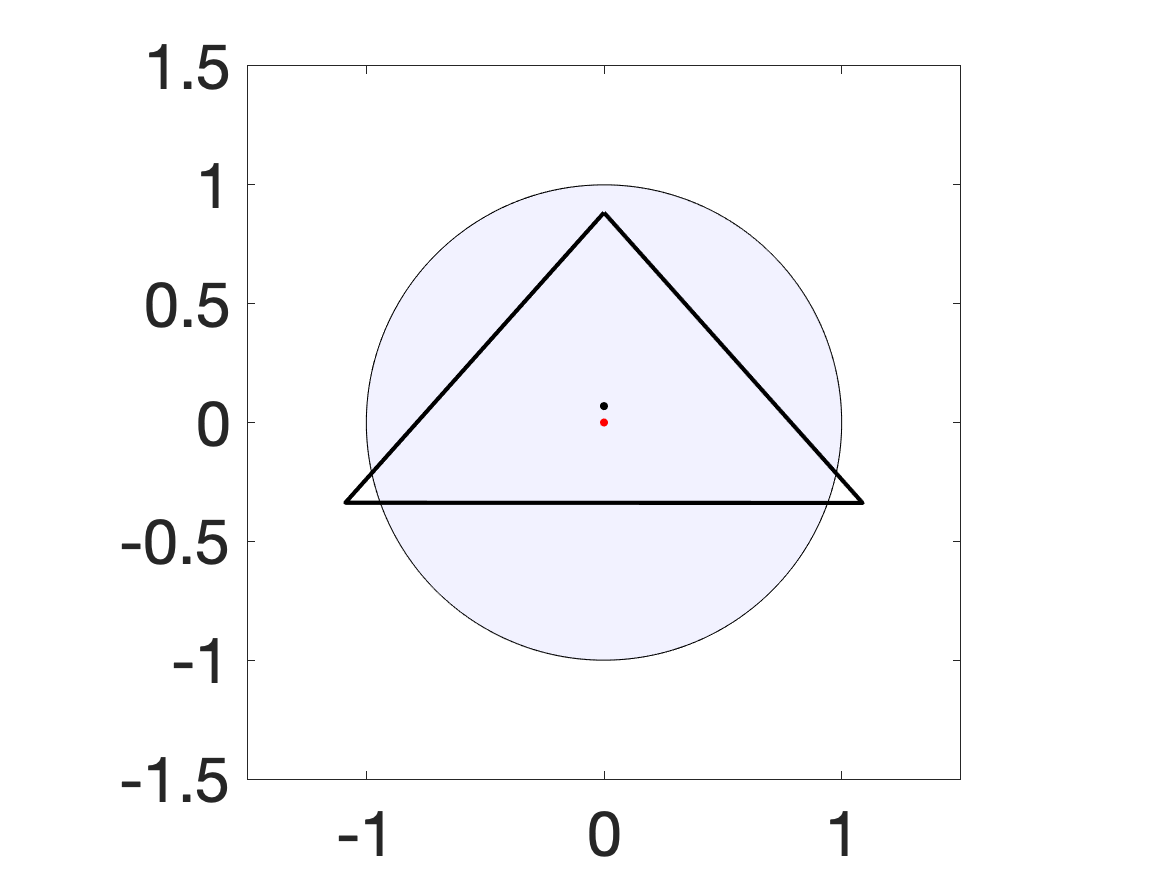}
\includegraphics[width =0.18\textwidth,clip, trim = 2cm 0cm 3cm 0cm]{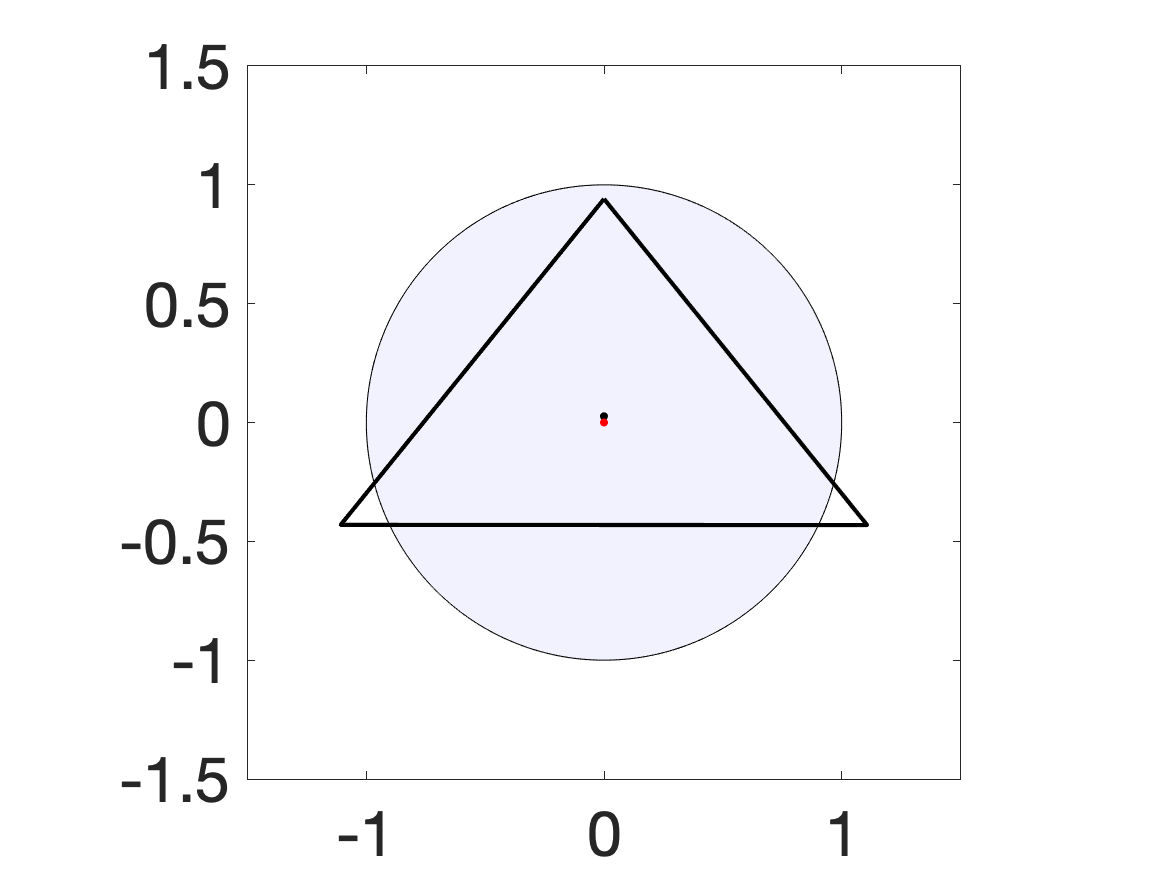}
\includegraphics[width =0.18\textwidth,clip, trim = 2cm 0cm 3cm 0cm]{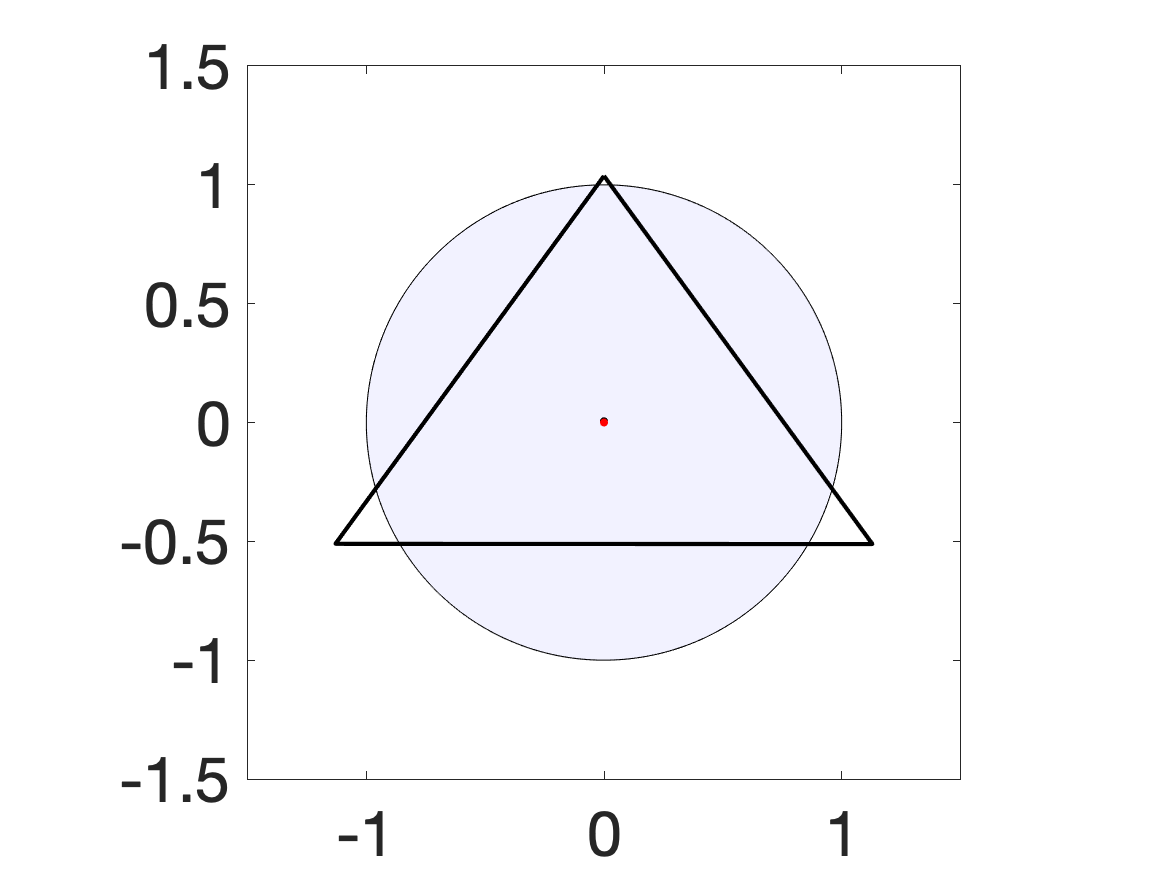}
\includegraphics[width =0.18\textwidth,clip, trim = 2cm 0cm 3cm 0cm]{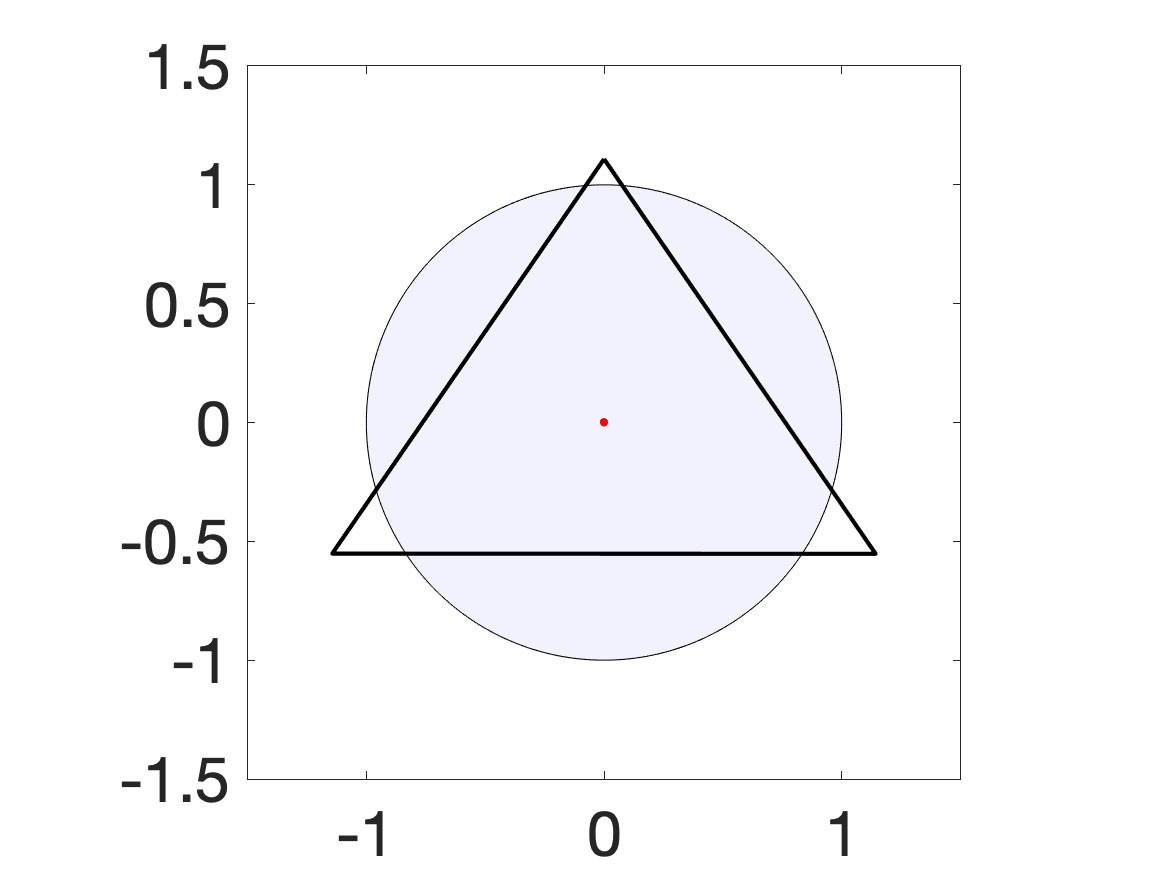}
\includegraphics[width =0.18\textwidth,clip, trim = 2cm 0cm 3cm 0cm]{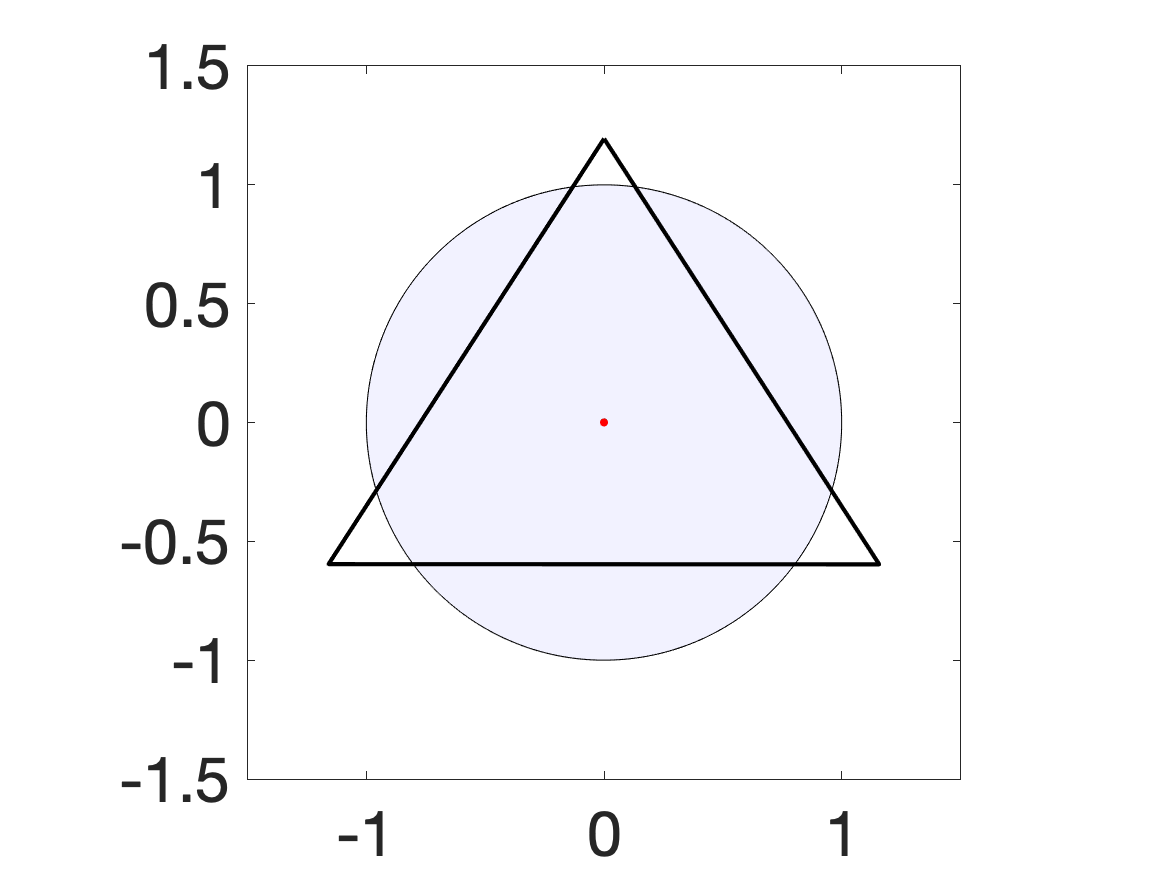}
\includegraphics[width =0.18\textwidth,clip, trim = 2cm 0cm 3cm 0cm]{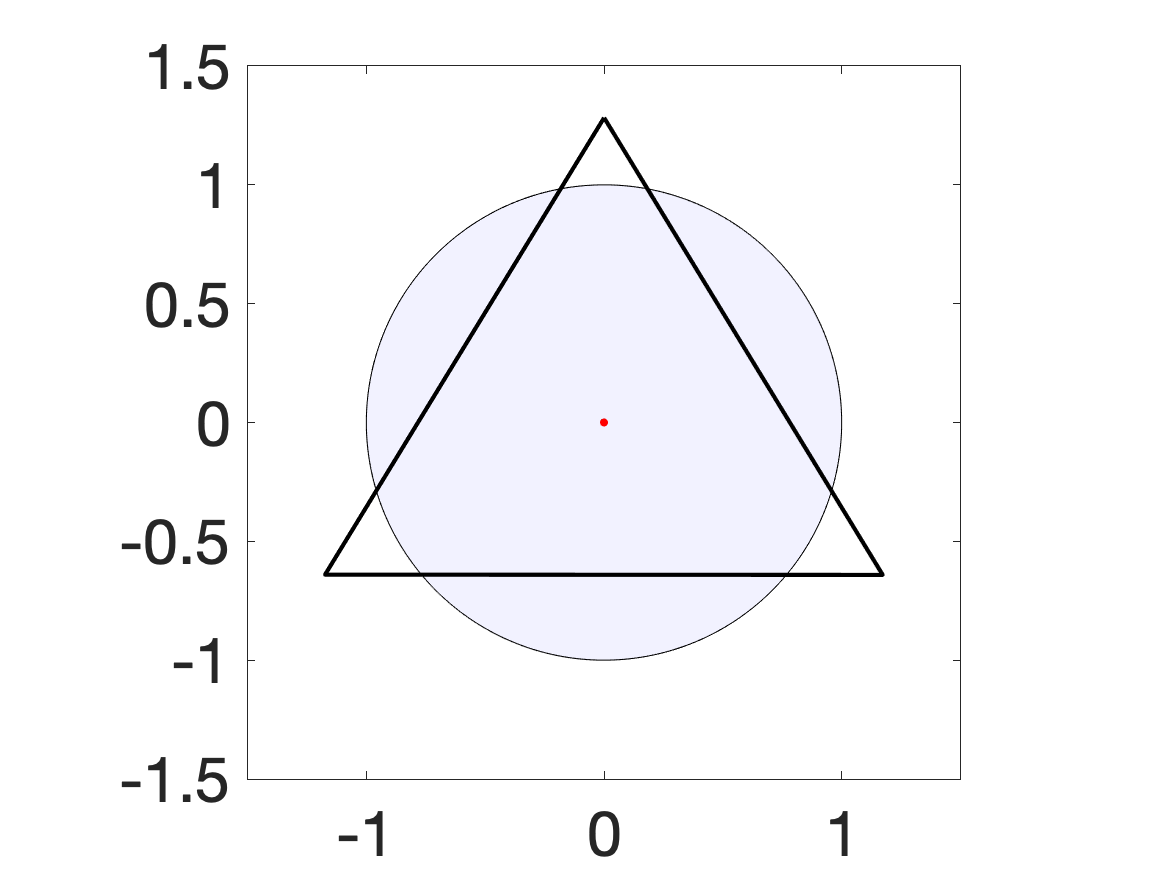}
\caption{Snapshots of the evolution of the triangle computed by Algorithm~\ref{alg1} until convergence. See Subsection~\ref{sec:uniform}. } \label{fig:1}
\end{figure}

Figure~\ref{fig:1} illustrates the  evolution of the triangle computed using Algorithm~\ref{alg1}. We observe that the triangle evolves gradually to a regular triangle centered at the origin.  Note that all three vertices lie outside of the disk. This is in contrast to  the classical archetypal analysis problem \eqref{e:arch}, where the solution is an inscribed triangle, as depicted in Figure~\ref{fig:aaresults}. For more detail see \cite[Proposition 3.1]{Yiming2020}. 

\begin{figure}[ht!]
\includegraphics[width = 0.4\textwidth,clip, trim = 2cm 0cm 3cm 0cm]{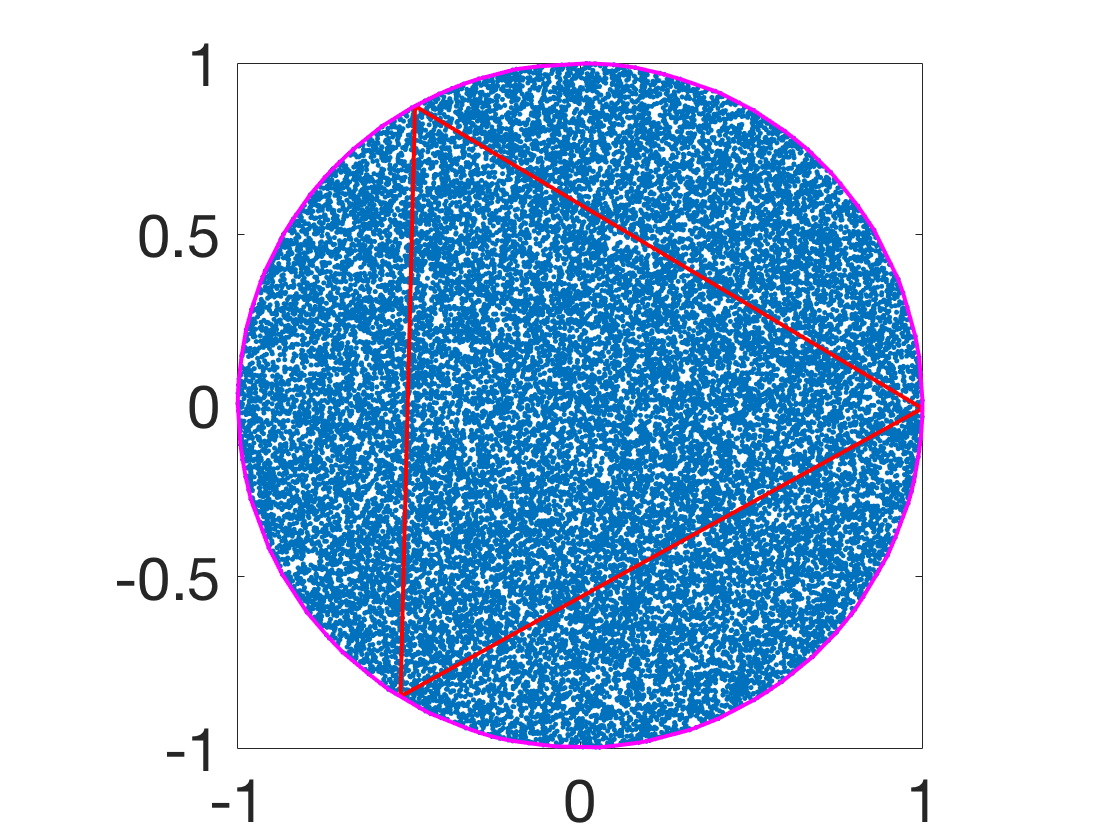}
\caption{Figure 3 of \cite{Yiming2020}: A minimizer for 30, 033 random data points. The red curve is the boundary of the convex hull of the archetype points and the magenta curve is the boundary of the convex hull of random data points. See Subsection~\ref{sec:uniform}.} \label{fig:aaresults}
\end{figure}

\begin{figure}[ht!]
\includegraphics[width = 0.22\textwidth,clip, trim = 2cm 0cm 3cm 0cm]{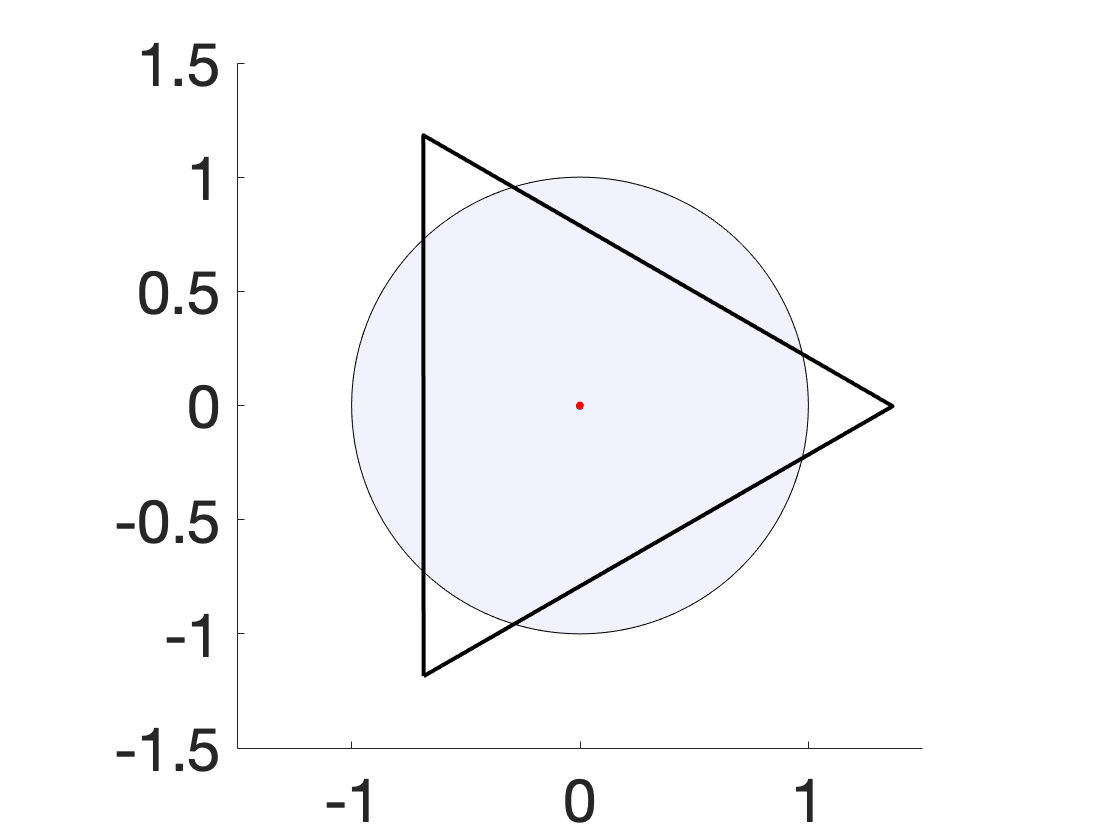}
\includegraphics[width = 0.22\textwidth,clip, trim = 2cm 0cm 3cm 0cm]{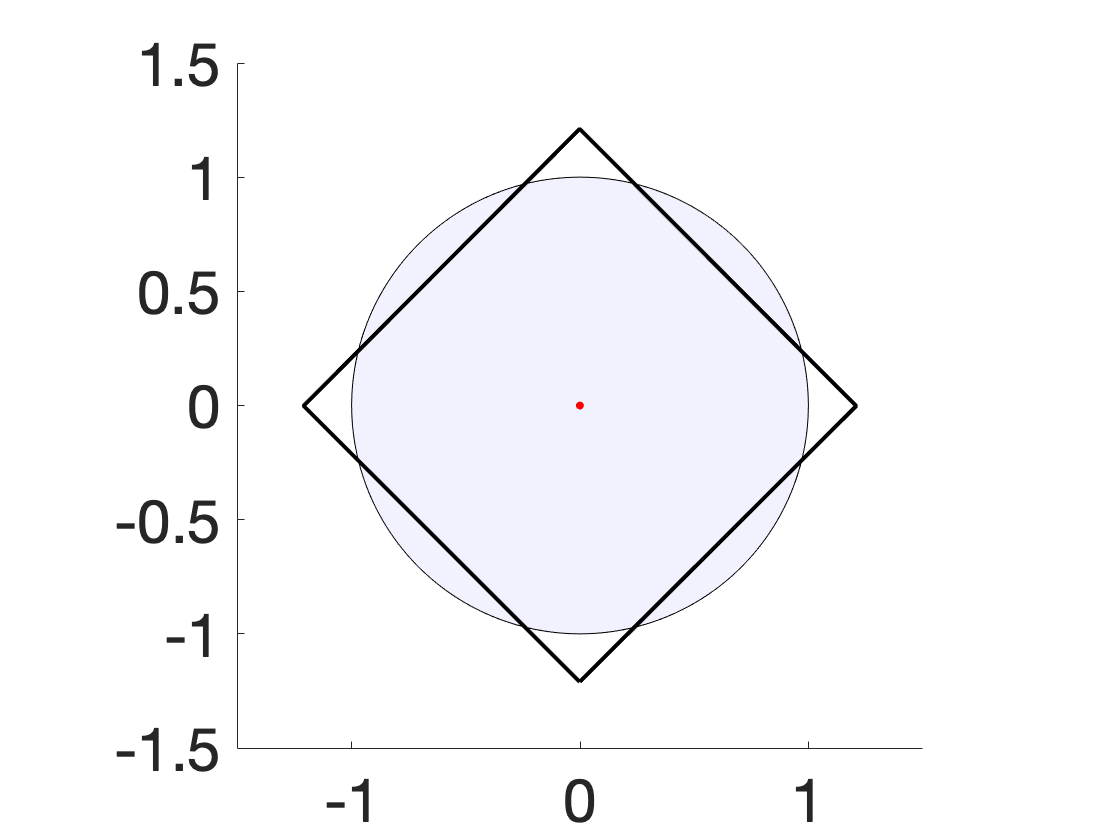}
\includegraphics[width = 0.22\textwidth,clip, trim = 2cm 0cm 3cm 0cm]{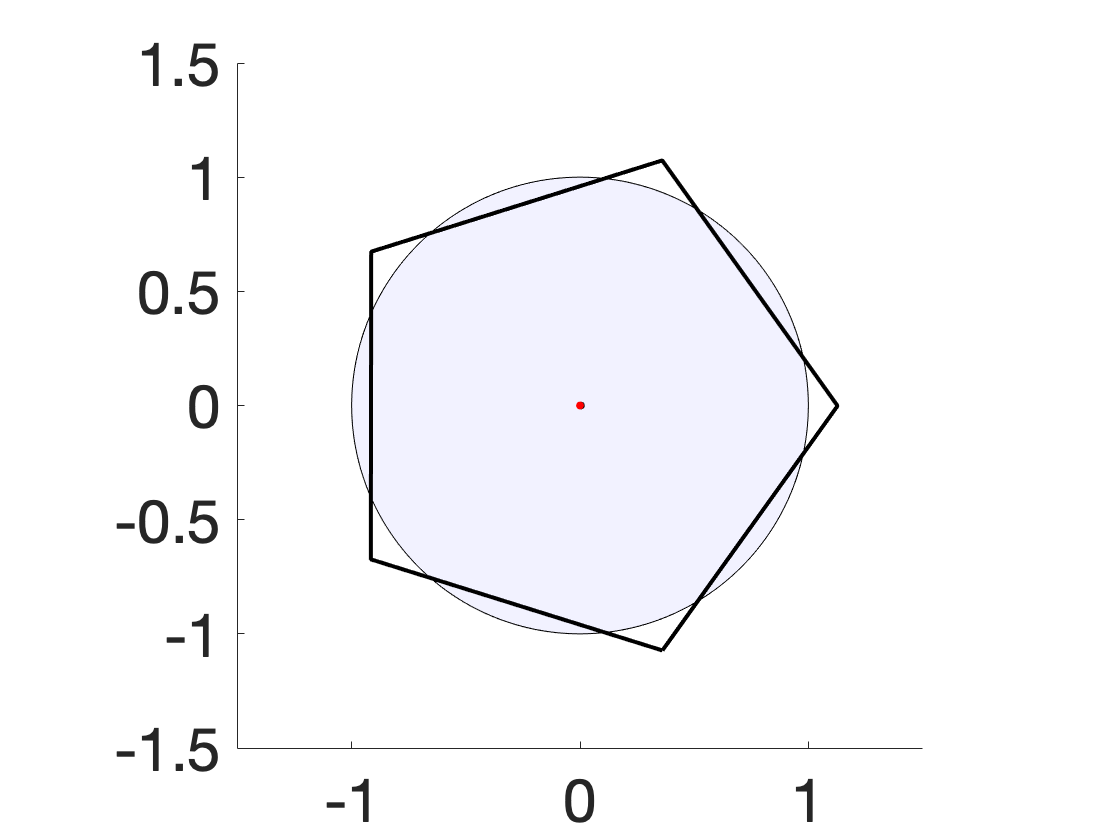}
\includegraphics[width = 0.22\textwidth,clip, trim = 2cm 0cm 3cm 0cm]{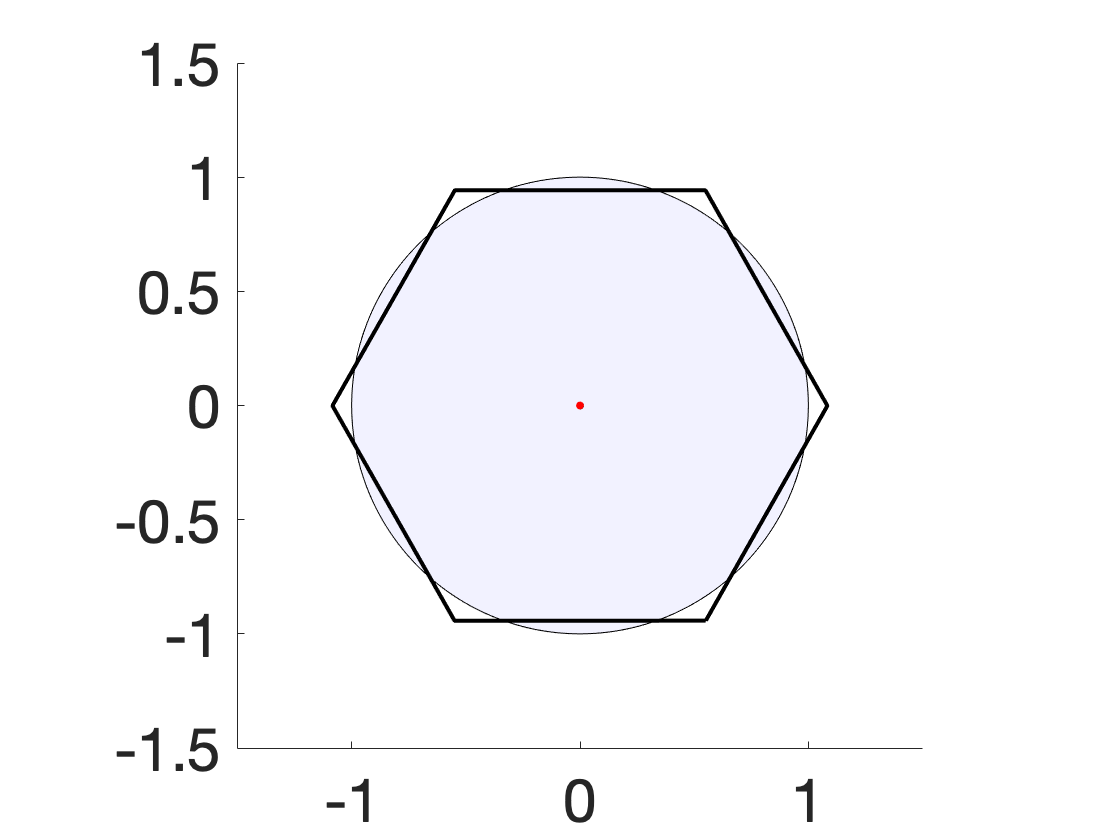} \\
\includegraphics[width = 0.22\textwidth,clip, trim = 2cm 0cm 3cm 0cm]{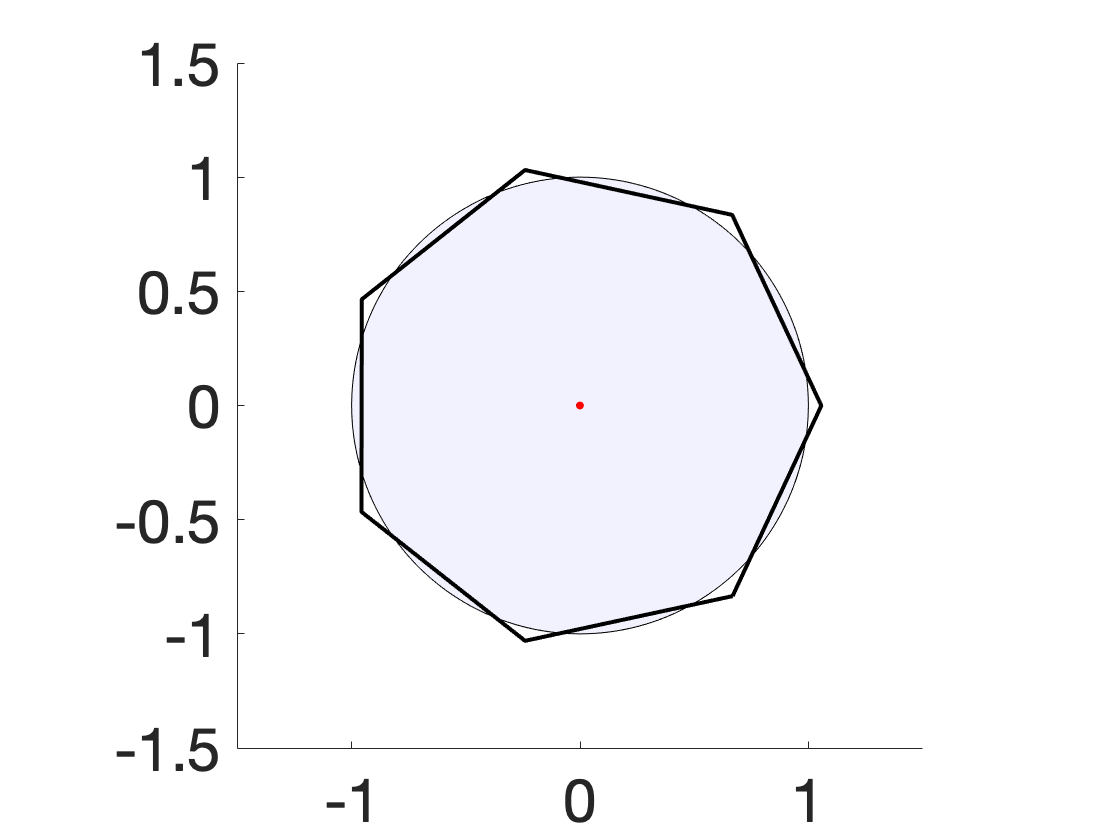}
\includegraphics[width = 0.22\textwidth,clip, trim = 2cm 0cm 3cm 0cm]{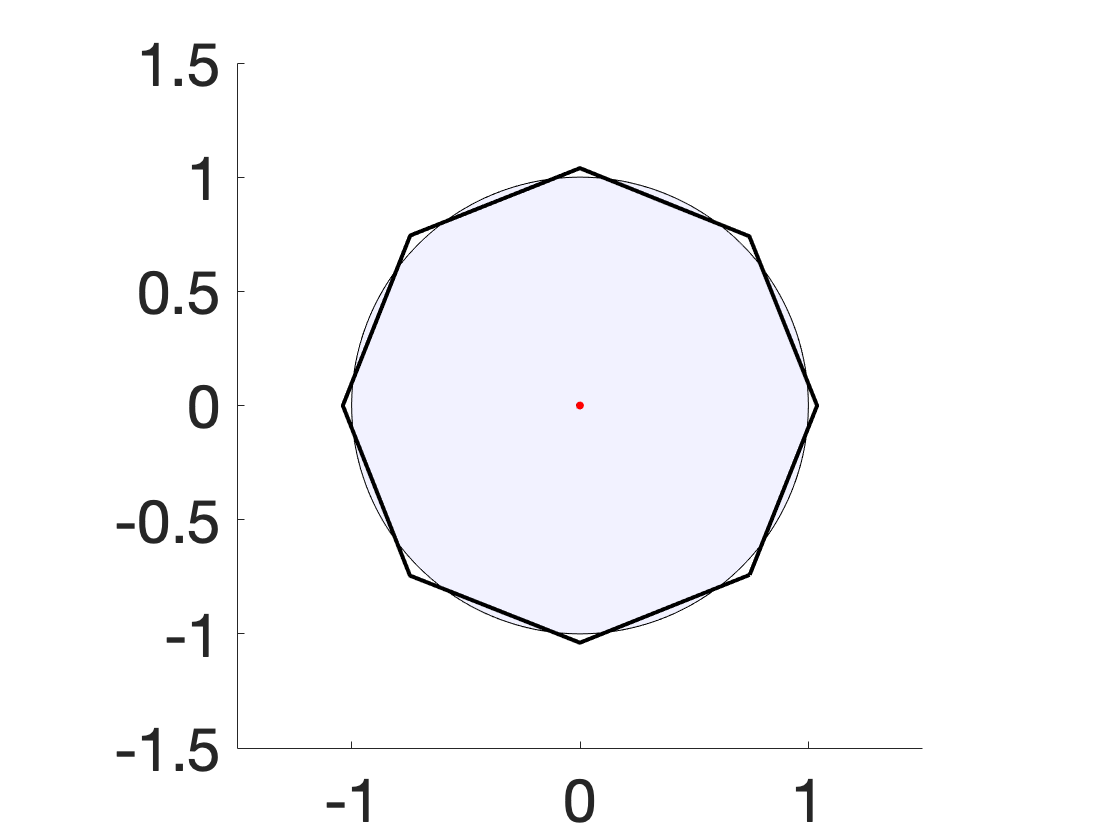}
\includegraphics[width = 0.22\textwidth,clip, trim = 2cm 0cm 3cm 0cm]{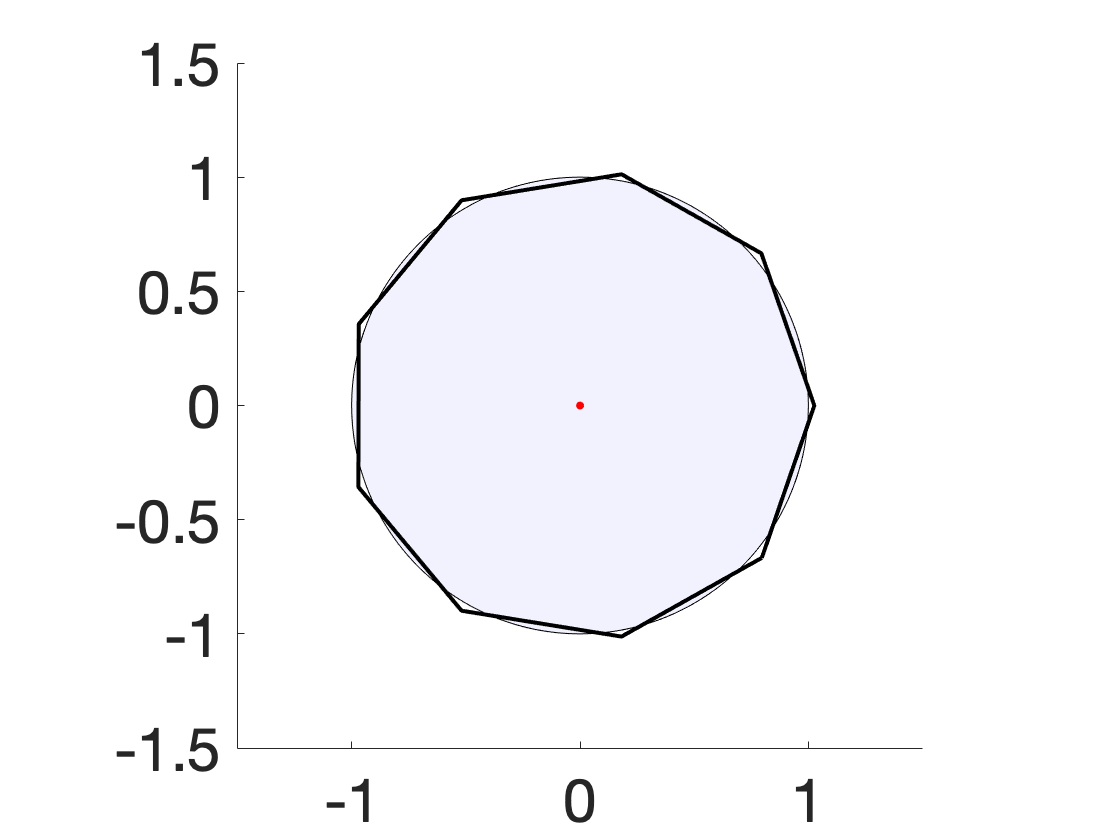}
\includegraphics[width = 0.22\textwidth,clip, trim = 2cm 0cm 3cm 0cm]{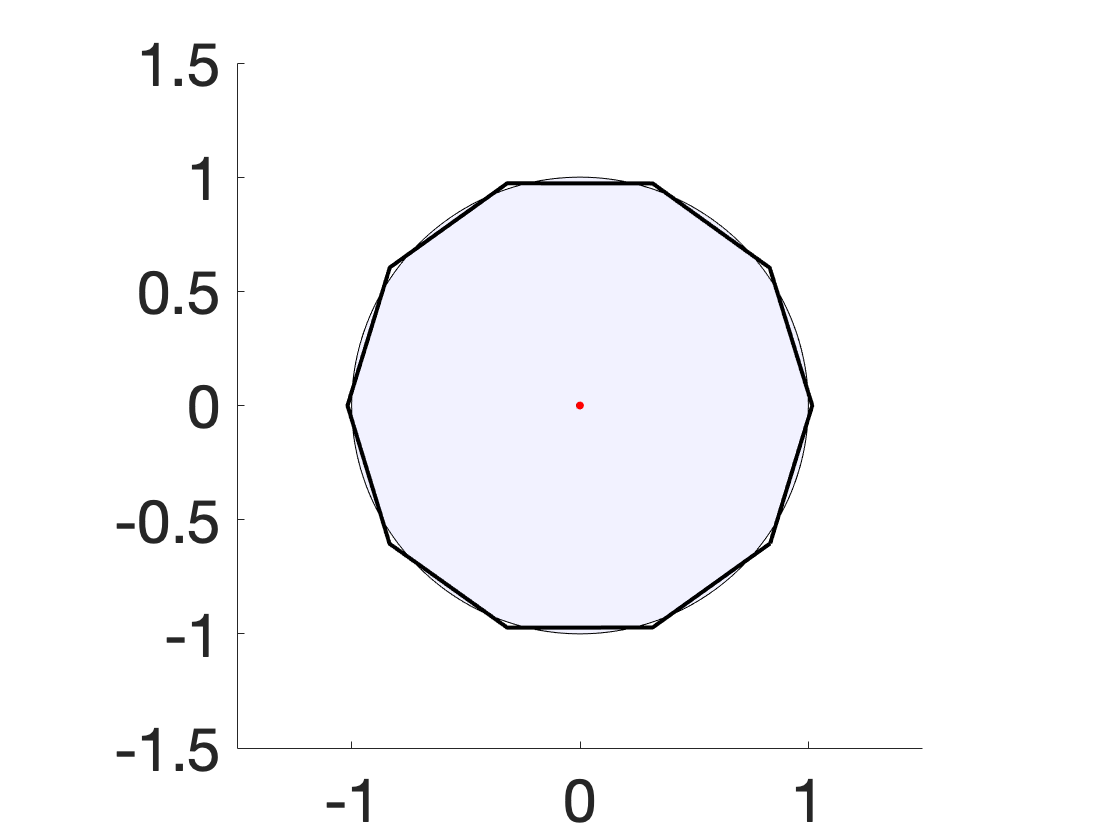}\\
\ \\
\includegraphics[width = 0.8\textwidth,clip,trim = 0cm 0cm 0cm 0cm]{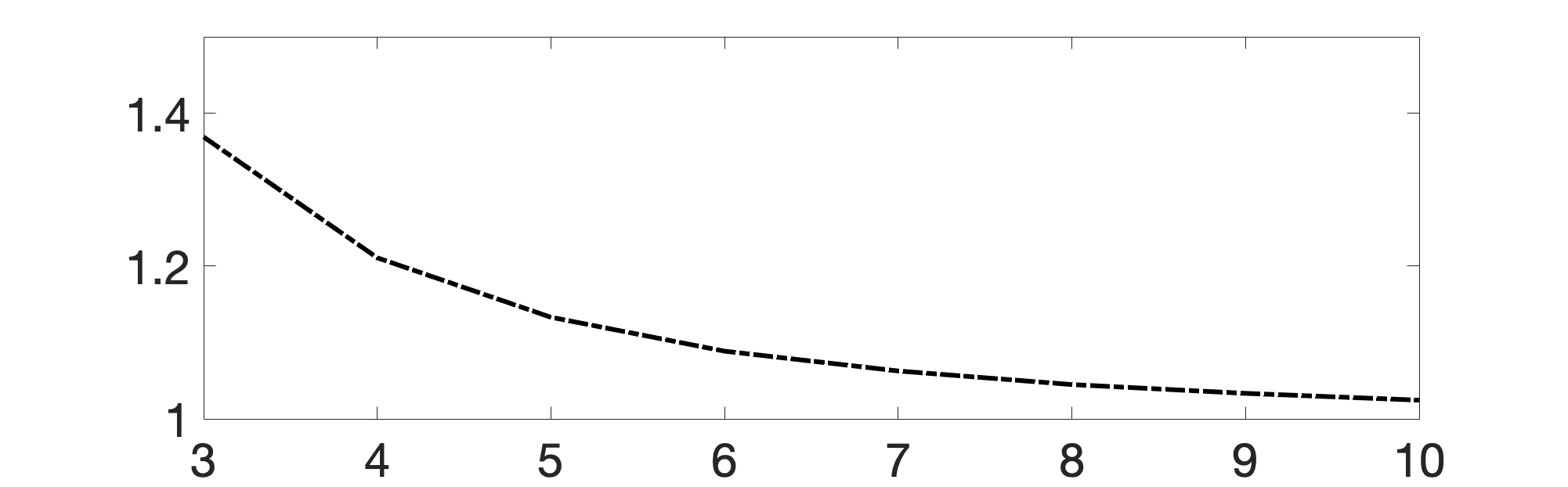}
\caption{Solutions obtained from Algorithm~\ref{alg1} for different numbers of vertices, $k = 3,4,  \dots, 10$, where $\mu$ is the uniform probability measure on the unit disk. {\bf Bottom:} The change of radius of the optimal $k$-gon as the number of vertices increases. See Subsection~\ref{sec:uniform}.} \label{fig:2}
\end{figure}

In Figure~\ref{fig:2}, we display the results of  (\ref{mainproblem}) for $k = 3, 4, \ldots, 10$. In all experiments, the optimal solutions appear to be regular polygons. It is known that the solution for the classical archetypal analysis is a regular $k$-gon \cite[Proposition 3.1]{Yiming2020}  and we conjecture that this holds true for the solution to (\ref{mainproblem}) as well. As $k$ increases, the $k-$gon becomes closer to a disk. In Figure~\ref{fig:2}, we also plot the change of the radius of the polygons as $k$ increases, and the asymptotic behavior is consistent with the fact that the polygon converges to a disk as $k \rightarrow \infty$.

\subsection{Example 2: Normal distribution} \label{sec:normal}
In this experiment, we study the behavior of the solution of (\ref{mainproblem}) when $\mu$ is a normal distribution $\mathcal N\left(0,  I\right)$, where  $I$ is the identity matrix.  The approximate measure $\mu_n = \sum_{i=1}^n m_i \delta_{x_i}$ is generated in a similar way as those described in Section~\ref{sec:uniform}, except that the empirical measure is directly generated by a 2-dimensional normal distribution with $n = 92$.
In Figure~\ref{fig:normal1}, we plot the solution for the cases when $k=3, 4$. We observe that the solution is in the interior of the convex hull of data points randomly generated from the normal distribution. 
For the case $k=4$, we observe that the solution is a square with a side length $\approx 3.3855$, which approximately coincides with the length of the optimal interval ($\approx 3.3862$) for the analogous 1 dimensional problem; see equation  \eqref{e:1dSol}.

\begin{figure}
\includegraphics[width = 0.45\textwidth,clip, trim = 4cm 0cm 4cm 0cm]{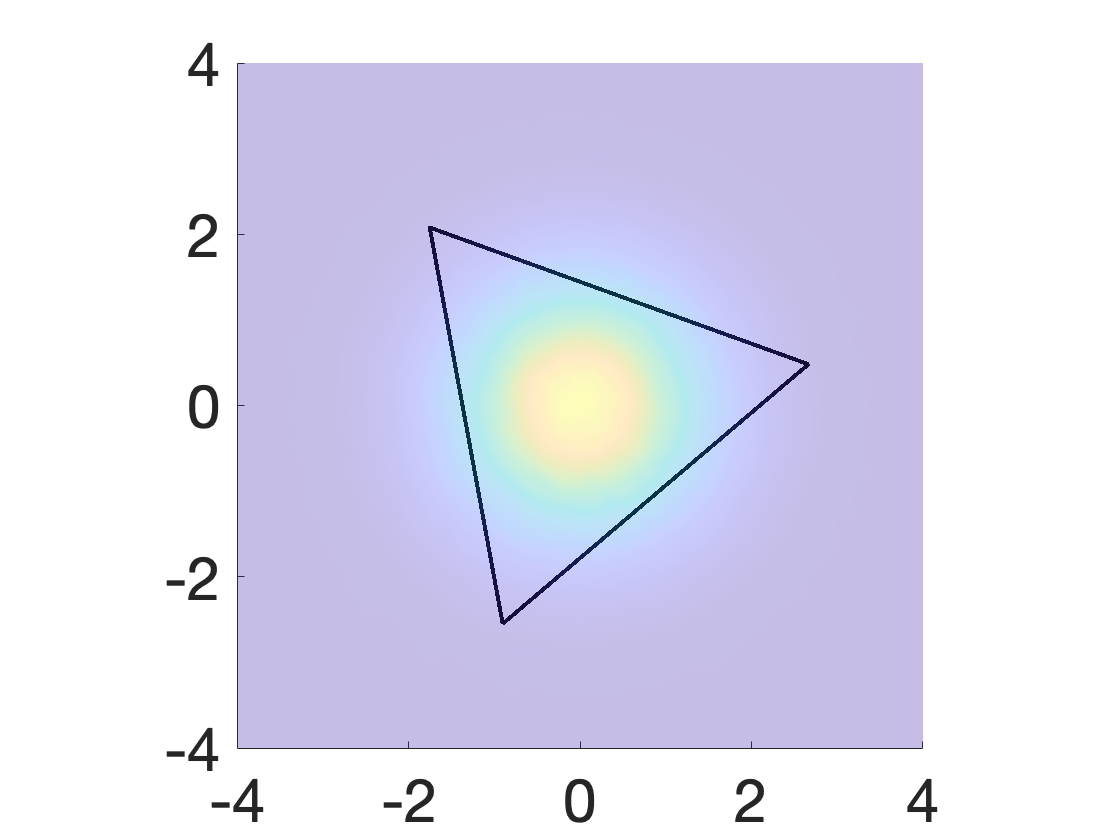} \ \ \ \
\includegraphics[width = 0.45\textwidth,clip, trim = 4cm 0cm 4cm 0cm]{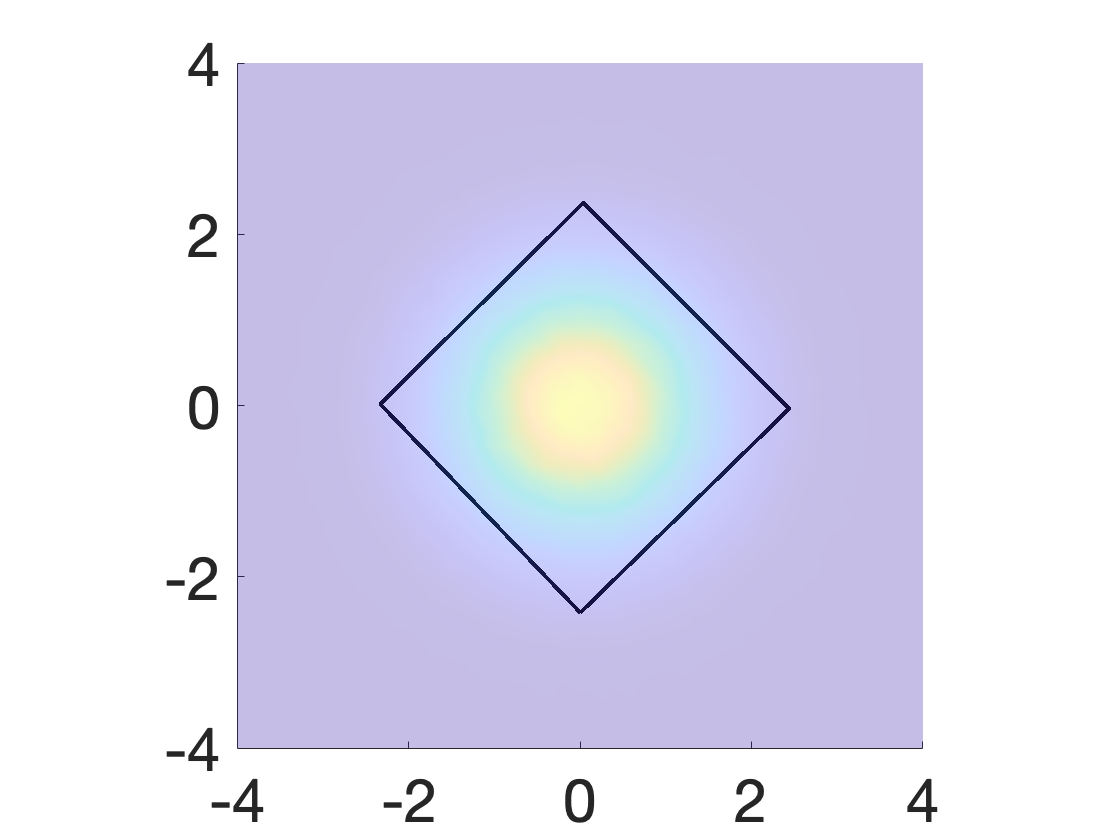}
\caption{Solutions for $k = 3$ and $k=4$, when $\mu$ is a normal distribution $\mathcal N\left(0, I\right)$.} \label{fig:normal1}
\end{figure}

\subsection{Example 3: Sensitivity to $\varepsilon$ in \eqref{mainprobv2}} 
\label{s:EpsSensitivity} 
In this experiment, we consider the solution of the problem  \eqref{mainprobv2} with 
\[ \mu = \mathcal N\left(0,  \Sigma\right) , \quad \Sigma = \begin{bmatrix} 5 & 0 \\ 0 &1 \end{bmatrix}\ , \]
 approximating $\mu$ as in Example 2 with $n =90$.
In Figure~\ref{fig:normal2}, we study how the solution changes as the value of $\varepsilon$ increases from $0$ to $50$. In all cases, we set $m=2$. We observe that as $\varepsilon$ increases, the quadrilateral becomes larger and  closer to a square, that is,  the ratio between the long side and the short side decreases, as shown in Figure~\ref{fig:normal3}. This is consistent with the motivation on introducing the regularization term, as it rewards larger areas.

\begin{figure}
\includegraphics[width = 0.9\textwidth,clip, trim = 4cm 0cm 4cm 0cm]{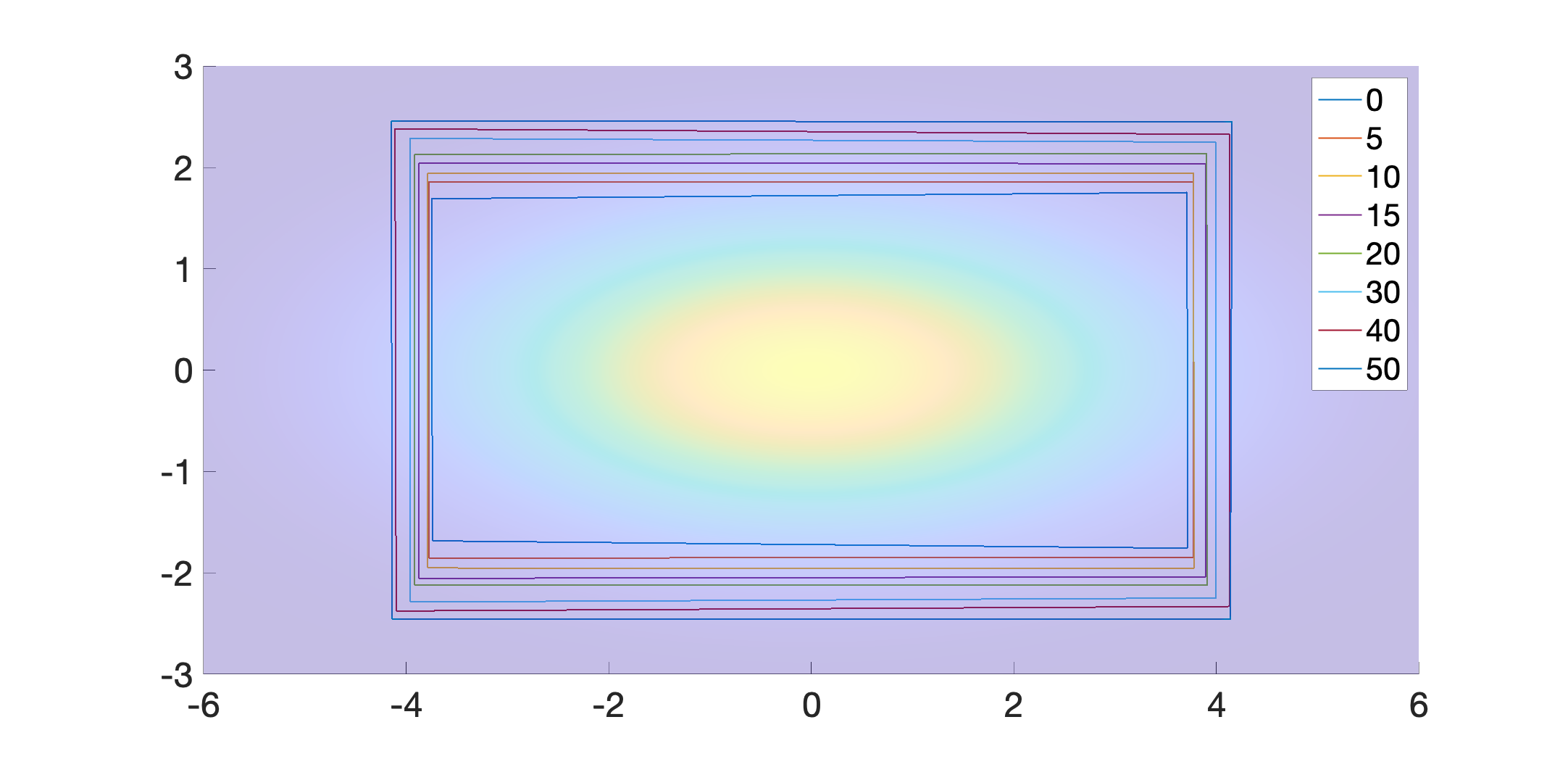}
\caption{Solutions for $k=4$ with target distribution $\mathcal{N} \left(0,  \begin{bmatrix} 5 & 0 \\ 0 &1 \end{bmatrix}\right)$ and different choices of $\varepsilon \in \{0, 5, 10, 15, 20, 30, 40, 50 \}$. } 
\label{fig:normal2}
\end{figure}

\begin{figure}
\includegraphics[width = 0.8\textwidth,clip, trim = 0cm 0cm 0cm 0cm]{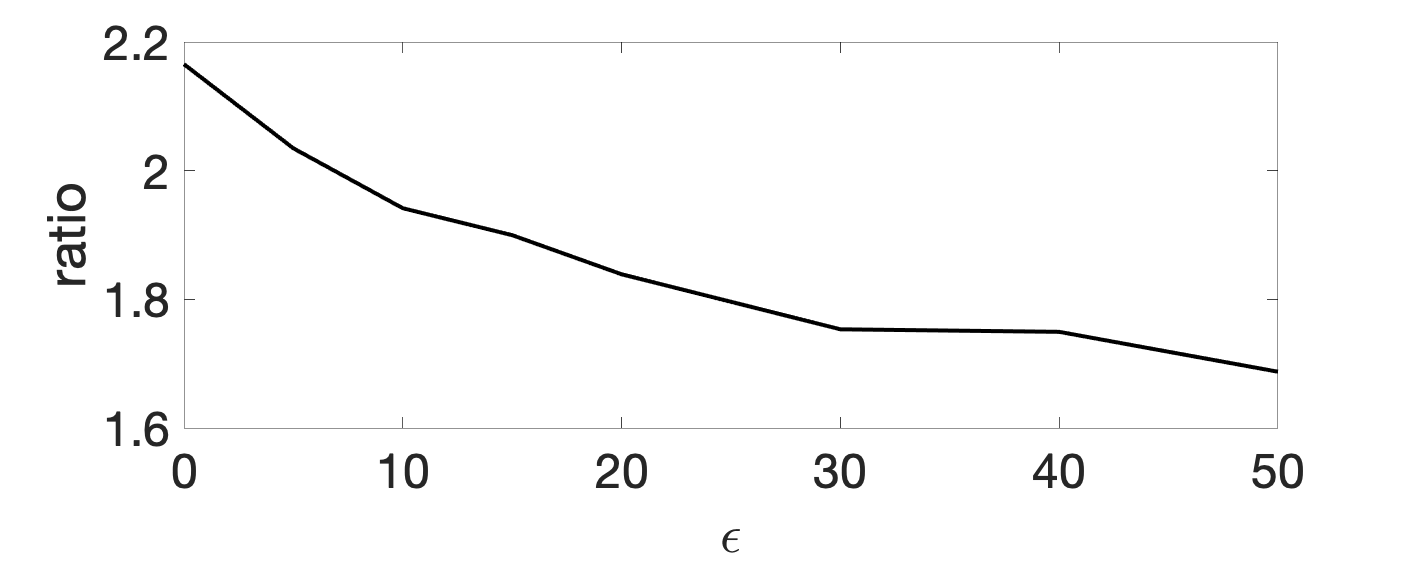}
\caption{Change of ratio between the longest side and shortest side of the obtained quadrilateral with $\varepsilon$.} \label{fig:normal3}
\end{figure}

\subsection{Example 4: Non-convexity of the energy landscape} 
\label{s:NonConvexLandscape} 
We now consider an example that provides numerical evidence suggesting    \eqref{mainproblem} is a non-convex optimization problem. Let $\mu = 1_T / |T| \in \P(\R^2)$, where $T \subseteq \R^2$ is the triangle with base length two, height one, and center of mass zero, where the base and height are chosen to be parallel with the coordinate axes: see black triangles on the left hand side of Figure \ref{f:Landscape}. In this case, \eqref{mainproblem}, with $k=3$, has an obvious unique global minimizer: $\Omega = T$. We seek to compute the values of $W_2(\mu, 1_\Omega/|\Omega|)$ for many different triangles $\Omega$ to investigate nonconvexity of the energy landscape.

In order to compute the 2-Wasserstein distance, we being by approximating $\mu$ and $1_{\Omega}/|\Omega|$ by $n~=~3,792$ Dirac masses, arranged on a uniform grid. We then apply the  \texttt{emd} function from the Python Optimal Transport library \cite{flamary2021pot} to compute the 2-Wasserstein distance between the approximations. Our approximation by Dirac masses introduces numerical error on the order of $\sim 0.01$ in the distance computation.

On the left hand side of Figure  \ref{f:Landscape}, we compute  $W_2(\mu, 1_{\Omega}/|\Omega|)$ as   the triangle $\Omega$ varies according to two parameters: $p1$ controls the height (different values of $p1$ are shown in each row) and $p2$ controls the width of the base (different values of $p2$ are shown in each column). The triangles $\Omega$ are constrained to always have center of mass $0$, since the optimal choice of $\Omega$ will   have center of mass coinciding with   $\mu$. Each cell in the grid shows the varying triangle $\Omega$ in blue, the target triangle $T$ in black, and the value of the 2-Wasserstein distance between them. We see that the minimum distance is obtained when $\Omega = T$, shown in the second row, third column. Due to the error of our numerical approximation of the 2-Wasserstein distance, the distance between them is shown as $0.01$. 

The right hand side of Figure  \ref{f:Landscape} depicts the value of $W_2(\mu , 1_{\Omega}/|\Omega|)$, as the triangles $\Omega$ vary in the same manner, visualized as a contour plot. We observe the global minima when $\Omega = T$ in the top of the figure. The bottom of the figure shows a potential local minimum, though due to the accuracy limits of our numerical computation, it could also be a flat area of the energy landscape. Either way, it is clear from the contour plot that the energy landscape it nonconvex. For this reason, it is an important direction for future work to understand under what conditions our gradient-based minimization algorithm is guaranteed to converge to the global minimum, as well as to develop non-convex minimization methods for the general setting.

%

\begin{figure}
\includegraphics[width = 0.61\textwidth,clip, trim = .5cm 2.2cm 2.5cm .5cm]{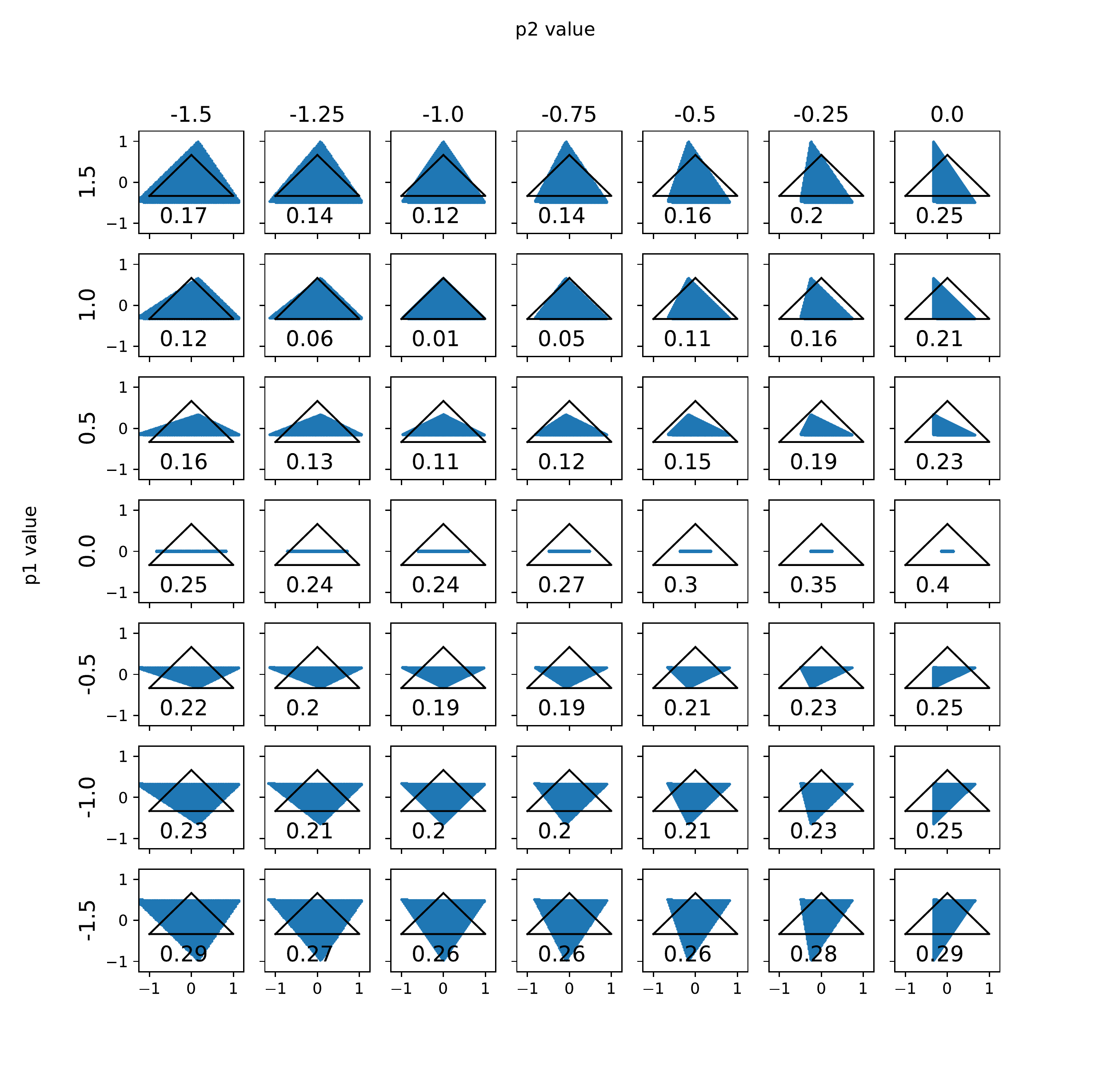}
\includegraphics[width = 0.35\textwidth,clip, trim = .4cm .4cm .4cm .4cm]{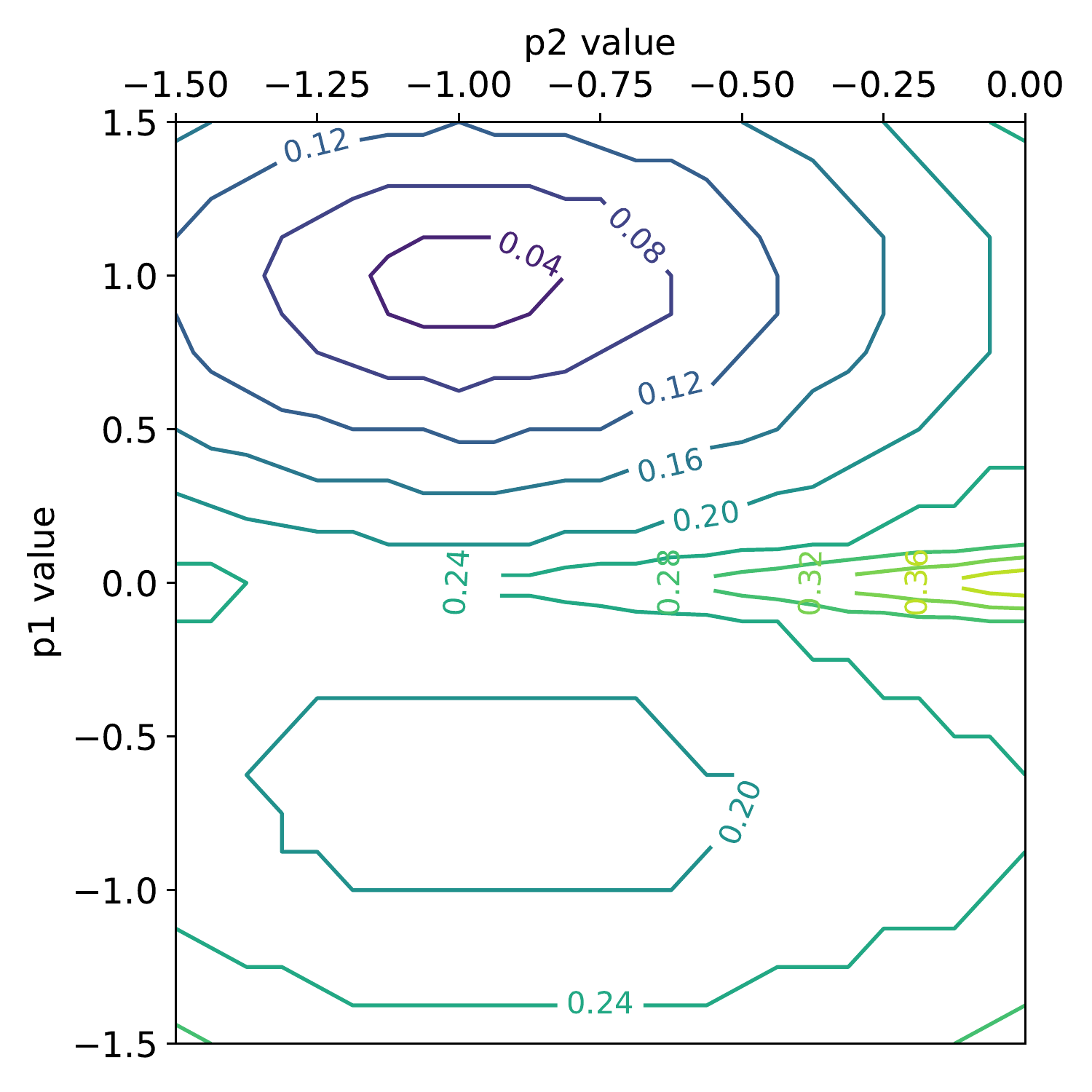}
\caption{We numerically investigate the energy landscape of \eqref{mainproblem}, when $k=3$ and $\mu = 1_T/|T|$, when $T \subseteq \R^2$ is given by the black triangle shown in each cell. On the left, the figure shows how $W_2(\mu, 1_\Omega/|\Omega|)$ changes as the form of the triangles $\Omega$ is varied according to parameters $p1$ (height) and $p2$ (width). The blue triangle in each cell shows the shape of $\Omega$ for a given choice of $(p1, p2)$, and the number in each cell shows the value of $W_2(\mu, 1_\Omega/|\Omega|)$. On the right, the figure shows contour plots for the value of $W_2(\mu, 1_\Omega/|\Omega|)$ as $\Omega$ varies, depicting a nonconvex energy landscape. } \label{f:Landscape}
\end{figure}

\subsection{Example 5: Early spread of the Covid-19 virus in the U.S} 
\label{s:COVID} 
In this section, we apply (\ref{mainproblem}) to explore the early-stage evolution of the Covid-19 virus in the U.S. 
The dataset used for analysis is freely available at \url{https://covidtracking.com/data/api}.
In total, there are $51$ data points (50 states + D.C.), each corresponding to a time series of the average positivity rates (PRs), where
\begin{align*}
\text{PR} = \frac{\text{total \# of positive cases by the day}}{\text{total \# of tests by the day}},
\end{align*}
and the average is computed using the centered $5$-day moving average scheme.   
The time range is chosen between April 20 and September 20, 2020, the relatively early stage of the   pandemic.
Visualization of the dataset can be found in Figure \ref{kuy}.

In this example, the dimension of the time series data is $154$, which is much higher than the number of data points. 
For convenience, we first use PCA to reduce the dimension of the data before applying WAA, with the explained variances by the first five principal components (PCs) plotted in Figure \ref{kuy}.
In this example, the first two PCs combined account for about $99\%$ variation of the dataset.
As a result, we use the first two PCs to obtain a reduced representation for the dataset.
A quantitative analysis of such an approximation procedure in the context of AA can be found in \cite{Xu2021}.
\begin{figure}
\includegraphics[width = 0.46\textwidth,clip, trim = 0cm 0cm 0cm 0cm]{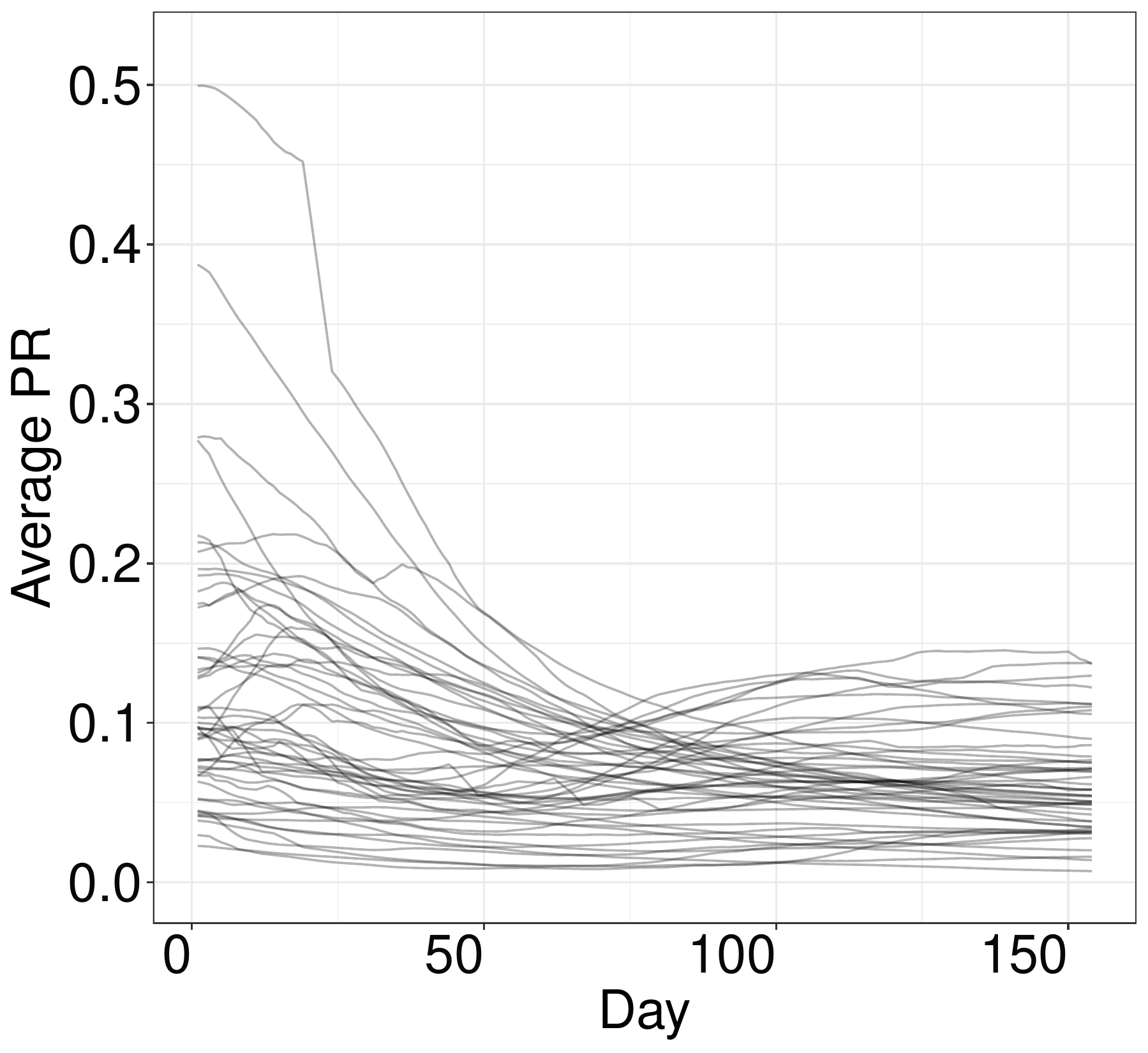}
\includegraphics[width = 0.46\textwidth,clip, trim = 0cm 0cm 0cm 0cm]{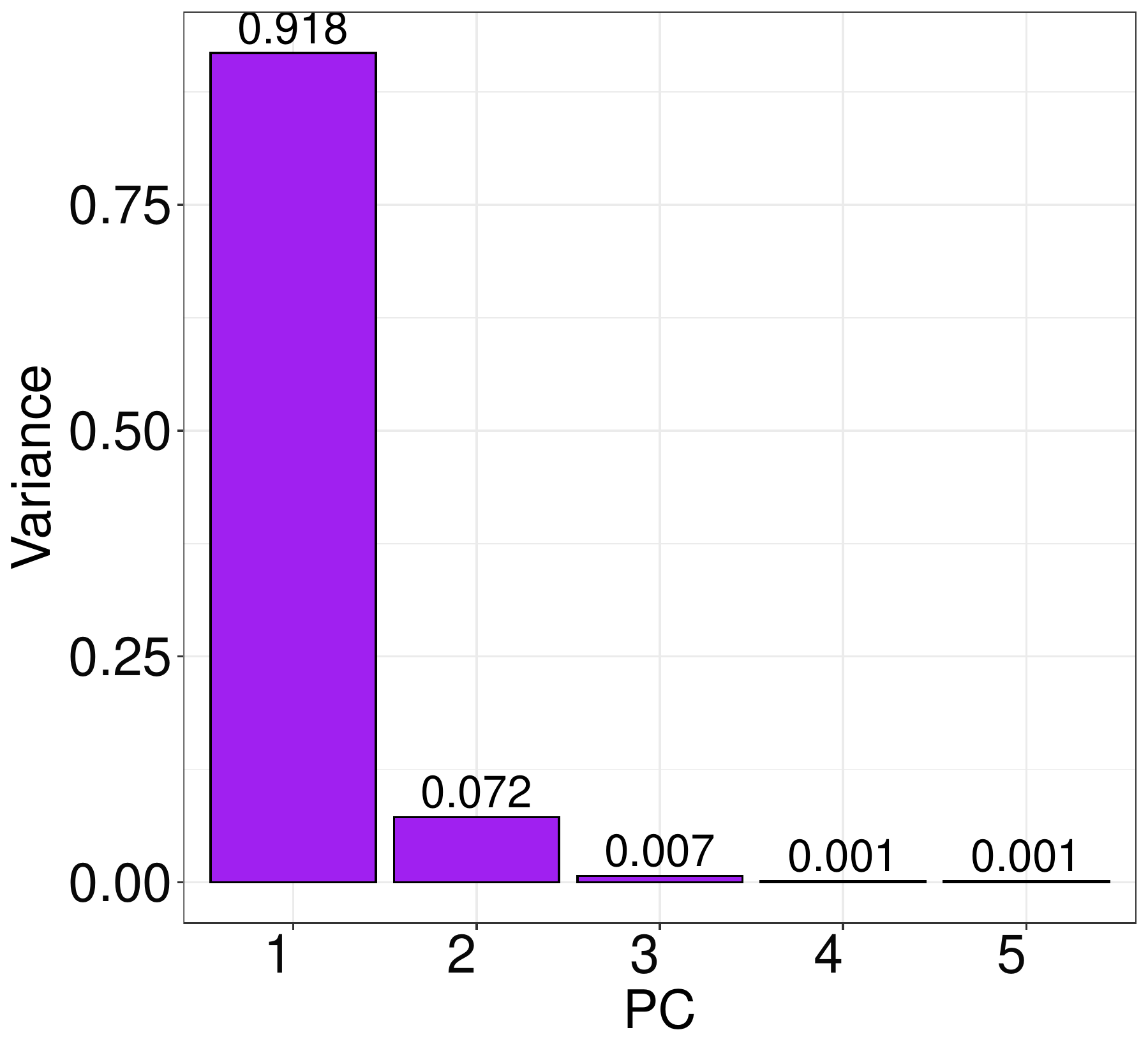}
\caption{Visualization of average PRs of the 50 states plus D.C. from April 20 to September 20, 2022 (left) and the variances explained by the first 5 PCs (right).}\label{kuy}
\end{figure}

In Figure  \ref{comp}, we apply WAA to the dataset, choosing $k = 3$, using the empirical elbow rule.  We also compare WAA to  both classical  AA and  $k$-means, applied to the same dataset.
It can be seen that both WAA and AA yield similar results, with the former demonstrating a more robust performance to the outliers. 
In both cases, the archetypes correspond to exemplars of the different evolutionary patterns. 
The upper right archetype is close to the northeastern states like New York (NY) and New Jersey (NJ), which is   where the first outbreak in the U.S. took place. 
The bottom archetype is close to many southern states, which corresponds to the second outbreak.
The upper left archetype is surrounded by states that have a low population density and experienced a relatively slow positive rate curve at the beginning stage of the pandemic. 
In contrast, the $k$-means centers are more difficult to interpret.  
\begin{figure}
\includegraphics[width = 0.46\textwidth,clip, trim = 0cm 0cm 0cm 0cm]{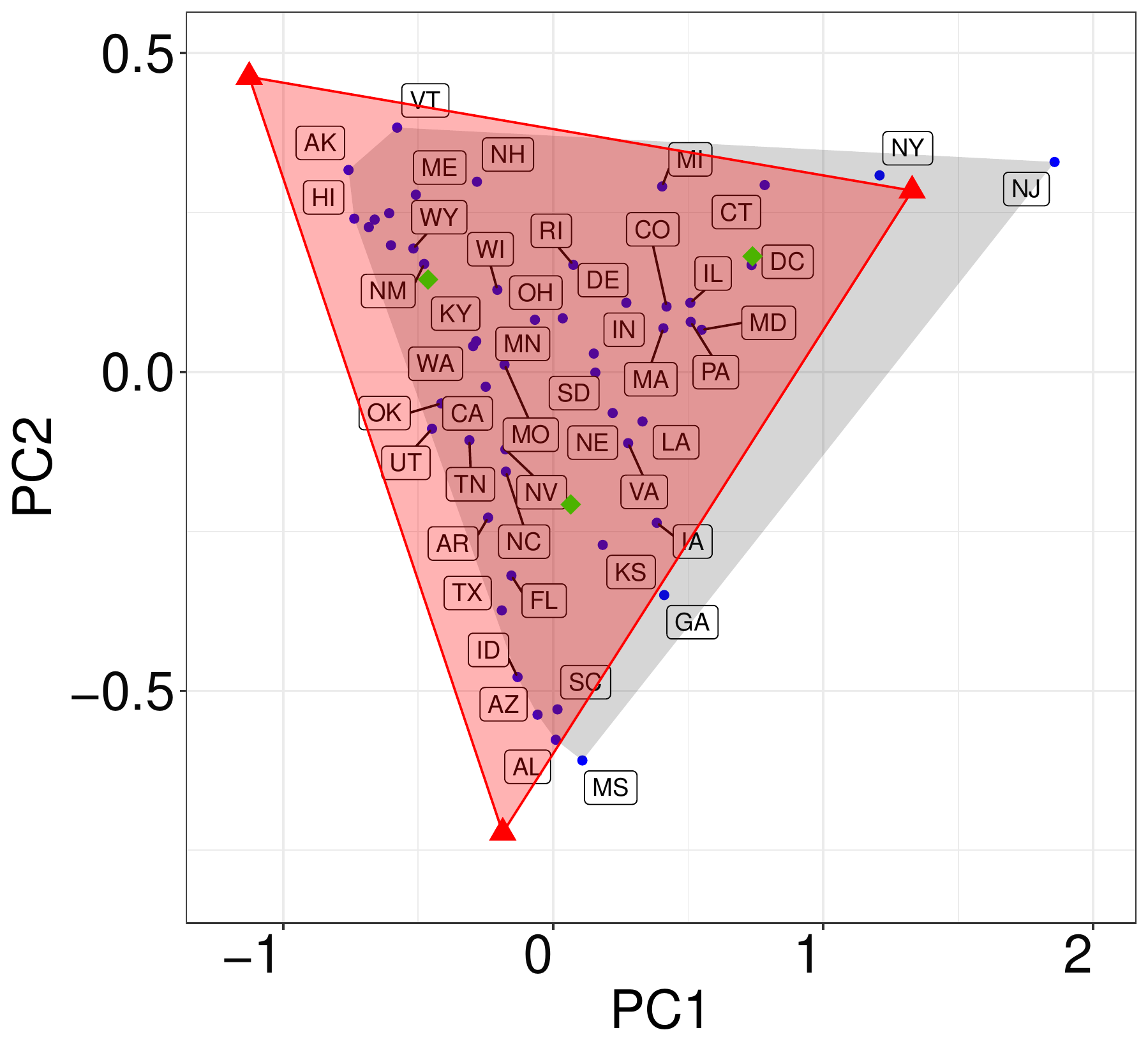}
\includegraphics[width = 0.46\textwidth,clip, trim = 0cm 0cm 0cm 0cm]{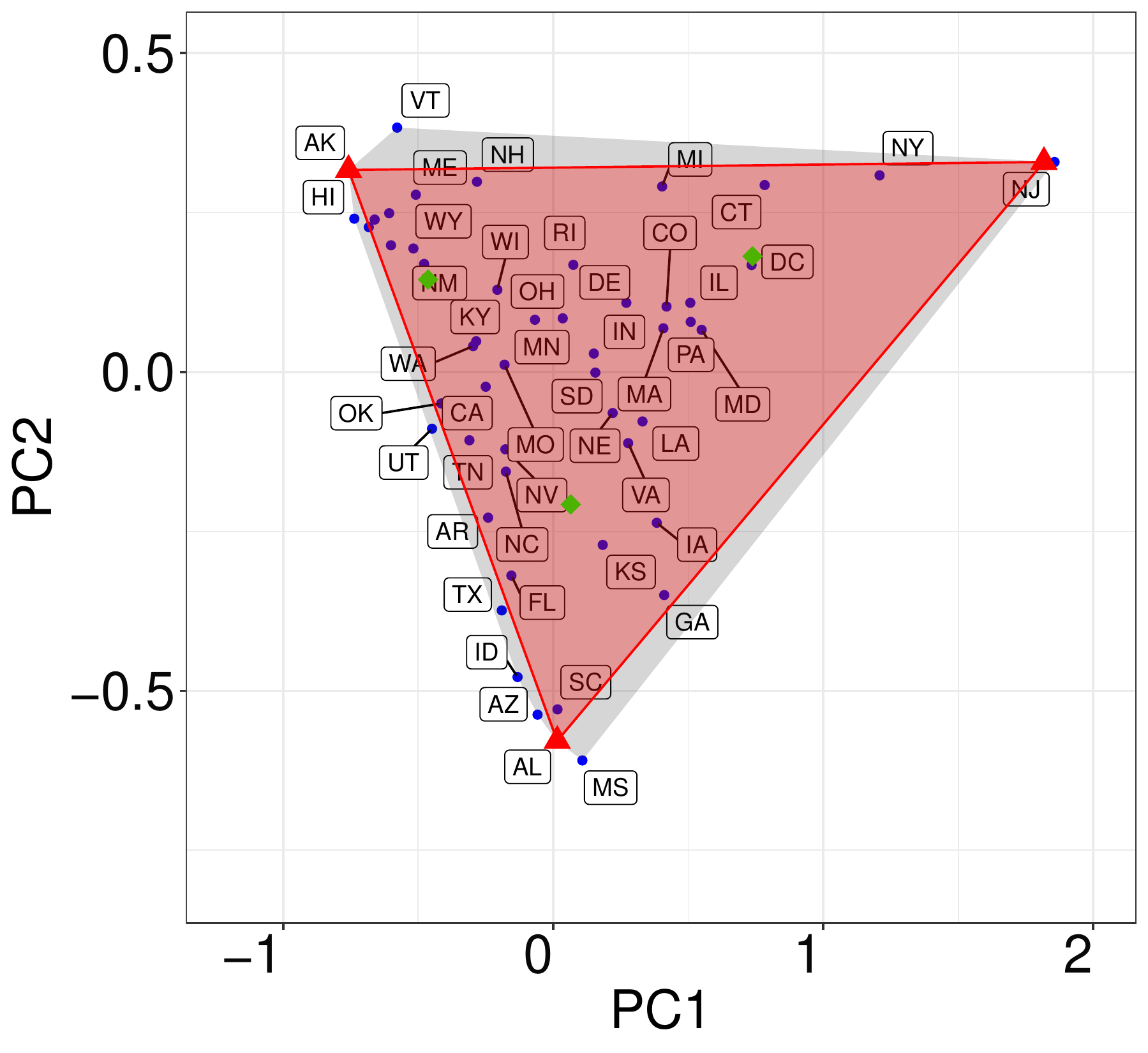}
\caption{Three archetypes (red triangles) given by WAA (left) and the original AA (right) applied to the reduced dataset. Green squares correspond to the $k$-means centers. }\label{comp}
\end{figure}

\section{Discussion}
\label{s:Disc}

In this paper, we considered a Wasserstein metric-based Archetypal Analysis, where a 
probability measure $\mu \in \P(\Rd)$ is approximated by a distribution that is uniformly supported on a polytope with a fixed number of vertices. 
We established that, in one dimension, there is a unique minimizer  of (\ref{mainproblem}), and in two dimensions, we showed that a minimizer exists, under the additional assumption that $\mu$ is absolutely continuous with respect to Lebesgue measure (Theorem \ref{t:Existence2d}). By adding an appropriate regularization in terms a R\'enyi entropy, we were able to prove that a solution of the regularized problem (\ref{mainprobv2}) exists for all $\varepsilon >0$,   $d \geq 1$ and $\mu \in \P_2(\Rd)$ (Theorem~\ref{t:Existence}). Finally, we  proved a consistency and convergence result for the (\ref{mainprobv2}) problem (Theorem~\ref{t:Consistency}).  
In Section~\ref{s:CompApproach}, we introduced a computational approach for the (\ref{mainproblem}) and (\ref{mainprobv2}) problems using the semi-discrete formulation of the Wasserstein metric; see Algorithm~\ref{alg1}. 
We concluded by implementing the algorithm and conducting several numerical experiments in Section~\ref{s:NumExp} to support our analytical findings.

There are many interesting future directions for this work.
\begin{itemize}
\item Extend the analysis for $\varepsilon =0$ to higher dimensions (see remark~\ref{difficulty}) and $\mu$ with lower dimensional support (see remark \ref{muabscts}).
\item Consider formulations of the archetypal analysis problem for more general optimal transport metrics, e.g. $p$-Wasserstein, $p \neq 2$.
\item Analyze uniqueness of solutions up to global invariances.
\item For $\mu$ the uniform distribution on the unit disk (Section~\ref{sec:uniform}) and a standard normal distribution (Section~\ref{sec:normal}), establish that the optimal solution is given by a regular $k$-gon. 
\item Develop computational methods for higher dimensional problems. Here one could use the back and forth method \cite{jacobs2020fast} or the entropic approximations of the 2-Wasserstein distance \cite{cuturi2013sinkhorn}.
\item Further study the differences between the original archetypal analysis problem and WAA. 
\item Analysis of sufficient conditions on the data distribution and the initialization of  our numerical method to ensure   convergence to an optimizer. 
\end{itemize}

\subsection*{Acknowledgements} 
K. Craig would like to thank Jun Kitagawa and Dejan Slep{\v{c}}ev for helpful discussions on semi-discrete optimal transport and the challenges of WAA in higher dimensions. 

\bibliographystyle{abbrv}
\bibliography{KatysBib_013119,refs}

\begin{thebibliography}{10}

\bibitem{ambrosio2021lectures}
L.~Ambrosio, E.~Bru{\'e}, and D.~Semola.
\newblock {\em Lectures on optimal transport}.
\newblock Springer, 2021.

\bibitem{AGS}
L.~Ambrosio, N.~Gigli, and G.~Savar{\'e}.
\newblock {\em Gradient flows in metric spaces and in the space of probability
  measures}.
\newblock Lectures in Mathematics ETH Z\"urich. Birkh\"auser Verlag, Basel,
  second edition, 2008.

\bibitem{belili2006approximation}
N.~Belili and H.~Heinich.
\newblock Approximation of distributions.
\newblock {\em Statistics \& probability letters}, 76(3):298--303, 2006.

\bibitem{carrillo2016convergence}
J.~A. Carrillo, F.~S. Patacchini, P.~Sternberg, and G.~Wolansky.
\newblock Convergence of a particle method for diffusive gradient flows in one
  dimension.
\newblock {\em SIAM Journal on Mathematical Analysis}, 48(6):3708--3741, 2016.

\bibitem{Chan_2003}
B.~H.~P. Chan, D.~A. Mitchell, and L.~E. Cram.
\newblock Archetypal analysis of galaxy spectra.
\newblock {\em Monthly Notices of the Royal Astronomical Society},
  338(3):790--795, jan 2003.

\bibitem{cuesta2003approximation}
J.~Cuesta-Albertos, C.~Matr{\'a}n, and J.~Rodr{\'\i}guez-Rodr{\'\i}guez.
\newblock Approximation to probabilities through uniform laws on convex sets.
\newblock {\em Journal of Theoretical Probability}, 16(2):363--376, 2003.

\bibitem{cuesta2002shape}
J.~A. Cuesta-Albertos, C.~M. Bea, and J.~M.~R. Rodr{\'\i}guez.
\newblock Shape of a distribution through the l 2-wasserstein distance.
\newblock In {\em Distributions with Given Marginals and Statistical
  Modelling}, pages 51--61. Springer, 2002.

\bibitem{Cutler_1994}
A.~Cutler and L.~Breiman.
\newblock Archetypal analysis.
\newblock {\em Technometrics}, 36(4):338, nov 1994.

\bibitem{cuturi2013sinkhorn}
M.~Cuturi.
\newblock Sinkhorn distances: Lightspeed computation of optimal transport.
\newblock {\em Advances in neural information processing systems}, 26, 2013.

\bibitem{Eugster_2011}
M.~J. Eugster and F.~Leisch.
\newblock Weighted and robust archetypal analysis.
\newblock {\em Computational Statistics {\&} Data Analysis}, 55(3):1215--1225,
  mar 2011.

\bibitem{figalli2021invitation}
A.~Figalli and F.~Glaudo.
\newblock {\em An Invitation to Optimal Transport, Wasserstein Distances, and
  Gradient Flows}.
\newblock EMS Textbooks in Mathematics, 2021.

\bibitem{flamary2021pot}
R.~Flamary, N.~Courty, A.~Gramfort, M.~Z. Alaya, A.~Boisbunon, S.~Chambon,
  L.~Chapel, A.~Corenflos, K.~Fatras, N.~Fournier, L.~Gautheron, N.~T. Gayraud,
  H.~Janati, A.~Rakotomamonjy, I.~Redko, A.~Rolet, A.~Schutz, V.~Seguy, D.~J.
  Sutherland, R.~Tavenard, A.~Tong, and T.~Vayer.
\newblock Pot: Python optimal transport.
\newblock {\em Journal of Machine Learning Research}, 22(78):1--8, 2021.

\bibitem{fournier2015rate}
N.~Fournier and A.~Guillin.
\newblock On the rate of convergence in {Wasserstein} distance of the empirical
  measure.
\newblock {\em Probability Theory and Related Fields}, 162(3):707--738, 2015.

\bibitem{Gigli}
N.~Gigli.
\newblock On the inverse implication of {B}renier-{M}c{C}ann theorems and the
  structure of {$(P_2(M),W_2)$}.
\newblock {\em Methods Appl. Anal.}, 18(2):127--158, 2011.

\bibitem{Xu2021}
R.~Han, B.~Osting, D.~Wang, and Y.~Xu.
\newblock Probabilistic methods for approximate archetypal analysis.
\newblock {\em Information and Inference}, 2022.

\bibitem{horowitz1994mean}
J.~Horowitz and R.~L. Karandikar.
\newblock Mean rates of convergence of empirical measures in the {Wasserstein}
  metric.
\newblock {\em Journal of Computational and Applied Mathematics},
  55(3):261--273, 1994.

\bibitem{jacobs2020fast}
M.~Jacobs and F.~L{\'e}ger.
\newblock A fast approach to optimal transport: The back-and-forth method.
\newblock {\em Numerische Mathematik}, 146(3):513--544, 2020.

\bibitem{M_rigot_2011}
Q.~M{\'{e}}rigot.
\newblock A multiscale approach to optimal transport.
\newblock {\em Computer Graphics Forum}, 30(5):1583--1592, aug 2011.

\bibitem{M_rup_2012}
M.~M{\o}rup and L.~K. Hansen.
\newblock Archetypal analysis for machine learning and data mining.
\newblock {\em Neurocomputing}, 80:54--63, mar 2012.

\bibitem{Yiming2020}
B.~Osting, D.~Wang, Y.~Xu, and D.~Zosso.
\newblock Consistency of archtypal analysis.
\newblock {\em SIAM Journal on Mathematics of Data Science}, 3(1):1--30, 2021.

\bibitem{santambrogio2015optimal}
F.~Santambrogio.
\newblock Optimal transport for applied mathematicians.
\newblock {\em Birk{\"a}user, NY}, 55(58-63):94, 2015.

\bibitem{seth2016probabilistic}
S.~Seth and M.~J. Eugster.
\newblock Probabilistic archetypal analysis.
\newblock {\em Machine learning}, 102(1):85--113, 2016.

\bibitem{shaked2007stochastic}
M.~Shaked and J.~G. Shanthikumar.
\newblock {\em Stochastic orders}.
\newblock Springer.

\bibitem{shoval2012evolutionary}
O.~Shoval, H.~Sheftel, G.~Shinar, Y.~Hart, O.~Ramote, A.~Mayo, E.~Dekel,
  K.~Kavanagh, and U.~Alon.
\newblock Evolutionary trade-offs, pareto optimality, and the geometry of
  phenotype space.
\newblock {\em Science}, 336(6085):1157--1160, 2012.

\bibitem{Villani}
C.~Villani.
\newblock {\em Topics in optimal transportation}, volume~58 of {\em Graduate
  Studies in Mathematics}.
\newblock American Mathematical Society, Providence, RI, 2003.

\bibitem{VillaniBig}
C.~Villani.
\newblock {\em Optimal transport}, volume 338 of {\em Grundlehren der
  Mathematischen Wissenschaften [Fundamental Principles of Mathematical
  Sciences]}.
\newblock Springer-Verlag, Berlin, 2009.
\newblock Old and new.

\bibitem{weed2019sharp}
J.~Weed and F.~Bach.
\newblock Sharp asymptotic and finite-sample rates of convergence of empirical
  measures in wasserstein distance.
\newblock {\em Bernoulli}, 25(4A):2620--2648, 2019.

\end{thebibliography}
\end{document}